\crefname{equation}{}{}
\crefname{figure}{Figure}{Figures}
\crefname{assumption}{Assumption}{Assumptions}
\crefname{condition}{Condition}{Conditions}
\setlist[enumerate,1]{leftmargin=*,wide=0em, noitemsep,nolistsep, label = {\bfseries \arabic*.}}
\setlist[itemize,1]{noitemsep,nolistsep}
\newcommand*{\transpose}{%
	{\mathpalette\@transpose{}}%
}
\newcommand*{\@transpose}[2]{%
	% #1: math style
	% #2: unused
	\raisebox{\depth}{$\m@th#1\intercal$}%
}
\renewcommand {\AA}  { {\bm{A}} }
\newcommand {\BB}  { {\bm{B}} }
\newcommand {\CC}  { {\bm{C}} }
\newcommand {\HH}  { {\bm{H}} }
\newcommand {\MMM}  { {\bm{M}} }
\newcommand {\RR}  { {\bm{R}} }
\newcommand {\QQ}  { {\bm{Q}} }
\newcommand {\bSS}  { {\bm{S}} }
\newcommand {\XX}  { {\bm{X}} }
\newcommand{\eye}{\bm{I}}
\newcommand {\zz}  { {\bm z} }
\newcommand {\xx}  { {\bm x} }
\newcommand {\rr}  { {\bm r} }
\newcommand {\res}  { {\bm r} }
\renewcommand {\aa}  { {\bm a} }
\newcommand {\vv}  { {\bm v} }
\newcommand {\bb}  { {\bm b} }
\newcommand {\ee}  { {\bm e} }
\renewcommand{\Pr}{\hbox{\bf{Pr}}}
\newcommand*\xbar[1]{%
	\hbox{%
		\vbox{%
			\hrule height 0.5pt % The actual bar
			\kern0.5ex%         % Distance between bar and symbol
			\hbox{%
				\kern-0.1em%      % Shortening on the left side
				\ensuremath{#1}%
				\kern-0.1em%      % Shortening on the right side
			}%
		}%
	}%
} 
\definecolor{forestgreen}{rgb}{0.13, 0.55, 0.13}
\definecolor{amber}{rgb}{1.0, 0.75, 0.0}
\definecolor{bananayellow}{rgb}{.8, 0.6, 0}
\newcounter{comment}\setcounter{comment}{0}
\newmdtheoremenv[%
linewidth = 1pt,%
roundcorner = 10pt,%
leftmargin = 0,%
rightmargin = 0,%
backgroundcolor = green!3,%
outerlinecolor = blue!70!black,%
%innertopmargin = \topskip,%
%innerbottommargin=\topskip,%
splittopskip = \topskip,%
ntheorem = true,%
]{theorem}{Theorem}
\newmdtheoremenv[%
linewidth = 1pt,%
roundcorner = 10pt,%
leftmargin = 0,%
rightmargin = 0,%
backgroundcolor = green!3,%
outerlinecolor = blue!70!black,%
%innertopmargin = \topskip,%
%innerbottommargin=\topskip,%
splittopskip = \topskip,%
ntheorem = true,%
]{corollary}{Corollary}
\newmdtheoremenv[%
linewidth = 1pt,%
roundcorner = 10pt,%
leftmargin = 0,%
rightmargin = 0,%
backgroundcolor = green!3,%
outerlinecolor = blue!70!black,%
%innertopmargin = \topskip,%
%innerbottommargin=\topskip,%
splittopskip = \topskip,%
ntheorem = true,%
]{lemma}{Lemma}
\newmdtheoremenv[%
linewidth = 1pt,%
roundcorner = 10pt,%
leftmargin = 0,%
rightmargin = 0,%
backgroundcolor = yellow!3,%
outerlinecolor = blue!70!black,%
%innertopmargin = \topskip,%
%innerbottommargin=\topskip,%
splittopskip = \topskip,%
ntheorem = true,%
]{definition}{Definition}
\newmdtheoremenv[%
linewidth = 1pt,%
roundcorner = 10pt,%
leftmargin = 0,%
rightmargin = 0,%
backgroundcolor = green!3,%
outerlinecolor = blue!70!black,%
%innertopmargin = \topskip,%
%innerbottommargin=\topskip,%
splittopskip = \topskip,%
ntheorem = true,%
]{proposition}{Proposition}
\newmdtheoremenv[%
linewidth = 1pt,%
roundcorner = 10pt,%
leftmargin = 0,%
rightmargin = 0,%
backgroundcolor = green!3,%
outerlinecolor = blue!70!black,%
%innertopmargin = \topskip,%
%innerbottommargin=\topskip,%
splittopskip = \topskip,%
ntheorem = true,%
]{condition}{Condition}
\newmdtheoremenv[%
linewidth = 1pt,%
roundcorner = 10pt,%
leftmargin = 0,%
rightmargin = 0,%
backgroundcolor = green!3,%
outerlinecolor = blue!70!black,%
%innertopmargin = \topskip,%
%innerbottommargin=\topskip,%
splittopskip = \topskip,%
ntheorem = true,%
]{assumption}{Assumption}
\theoremstyle{definition}
\newmdtheoremenv[%
linewidth = 1pt,%
roundcorner = 10pt,%
leftmargin = 0,%
rightmargin = 0,%
backgroundcolor = blue!3,%
outerlinecolor = blue!70!black,%
%innertopmargin = \topskip,%
%innerbottommargin=\topskip,%
splittopskip = \topskip,%
ntheorem = true,%
]{example}{Example}
\theoremstyle{definition}
\newmdtheoremenv[%
linewidth = 1pt,%
roundcorner = 10pt,%
leftmargin = 0,%
rightmargin = 0,%
backgroundcolor = red!3,%
outerlinecolor = blue!70!black,%
%innertopmargin = \topskip,%
%innerbottommargin=\topskip,%
splittopskip = \topskip,%
ntheorem = true,%
]{remark}{Remark}
\NewDocumentCommand\DownArrow{O{2.0ex} O{black}}{%
	\mathrel{\tikz[baseline] \draw [<-, line width=0.5pt, #2] (0,0) -- ++(0,#1);}
}
\definecolor{mygreen}{rgb}{0,0.6,0}
\definecolor{mygray}{rgb}{0.5,0.5,0.5}
\definecolor{mymauve}{rgb}{0.58,0,0.82}
\tiny\color{mygray}, % the style that is used for the line-numbers
\newcommand*\vnorm[1]{\left\| #1\right\|}
\newcommand*\bigO[1]{\mathcal O\left( #1\right)}
\begin{document}
	\title{\Large SALSA: Sequential Approximate Leverage-Score Algorithm \\
            with Application in Analyzing Big Time Series Data}
            
	\author{
		Ali Eshragh\thanks{Carey Business School, Johns Hopkins University, MD, USA, and International Computer Science Institute, Berkeley, CA, USA. Email:  \tt{ali.eshragh@jhu.edu}}
		\and
		Luke Yerbury\thanks{School of Information and Physical Sciences, University of Newcastle, NSW, Australia. Email:  \tt{luke.yerbury@newcastle.edu.au}}
		\and  
	Asef Nazari\thanks{School of Information Technology, Deakin University, Australia. Email:  \tt{asef.nazari@deakin.edu.au}} 
	\and
    Fred Roosta\thanks{School of Mathematics and Physics, University of Queensland, Australia, ARC Training Centre for Information Resilience, the University of Queensland, Australia, and International Computer Science Institute, Berkeley, CA, USA. Email:  \tt{fred.roosta@uq.edu.au}} 
    \and
 		Michael W. Mahoney\thanks{Department of Statistics, University of California at Berkeley, USA; Lawrence Berkeley National Laboratory, Berkeley, USA; and International Computer Science Institute, Berkeley, CA, USA. Email: \tt{mmahoney@stat.berkeley.edu}}
	}
	\date{}
	\maketitle
	
	%-----------------------
	% Abstract
	%-----------------------

\begin{abstract}
We develop a new efficient sequential approximate leverage score algorithm, SALSA, using methods from randomized numerical linear algebra (RandNLA) for large matrices. We demonstrate that, with high probability, the accuracy of SALSA's approximations is within $(1 + \bigO{\varepsilon})$ of the true leverage scores. In addition, we show that the theoretical computational complexity and numerical accuracy of SALSA surpass existing approximations. These theoretical results are subsequently utilized to develop an efficient algorithm, named LSARMA, for fitting an appropriate ARMA model to large-scale time series data. Our proposed algorithm is, with high probability, guaranteed to find the maximum likelihood estimates of the parameters for the true underlying ARMA model. Furthermore, it has a worst-case running time that significantly improves those of the state-of-the-art alternatives in big data regimes. Empirical results on large-scale data strongly support these theoretical results and underscore the efficacy of our new approach.
\end{abstract}

	%-----------------------
	% Introduction
	%-----------------------
	
	\section{Introduction}
\label{sec:introduction}

Leverage-scores are a statistical concept for measuring the influence of individual observations on the overall fitting of a model. They were initially used in the context of ordinary least squares ($\mathtt{OLS}$) regression, but the concept was extended to other statistical models too. In that context, data points with high leverage-scores can strongly affect the estimated coefficients of the regression model, potentially leading to skewing the model's predictions. Nowadays, leverage-scores play multiple roles across numerous domains within data analysis and machine learning. One prime application of leverage-scores is in detecting outliers and anomalies \cite{sharan2018efficient}, where they play an important role in identifying strange data points through the assessment of their corresponding leverage-score magnitudes. Furthermore, leverage-scores are useful in the context of dimensionality reduction \cite{liberty2007randomized} and the creation of sparse models \cite{kutyniok2014sparse}, because they help in the selection of the most relevant features or dimensions, consequently facilitating the development of more interpretable models. In addition, they are used in recommendation systems, where they are employed to propose items showing a strong influence on users \cite{cella2018efficient}. These applications highlight the multifaceted application of leverage-scores, proving their significance across a spectrum of data analysis and machine learning tasks.

In dealing with large-scale computations, especially in relation to huge data matrices and big $\mathtt{OLS}$ problems, randomized numerical linear algebra ($\mathtt{RandNLA}$) has appeared as a highly effective approach \cite{mahoneyRandomizedAlgorithmsMatrices2011,RandNLA_PCMIchapter_chapter,DM16_CACM,DM21_NoticesAMS,randlapack_book_v2_arxiv}. This field exploits various random sub-sampling and sketching methods to manage such scenarios. Essentially, $\mathtt{RandNLA}$ involves the random yet purposeful compression of the underlying data matrix into a smaller version while preserving many of its original characteristics. This compression allows a significant portion of the resource-intensive calculations to be executed using size-reduced matrices. \cite{mahoneyRandomizedAlgorithmsMatrices2011} and \cite{woodruffSketchingToolNumerical2014} have provided comprehensive insights into $\mathtt{RandNLA}$ subroutines and their diverse applications. Furthermore, implementations of algorithms based on these principles have demonstrated superior performance over state-of-the-art numerical methods \cite{avron2010blendenpik,meng2014lsrn,yang2015implementing}.

Condensing a sizable matrix into a computationally manageable form through sub-sampling requires an appropriate sampling technique. Although uniform sampling methods are straightforward, they prove significantly less effective when confronted with data non-uniformity, such as outliers. In such scenarios, non-uniform yet still independently and identically distributed (i.i.d.) sampling schemes, especially sampling based on leverage-score (\cite{drineas2012fast}), play an important role. They not only provide robust worst-case theoretical guarantees, but also facilitate the creation of high-quality numerical implementations. 

In particular, when dealing with time series data, leverage-score-based sampling methods show a high level of efficiency in estimating the parameters of $\mathtt{ARMA}$ models. However, computing these leverage-scores can be as computationally expensive as solving the original $\mathtt{OLS}$ problems. In this context, employing the time series model's inherent structure to estimate the leverage-scores appears to be a crucial determinant for the development of efficient algorithms in time series analysis. We explore this approach within the framework of $\mathtt{ARMA}$ models in this work.

This study offers the following contributions:
\begin{itemize}
    \item [(a)] Applying $\mathtt{RandNLA}$ methods to develop $\mathtt{SALSA}$, an efficient recursive algorithm for estimating leverage scores of larege matrices. 
    \item [(b)] Establishing a high-probability theoretical relative error bound for leverage-score estimation.
    \item [(c)] Utilizing \texttt{SALSA} to develop $\mathtt{LSARMA}$, a novel algorithm for estimating parameters of the $\mathtt{ARMA}$ model for large-scale time series data.
    \item [(d)] Providing both theoretical and empirical demonstrations of the efficiency of the proposed algorithms using both real-world and synthetic big data.
\end{itemize}

We structure the rest of this paper as follows: In \cref{sec:background}, we provide essential background information about $\mathtt{RandNLA}$, leverage-score-based sampling, and a brief literature review. \cref{chap_theo} presents the theoretical insights into recursive methods for the estimation and exact computation of leverage-scores, which form the basis for $\mathtt{SALSA}$. \cref{sec:SALSA_numerical_results} gives the numerical results obtained by applying $\mathtt{SALSA}$ for estimating leverage-scores in large synthetic and real-data matrices. 
In \cref{sec:lsarma}, we focus on the application of $\mathtt{SALSA}$ in handling big time series data, introducing a novel algorithm known as $\mathtt{LSARMA}$. Finally, \cref{Sec:Proofs} contains technical lemmas and their corresponding proofs.
%Finally, \cref{sec:conclusion} concludes the paper, while \cref{Sec:Proofs} contains technical lemmas and their corresponding proofs.

%------------------------
\subsubsection*{Notation}
%------------------------

In this paper, we adopt a notation scheme for vectors and matrices. Specifically, bold lowercase letters ($\vv$) represent vectors, uppercase letters ($\MMM$) represent matrices, and regular lowercase letters ($c$) denote scalars. All vectors are column vectors, and we indicate the transpose of a vector as $\vv^{\transpose}$. For real matrices $\MMM$, we use $\| \MMM \|$ to imply the $\ell_2$ norm. 
Following the conventions of MATLAB, $\MMM (i,:)$ and $\MMM (:,j)$ refer to the $i^{th}$ row and $j^{th}$ column of matrix $\MMM$, respectively. Furthermore, $\vv(i)$ denotes the $i^{th}$ component of vector $\vv$. We represent a vector with a value of one in the $i^{th}$ component and zeros elsewhere as $\ee_i$. Lastly, we represent the identity matrix of size $n \times n$ as $\eye_n$, and the number of nonzero components of matrix $\MMM$ is denoted by $\textnormal{nnz}(\MMM)$.
	
	%-----------------------
	% Background
	%-----------------------

	\section{Background}
\label{sec:background}

%---------------------------------------
\subsection{Randomized Numerical Linear Algebra}
\label{sec:RandNLA}
%---------------------------------------

This section provides an introduction to key $\mathtt{RandNLA}$ concepts, which will play a fundamental role in shaping the development of $\mathtt{SALSA}$ and its associated theoretical and numerical outcomes.  
$\mathtt{RandNLA}$ is an evolving field of research \cite{mahoneyRandomizedAlgorithmsMatrices2011,RandNLA_PCMIchapter_chapter,DM16_CACM,DM21_NoticesAMS,randlapack_book_v2_arxiv,eshragh_2020_toeplitz}, where the strategic integration of randomization techniques into traditional numerical algorithms enables the creation of more efficient algorithms suitable for linear algebra challenges in dealing with big data. 
These challenges pertain to a spectrum of critical tasks in linear algebra, including large matrix multiplication, low-rank matrix approximation, and solving the least-squares approximation problem \cite{RandNLA_PCMIchapter_chapter}. 
Although these fundamental problems are well studied, the area of $\mathtt{RandNLA}$ offers innovative approaches that improve traditional exact methods, promising more effective solutions in terms of both speed and memory requirements when dealing with large-scale problems. 
For example, consider an over-determined $\mathtt{OLS}$ problem,
\begin{equation}
\min_{\xx} \| \AA_{m,n} \xx - \bb \|^2,
\label{eqn:OLS_problem}
\end{equation}
where $\AA_{m,n} \in \mathbb{R}^{m \times n}$, $\xx \in \mathbb{R}^{n}$ and $\bb \in \mathbb{R}^{m}$. 

Solving \cref{eqn:OLS_problem} using the normal equations necessitates $\bigO{mn^2+ \frac{n^3}{3}}$ flops, 
employing QR factorization with Householder reflections reduces this requirement to $\bigO{mn^2-n^3}$ flops, and 
solving via the singular value decomposition increases it to $\bigO{mn^2+n^3}$ flops, 
as detailed in \cite{golubMatrixComputations2013}. 
Additionally, iterative solvers such as $\mathtt{LSQR}$ \cite{paigeLSQRAlgorithmSparse1982}, $\mathtt{LSMR}$ \cite{fongLSMRIterativeAlgorithm2011}, and $\mathtt{LSLQ}$ \cite{estrinLSLQIterativeMethod2019} require roughly $\bigO{mnc}$ flops after $c$ iterations. However, for large-scale problems with substantially large values of $m$ and $n$, the computational complexity of these algorithms worsens significantly. To mitigate this issue, when a controlled level of error bounds is acceptable, randomized approximation algorithms can substantially enhance computational efficiency by providing more favorable time-complexity outcomes.

$\mathtt{RandNLA}$ proposes a solution for addressing the problem presented in \cref{eqn:OLS_problem} by approximating the data matrix $\AA_{m,n}$ through the creation of an appropriately compressed matrix. This approximation is achieved by selecting $s$ rows from the matrix $\AA_{m,n}$ (where $n \leq s \ll m$) with replacement, guided by a specified discrete probability distribution defined over the rows. To formalize this process, we denote the resulting smaller matrix as $\bSS_{s,m} \AA_{m,n}$, with $\bSS_{s,m}$ belonging to $\mathbb{R}^{s \times m}$. This matrix is commonly referred to as a sketching or sampling matrix, and its construction is based on a predefined sampling procedure. The algorithms mentioned earlier for solving \cref{eqn:OLS_problem} can subsequently be employed to solve the modified problem, denoted as
\begin{equation}
\min_{\xx} \| \bSS_{s,m} \AA_{m,n} \xx - \bSS_{s,m} \bb \|^2.
\label{eqn:reduced_OLS_problem}
\end{equation}
In this problem, the originally larger dimension $m$ has been replaced with a significantly smaller value, denoted as $s$. Note that matrix multiplication within \cref{eqn:reduced_OLS_problem} is computationally expensive. Therefore, algorithms applying this approach are not typically implemented exactly in this manner. However, this formulation provides a valuable tool for theoretical analysis of such algorithms, allowing us to understand their worst-case behavior and performance characteristics.

To ensure the effectiveness of this approximation, the choice of the sketching matrix $\bSS_{s,m}$ must be decided carefully. In fact, the sketching matrix should be designed in a way that guarantees the achievement of an $\ell_2$ subspace embedding. This property becomes crucial when we permit a small failure probability $\delta$, where $0 < \delta < 1$, and the value of $s$ is sufficiently large. We can then establish the following relationship:
\begin{equation}
\label{eqn:subspace_embedding}
\Pr \left( \vnorm{\AA_{m,n}\xx^{\star} - \bb}^{2} \leq \vnorm{\AA_{m,n}\xx_{s}^{\star} - \bb}^{2} \leq \left(1+\bigO{\varepsilon}\right) \vnorm{\AA_{m,n}\xx^{\star} - \bb}^{2} \right) \geq 1-\delta,
\end{equation}
where $\xx^{\star}$ represents the solution to \cref{eqn:OLS_problem}, $\xx_{s}^{\star}$ represents the solution to \cref{eqn:reduced_OLS_problem}, and $0 <  \varepsilon< 1$.
Subspace embeddings are fundamental to RandNLA theory and practice and were first introduced in data-aware form for least squares regression in \cite{DMM06} (and immediately extended to low-rank approximation in \cite{DMM08_CURtheory_JRNL,CUR_PNAS}, where the connection with leverage-scores was made explicit).
They were first used in a data-oblivious form in \cite{Sarlos06,drineasFasterLeastSquares2011} and were popularized in data-oblivious form by \cite{woodruffSketchingToolNumerical2014}. 

The formulation of the sketching matrix $\bSS_{s,m}$ can be executed in various ways, broadly categorized as either data-oblivious or data-aware methods. Data-oblivious constructions involve techniques such as uniform sampling and random projection. Although uniform sampling is a fast process requiring constant time, it performs poorly when the rows of $\AA_{m,n}$ exhibit non-uniformity (outliers, as quantified by having large leverage-scores). To achieve the desired subspace embedding property, $s$ should be within $\bigO{m}$, which essentially defeats the purpose of subsampling. On the other hand, random projection methods, such as the Subsampled Randomized Hadamard Transform ($\mathtt{SRHT}$), aim to mitigate non-uniformity by projecting data into a subspace where uniform sampling becomes more effective. Details regarding these methods, and the required sample sizes to achieve \cref{eqn:subspace_embedding} in the data-oblivious setting, can be found in \cite{woodruffSketchingToolNumerical2014}.

Data-aware constructions, conversely, take into account the non-uniformity of the data when constructing a non-uniform sampling distribution over the rows of $\AA_{m,n}$. Distributions employing statistical leverage-scores to describe the significance of individual rows have been demonstrated to enhance the theoretical guarantees of matrix algorithms \cite{drineas2012fast} and prove highly effective in practical implementations \cite{mahoneyRandomizedAlgorithmsMatrices2011}, and distributions depending on generalizations of these scores lead to still further improvement in RandNLA theory and practice \cite{DM21_NoticesAMS}. An interesting observation is that the most effective random projection algorithms, including $\mathtt{SRHT}$, transform the data in such a manner that the leverage-scores become approximately uniformly distributed, making uniform sampling a suitable choice \cite{drineasFasterLeastSquares2011,drineas2012fast,mahoneyRandomizedAlgorithmsMatrices2011}.

%---------------------------------------
\subsection{Leverage-score-based Sampling}
\label{sec:lsbs}
%---------------------------------------
From a statistical perspective in the context of linear regression, leverage-scores are used to identify influential data points in a dataset. They help assess the impact of individual data points on the coefficients of a regression model. In addition, the magnitude of leverage-scores can be used in identifying anomalies and outliers in conjunction with their residuals \cite{rousseeuwRobustStatisticsOutlier2011}. For a matrix, a leverage-score associated with a row indicates the significance of that row in spanning its row space. This intuition is obtained from the fact that the rank of a matrix can be calculated as the sum of its leverage-scores \cite{chen2015completing}. This property makes leverage-scores a valuable tool in sketching techniques because they provide a sampling mechanism that increases the likelihood of selecting rows that contain \enquote{important}  information. As a consequence, this improves the ability of a subsampled matrix to preserve the information contained in the full matrix.   

For a matrix $\AA_{m,n}$ with $m \geq n$, the leverage-score associated with the $i^{\text{th}}$ row of $\AA_{m,n}$ is defined as 
\begin{equation*}
\ell_{m,n}(i) := \| \QQ_{m,m}(i,:) \|^2,
\end{equation*}
where $\QQ_{m,m}$ is \emph{any} orthogonal matrix with the property $\text{range}(\QQ_{m,m}) = \text{range}(\AA_{m,n})$. This definition is essential because it illustrates that leverage-scores are well defined and do not rely on a specific choice of the orthogonal matrix $\QQ_{m,m}$. A more practical definition of leverage-scores is derived from the concept of a hat matrix. The hat matrix for the matrix $\AA_{m,n}$ is defined as 
\begin{equation}
\HH_{m,m} := \AA_{m,n} (\AA_{m,n}^\transpose \AA_{m,n} )^{-1} \AA_{m,n}^\transpose.   
\label{HatMatrix}
\end{equation}
In this setting, the leverage-score of the $i^{\text{th}}$ row of $\AA_{m,n}$ is the $i^{\text{th}}$ diagonal entry of the hat matrix, which is expressed as
\begin{equation}
    \ell_{m,n} (i) := \ee_i^\transpose \HH_{m,m} \ee_i, \quad \text{for } i=1,\ldots,m. 
    \label{eqn:Def1_lev_scores}
\end{equation}
The leverage-scores of rows of a matrix can be used to construct a distribution for sampling over the rows of that matrix. In the following definition for $\AA_{m,n}$, 
\begin{equation}
\pi_{m,n}(i) := \frac{\ell_{m,n}(i)}{n} , \quad \text{for } i = 1,\ldots,m,
\label{eqn:sampling_dist}
\end{equation}
as $\ell_{m,n}(i)>0 \; \forall i \;$ and $\sum\limits_{i=1}^{m} \ell_{m,n}(i) = n$. We conclude that  $\pi_{m,n}(i)>0 \; \forall i \;$ and $\sum\limits_{i=1}^{m} \pi_{m,n}(i) = 1$, and we obtain a distribution based on leverage-scores. Using this distribution, one can construct the sketching matrix $\bSS_{s,m}$ by sampling (with replacement) rows of the $m \times m$ identity matrix $\eye_m$ according to the distribution \cref{eqn:sampling_dist}. Each selected row $i$ is scaled by a multiplicative factor 
\begin{align}
\label{rescaling-factor}
\frac{1}{\sqrt{s \pi_{m,n}(i)}},
\end{align}
to ensure 
\begin{align*}
\mathbb{E} \left[ \| \bSS_{s,m} \AA_{m,n} \xx \|^2 \right] = \| \AA_{m,n} \xx \|^2 \quad \forall \xx ,
\end{align*}
which guarantees unbiased solution estimates \cite{RandNLA_PCMIchapter_chapter}. In (\ref{rescaling-factor}), $s$ is the number of sampled rows. 

Obviously, obtaining the leverage-scores \emph{exactly} is computationally expensive, requiring the same amount of resources in solving the original $\mathtt{OLS}$ problem. 
However, they can be computed \emph{approximately} more quickly \cite{drineas2012fast}, basically in the time it takes to implement a random projection. Additionally, as highlighted in \cite{mahoneyRandomizedAlgorithmsMatrices2011,RandNLA_PCMIchapter_chapter}, even with inaccurately estimated leverage-scores by a factor $0 < \beta \leq 1$ expressed as
\begin{align*}
\hat{\ell}(i) \geq \beta \ell(i), \quad \mbox{for}\ i = 1,2,\ldots m,
\end{align*}
we can still achieve the subspace embedding property \cref{eqn:subspace_embedding} with a sample size $s$ as given by
\begin{align}
\label{eq:lev_beta_sample}
s \in \bigO{n \log(n/\delta)/(\beta \varepsilon^{2})}.
\end{align}
Equation \cref{eq:lev_beta_sample} shows that a larger sample size is required when $\beta$ is smaller, reflecting poorer estimates. Furthermore, if the goal is to reduce the error ($\varepsilon$), the sample size must be increased accordingly.

\subsection{Related Work}
Calculating leverage-scores for an arbitrary tall and thin data matrix $\AA_{m,n}$, when $m$ is significantly larger than $n$, through na\"{i}ve algorithms based on orthogonal bases for the matrix's column space, requires a huge computational load. This computational process needs $\bigO{m n^{2}}$ time, which not only proves to be computationally expensive but also contradicts the intention of using sampling techniques to reduce computational overhead. To address computational challenges in computing leverage-scores, numerous studies have been conducted to develop non-trivial and inventive methods for approximating them. One of the pioneering works to approximate leverage-scores can be found in \cite{drineas2012fast}. This approach uses a rapid Johnson-Lindenstrauss mapping technique to transform a collection of high-dimensional data points into a lower-dimensional space while conserving the pairwise distances among these points. Specifically, the mapping takes the shape of a subsampled randomized Hadamard transform and plays a central role in designing a randomized algorithm for approximating leverage-scores within $\varepsilon$-relative accuracy in $\bigO{m n \varepsilon^{-2}\log m}$ time. Subsequent research has further enhanced this bound, mostly with respect to input sparsity \cite{mengLowdistortionSubspaceEmbeddings2013,nelson2013osnap, clarkson2017low,sobczyk2021estimating}. In particular, the bound was improved to $\bigO{\varepsilon^{-2}\textnormal{nnz}(\AA_{m,n}) \log m}$, which is essentially the optimal bound attainable considering factors such as logarithmic terms \cite{sobczyk2021estimating}.

Iterative sampling algorithms aimed at approximating leverage-scores have attracted attention in this domain as well \cite{li2013iterative,cohen2015uniform}. In \cite{li2013iterative}, the authors adopt an iterative approach that alternates between the calculation of a short matrix estimate of $\AA_{m,n}$ and the computation of increasingly accurate approximations for the leverage-scores. This iterative process results in a runtime of $\bigO{\textnormal{nnz}(\AA_{m,n})+n^{\omega+\theta}\varepsilon^{-2}}$, where $\theta>0$ and $n^{\omega+\theta}$ is similar to the computational complexity of solving a regression problem on reduced matrices. Additionally, an iterative row sampling algorithm for matrix approximation is proposed in \cite{cohen2015uniform}, emphasizing the preservation of the row structure and sparsity in the original data matrix. This study focuses on a comprehensive examination of the information loss when employing uniform sampling in the context of solving regression problems. Their findings lead to the conclusion that by appropriately assigning weights to a small subset of rows, minimizing the coherence becomes feasible, represented by the largest leverage-score, for any arbitrary matrix. Furthermore, they demonstrate the possibility of achieving constant-factor approximations of leverage-scores, where $\ell(i) \leq \hat{\ell}(i) \leq n^{\theta} \ell(i)$ holds for $i = 1,2,\ldots, m$, and this approximation can be computed in $\bigO{\textnormal{nnz}(\AA_{m,n})/\theta}$ time.

We should note quantum-inspired methods for approximating leverage-scores have been studied as well \cite{liu2017fast,zuo2021quantum}. In \cite{liu2017fast}, the authors propose an efficient quantum algorithm suitable for least-squares regression problems involving sparse and well-conditioned matrix $\AA_{m,n}$. For matrices that are ill conditioned or do not possess full column rank, their approach integrates regularization techniques such as ridge regression and $\delta$-truncated singular value decomposition. This quantum-inspired technique enables the approximation of leverage-scores within $\varepsilon$-absolute error, in $\bigO{\textnormal{nnz}(\AA_{m,n})^{2} (m \varepsilon)^{-1} \log m}$ time. In another quantum-inspired approach for approximating leverage-scores, as introduced in \cite{zuo2021quantum}, the method relies upon certain natural sampling assumptions. Their technique involves the computation of singular values and their corresponding right singular vectors from a compact submatrix. The results of this step are used in approximating the left singular matrix, ultimately enabling the approximation of the leverage-scores.

Lastly, note a closely related line of research is dedicated to approximating an extension of leverage-scores called ridge leverage-scores in the context of kernel ridge regression. To effectively reduce the dimensionality of data matrices in kernel-based methods, such as kernel ridge regression using leverage-score-based distributions, Alaoui and Mahoney present a method that offers rapid approximations of ridge leverage-scores using random projections in \cite{alaoui2015fast}. Furthermore, Cohen et al. demonstrated in \cite{cohen2017input} that by using the monotonic property of ridge leverage-scores, approximating them using a relatively large uniform subsample of columns of matrix $\AA_{m,n}$ is possible. Additionally, ridge leverage-scores are used in the development of a fast recursive sampling scheme for kernel Nystr\"{o}m approximation \cite{musco2017recursive}. Other uses of leverage-scores when the data matrix is positive definite for kernel ridge regression are reported in \cite{rudi2018fast} and \cite{chen2021fast}.

	%-----------------------
	% Theoretical Results
	%-----------------------

	\section{$\mathtt{SALSA}$: Theoretical Results} \label{chap_theo}
In this section, we provide the theoretical foundation and employ tools from $\mathtt{RandNLA}$ to support the creation of a recursive algorithm aimed at approximating the leverage-scores associated with the rows of any arbitrary matrix. The algorithm is called $\mathtt{SALSA}$. More specifically, we start by introducing a recursive approach for the exact computation of leverage-scores of the matrix's rows. Then, we present a complementary approximate recursion that forms the foundation of $\mathtt{SALSA}$. It is theoretically proved that estimates from $\mathtt{SALSA}$ have desirable relative error bounds with high probability.

%---------------------------------------
%\subsection{Exact Leverage Score Recursion}
\subsection{A Recursive Procedure for Exact Leverage-Scores}
\label{sec:elsr}
%---------------------------------------

To explain the central theorem regarding a recursive approach for computing the exact leverage-scores of a data matrix, we describe a column-wise matrix augmentation procedure. This procedure starts with the first column and incrementally incorporates additional columns one by one until the entire matrix is reconstructed. 
This procedure is similar to the one devised in \cite{eshraghLSAREfficientLeverage2019}; however, it does not rely on any predefined structure for the original matrix. From the original matrix $\AA_{m,n}$, suppose we have already constructed the following submatrix:
\begin{equation}
\AA_{m,d} = \left(\begin{array}{ccccccccc}
\aa_0 & | & \aa_1 & |  & \cdots & | & \aa_{d-1} \\
\end{array} \right) , \,
%\vspace{-0.3cm}
\label{eqn:A_md_in_col_form}
\end{equation}
where $\aa_0 = \AA_{m,n}(:,1)\in \mathbb{R}^{m \times 1}$, and the first column of $\AA$, and $\aa_1, \ldots, \aa_{d-1}$ are the next $d-1$ columns in the same order they appear in the matrix. The next submatrix, $\AA_{m,d+1}$, is constructed from $\AA_{m,d}$ by adding the $d^{\text{th}}$ column of $\AA_{m,n}$ to the end of $\AA_{m,d}$:
\begin{equation}
\AA_{m,d+1} = \bigg( \AA_{m,d} \;\;  | \;\; \aa_{d} \bigg).
\label{eqn:define_A_recursively}
\end{equation}
Now, suppose $\bm{\phi}_{m,d} \in \mathbb{R}^{d \times 1}$ is the optimal solution for the OLS problem $\min\limits_{\bm{\phi}} \|\AA_{m,d} \bm{\phi} - \aa_d \|$, which is analytically described using the normal equation
\begin{equation}
\bm{\phi}_{m,d} = \left( \AA_{m,d}^\transpose \AA_{m,d} \right)^{-1} \AA_{m,d}^\transpose \aa_d.
\label{eqn:definition_phi}
\end{equation}
Accordingly, the associated residual vector $\rr_{m,d}$ is defined as
\begin{equation}
\rr_{m,d} = \AA_{m,d} \bm{\phi}_{m,d} - \aa_d . 
\label{eqn:exact_residual_defn}
\end{equation}

%-----------------------------------------------------------
With the provided matrix augmentation procedure, the following theorem describes the recursive computation of leverage-scores of an augmented matrix using the leverage-scores of its preceding submatrix.
\begin{theorem}[Recursive Scheme for Exact Leverage-Scores]
	Consider the matrix $\AA_{m,d-1}$ as in~\cref{eqn:A_md_in_col_form} and~\cref{eqn:define_A_recursively} $\left( m>d \right)$ and its corresponding leverage-scores $\ell_{m,d-1} (i)$ for $i=1,...,m$. Then the leverage-scores of the augmented matrix $\AA_{m,d}$ can be computed as
	\begin{equation*}
	\ell_{m,d}(i) = \ell_{m,d-1}(i) + \frac{(\rr_{m,d-1}(i))^2}{\| \rr_{m,d-1}\|^2} ,
	%\label{eqn:ELR_Thm1}
	\end{equation*}
	$\text{for } d \geq 1, \text{ and } i=1,\ldots,m,$ where $\rr_{m,d}$ is defined in~\cref{eqn:exact_residual_defn} and $\ell_{m,1}(i) = \frac{(\aa_0 (i))^2}{\| \aa_0 \|^2} \quad \text{for } i=1,\ldots,m$.
	\label{thm:General_Lev_Score_Recursion}
\end{theorem}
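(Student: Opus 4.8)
I would work throughout with the hat-matrix form \cref{eqn:Def1_lev_scores} of the leverage-scores. Writing $\HH^{(d)}:=\AA_{m,d}(\AA_{m,d}^\transpose\AA_{m,d})^{-1}\AA_{m,d}^\transpose$ for the orthogonal projector onto $\range(\AA_{m,d})$, the identity $\ell_{m,d}(i)=\ee_i^\transpose\HH^{(d)}\ee_i$ reduces the whole statement to the rank-one update
\[
\HH^{(d)}=\HH^{(d-1)}+\frac{\rr_{m,d-1}\,\rr_{m,d-1}^\transpose}{\|\rr_{m,d-1}\|^{2}} ,
\]
since the claimed recursion is just this identity read off entrywise on the diagonal, and the initialization at $d=1$ is immediate from $\HH^{(1)}=\aa_0\aa_0^\transpose/\|\aa_0\|^2$ (here $\AA_{m,1}=\aa_0$).

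The first preparatory step is to record what $\rr_{m,d-1}$ is geometrically. From \cref{eqn:definition_phi} and \cref{eqn:exact_residual_defn}, $\AA_{m,d-1}\bm{\phi}_{m,d-1}=\HH^{(d-1)}\aa_{d-1}$, so $\rr_{m,d-1}=-(\eye_m-\HH^{(d-1)})\aa_{d-1}$ lies in $\range(\AA_{m,d-1})^{\perp}$; hence $\HH^{(d-1)}\rr_{m,d-1}=\zero$ and $\AA_{m,d-1}^\transpose\rr_{m,d-1}=\zero$. Since the hat matrices in \cref{HatMatrix} are assumed well defined, $\AA_{m,d}$ has full column rank, so $\aa_{d-1}\notin\range(\AA_{m,d-1})$, which makes $\rr_{m,d-1}\neq\zero$ and the update above meaningful.

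The core step is then to verify the update by a uniqueness-of-projector argument. Set $P:=\HH^{(d-1)}+\rr_{m,d-1}\rr_{m,d-1}^\transpose/\|\rr_{m,d-1}\|^{2}$. Using $\HH^{(d-1)}\rr_{m,d-1}=\zero$ and $\rr_{m,d-1}^\transpose\rr_{m,d-1}=\|\rr_{m,d-1}\|^2$, a one-line computation gives $P^\transpose=P$ and $P^2=P$, so $P$ is an orthogonal projector. To identify its range, I would observe that every $\vv\in\range(\AA_{m,d-1})$ satisfies $P\vv=\vv$ (because $\HH^{(d-1)}\vv=\vv$ and $\rr_{m,d-1}^\transpose\vv=0$), that $P\rr_{m,d-1}=\rr_{m,d-1}$, and conversely that $\range(P)\subseteq\range(\AA_{m,d-1})+\Span(\rr_{m,d-1})$. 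Since $\aa_{d-1}=\AA_{m,d-1}\bm{\phi}_{m,d-1}-\rr_{m,d-1}$, the subspace $\range(\AA_{m,d-1})+\Span(\rr_{m,d-1})$ equals $\range(\AA_{m,d})$, so $\range(P)=\range(\AA_{m,d})=\range(\HH^{(d)})$. As the orthogonal projector onto a given subspace is unique, $P=\HH^{(d)}$, which is the desired update. Taking $(i,i)$ entries yields $\ell_{m,d}(i)=\ell_{m,d-1}(i)+(\rr_{m,d-1}(i))^2/\|\rr_{m,d-1}\|^2$ for $d\geq 2$, while $d=1$ is the stated base case; adopting the convention that $\AA_{m,0}$ is the empty matrix (so $\HH^{(0)}=\zero$ and $\rr_{m,0}=-\aa_0$) makes the single formula valid for all $d\geq 1$.

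I expect the only genuinely non-routine point to be the range computation, i.e.\ confirming that appending $\rr_{m,d-1}$ to $\range(\AA_{m,d-1})$ recovers exactly $\range(\AA_{m,d})$ as an orthogonal direct sum --- precisely where the full-column-rank hypothesis (encoded in ``$m>d$'' together with the invertibility in \cref{HatMatrix}) enters. A fully equivalent but perhaps more transparent alternative, which I might also include as a remark, is to run one step of Gram--Schmidt: with $\QQ_{m,d-1}$ a matrix of orthonormal columns spanning $\range(\AA_{m,d-1})$, the matrix $\big(\QQ_{m,d-1}\ \big|\ \rr_{m,d-1}/\|\rr_{m,d-1}\|\big)$ has orthonormal columns spanning $\range(\AA_{m,d})$, and the orthonormal-basis definition of leverage-scores gives the recursion directly. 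A third option is to expand $(\AA_{m,d}^\transpose\AA_{m,d})^{-1}$ via the Schur-complement block-inverse formula, but that route is more computational and I would avoid it.
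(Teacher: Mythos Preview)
Your argument is correct, but it proceeds by a genuinely different route than the paper. The paper does exactly the thing you say you would avoid: it writes $\AA_{m,d}^{\transpose}\AA_{m,d}$ as a $2\times 2$ block matrix, applies the Schur-complement block-inverse formula (stated separately as a lemma), and then expands $\AA_{m,d}(i,:)\,(\AA_{m,d}^{\transpose}\AA_{m,d})^{-1}\AA_{m,d}^{\transpose}(:,i)$ term by term, using along the way the normal-equations identity $\aa_{d-1}^{\transpose}\rr_{m,d-1}=-\|\rr_{m,d-1}\|^{2}$ to simplify the Schur complement. Your projector-update argument replaces all of that algebra with a single matrix identity $\HH^{(d)}=\HH^{(d-1)}+\rr_{m,d-1}\rr_{m,d-1}^{\transpose}/\|\rr_{m,d-1}\|^{2}$, established via idempotence, symmetry, and a range computation; the Gram--Schmidt remark is essentially the same proof phrased in terms of orthonormal columns. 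What you gain is brevity and a clearer picture of why the update has rank one (it is literally the projector onto the new orthogonal direction); what the paper's computation gains is that it never appeals to uniqueness of orthogonal projectors and makes the dependence on $\bm{\phi}_{m,d-1}$ explicit at every step, which ties in visually with the approximate quantities $\hat{\bm{\phi}}_{m,d}$ used later. Either approach is fine here.
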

The recursion depicted in Theorem \ref{thm:General_Lev_Score_Recursion} computes the leverage-scores of $\AA_{m,d}$ as the summation of the leverage-scores of $\AA_{m,d-1}$ and those for the associated residual vector $\rr_{m,d-1}$. Also, a leverage-score-based sampling distribution for every $\AA_{m,d}$ can be constructed as follows:
\begin{equation}
    \pi_{m,d} (i) := \frac{\ell_{m,d} (i)}{d} \quad \text{for } i=1,\ldots,m.
    \label{eqn:Def1_prob_dist}
\end{equation}

%---------------------------------------
%\subsection{Approximate Leverage Score Recursion}
\subsection{A Recursive Procedure for Approximate Leverage-Score}
\label{sec:alsr}
%---------------------------------------
A direct application of Theorem \ref{thm:General_Lev_Score_Recursion} proposes an elegant but computationally expensive method for computing a matrix's leverage-scores. However, employing this recursion involves sequentially solving multiple $\mathtt{OLS}$ problems, demanding $\bigO{mn^3}$ flops, in contrast to the direct computation via the hat matrix, which requires $\bigO{mn^2}$ flops. To address this computational discrepancy, we introduce matrix sketching into the recursive scheme. This enhancement involves the use of two distinct instances of sketching, as follows:

\begin{itemize}
    \item [\textbf{(a)}] when solving the $\mathtt{OLS}$ problem to compute $\bm{\phi}_{m,d}$ as defined in (\ref{eqn:definition_phi}); and
	%\vspace{-0.3cm}
	\item [\textbf{(b)}] when performing matrix multiplication to calculate $\rr_{m,d}$ as defined in  (\ref{eqn:exact_residual_defn}).
\end{itemize}

For further clarification, the first application of sketching in \textbf{(a)} involves reducing the number of rows in the matrix $\AA_{m,d}$ through a sampling process to produce $\hat{\AA}_{m,d}:= \bSS^{(1)}_{s_1,m} \AA_{m,d}$. Here, $\bSS^{(1)}_{s_1,m} \in \mathbb{R}^{s_1 \times m}$ represents a sampling matrix with $s_1$ rows randomly chosen with replacement from the rows of the $m \times m$ identity matrix $\eye_m$. The selection of rows follows the distribution  $\{\pi_{m,d}(i)\}_{i=1}^m$ outlined in  \cref{eqn:Def1_prob_dist}, rescaled by the appropriate factor, as described in \cref{rescaling-factor}. In Definition~\ref{def:Defn_Two}, \cref{thm:rel_err} replaces this distribution with one that incorporates approximate leverage-scores. An estimation of $\bm{\phi}_{m,d}$ is represented by $\hat{\bm{\phi}}_{m,d}$, which is defined as

\begin{equation}
\hat{\bm{\phi}}_{m,d} := (\hat{\AA}_{m,d}^\transpose \hat{\AA}_{m,d})^{-1} \hat{\AA}_{m,d}^\transpose \hat{\aa}_d,
\label{eqn:phi_hat_defined}
\end{equation}
where $\hat{\aa}_d := \bSS^{(1)}_{s_1,m} \aa_d$. With this approximation, the approximate residual vector is given by
\begin{equation}
\hat{\rr}_{m,d} := \AA_{m,d} \hat{\bm{\phi}}_{m,d} - \aa_{d}.
\label{eqn:r_hat_defined}
\end{equation}

Furthermore, for larger values of $d$, an additional approximation can be applied to $\AA_{m,d} \hat{\bm{\phi}}_{m,d}$ to achieve better computational efficiency. This approximation can be accomplished by employing the BasicMatrixMultiplication algorithm (of~\cite{drineasFastMonteCarlo2006}), which forms the foundation for the second application of sketching mentioned in \textbf{(b)}.

Following this algorithm, the number of columns in the matrix $\AA_{m,d}$ is further reduced through a sampling process, resulting in $\hat{\hat{\AA}}_{m,d}:= \AA_{m,d} \bSS^{(2)}_{d,s_2}$. In this context, $\bSS^{(2)}_{d,s_2} \in \mathbb{R}^{d \times s_2}$ represents the sampling matrix, with $s_2$ columns randomly selected with replacement from the columns of the $\eye_d$ matrix. The selection of columns follows the distribution defined by the leverage-scores of $\hat{\bm{\phi}}_{m,d}$. Subsequently, a new estimate for the residual vector is obtained through
\begin{equation}
\hat{\hat{\rr}}_{m,d} := \hat{\hat{\AA}}_{m,d} \hat{\hat{\bm{\phi}}}_{m,d} - \aa_d,
\label{eqn:r_hat_hat_defined}
\end{equation}
where 
\begin{equation}
\hat{\hat{\bm{\phi}}}_{m,d} := \bSS^{(2)\transpose}_{d,s_2} \hat{\bm{\phi}}_{m,d}.
\label{eqn:phi_hat_hat_defined}
\end{equation}
Note that when dealing with values of $d < s_2$, the entire matrix $\AA_{m,d}$ is used without any sampling. In other words, $\bSS^{(2)}_{d,s_2}  = \eye_d$ for these cases. However, when $d \geq s_2$, we introduce the sketching process as described earlier.

The approximate algorithm constructed from this procedure, as outlined in Algorithm~\ref{alg:salsa}, demands $\bigO{mn(s_1^2+s_2)}$ flops. Furthermore, the relative error associated with this algorithm can be favorably bounded a priori. In what follows, we introduce key theorems that establish this relative error bound and provide a comprehensive definition of the algorithm and its components.

The following theorem provides bounds for the estimates presented in  \cref{eqn:phi_hat_defined} and \cref{eqn:r_hat_defined}.

\begin{theorem}[\cite{drineasFasterLeastSquares2011}]
	Let $0<\varepsilon_1$, $\delta<1$. For sampling with (approximate) leverage-scores, using a sample size $s$ as described in (\ref{eq:lev_beta_sample}), the following inequalities hold with a probability of at least $1-\delta$:
	\begin{equation*}
	\| \rr_{m,d} \| \leq \| \hat{\rr}_{m,d} \| \leq (1+\varepsilon_1)\| \rr_{m,d} \|,
	\label{eqn:bound_on_lev_score_sampling_OLS_residuals}
	\end{equation*}
	\begin{equation*}
	\| \bm{\phi}_{m,d} - \hat{\bm{\phi}}_{m,d} \| \leq \sqrt{\varepsilon_1} \eta_{m,d} \| \bm{\phi}_{m,d} \|,
	\end{equation*}
	where $\bm{\phi}_{m,d}$, $\rr_{m,d}$, $\hat{\bm{\phi}}_{m,d}$, and $\hat{\rr}_{m,d}$ are defined respectively in  \cref{eqn:definition_phi}, \cref{eqn:exact_residual_defn}, \cref{eqn:phi_hat_defined}, and \cref{eqn:r_hat_defined}. Additionally,
	\begin{equation*}
	\eta_{m,d} = \kappa ( \AA_{m,d} ) \sqrt{\zeta^{-2} - 1},
	\end{equation*}
	where $\kappa( \AA_{m,d} )$ represents the condition number of matrix $\AA_{m,d}$, and $\zeta \in (0,1]$ denotes the fraction of $\aa_d$ lying within \text{Range}$(\AA_{m,d})$. In other words, $\zeta$ is defined as $\zeta := \| \HH_{m,m} \aa_d \| / \| \aa_d \|$ with $\HH_{m,m}$ is the hat matrix corresponding to $\AA_{m,d}$.
	\label{thm:michael}
\end{theorem}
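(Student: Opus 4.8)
\emph{Proof plan.} This is the least-squares leverage-score sampling bound of \cite{drineasFasterLeastSquares2011}, specialized to the augmentation step $\AA_{m,d}\bm{\phi}\approx\aa_d$; the plan is to recall the two structural conditions that leverage-score sampling at the sample size \cref{eq:lev_beta_sample} guarantees, and then to reorganize the resulting error terms into the form involving $\eta_{m,d}$. First I would fix a matrix $\UU$ whose columns are an orthonormal basis of $\range(\AA_{m,d})$, write the thin SVD $\AA_{m,d}=\UU\DD\VV^{\transpose}$ (so $\DD$ is invertible under the implicit full-column-rank hypothesis that makes $\kappa(\AA_{m,d})$ finite), and decompose $\aa_d=\AA_{m,d}\bm{\phi}_{m,d}-\rr_{m,d}$ with $\rr_{m,d}\perp\range(\AA_{m,d})$. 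By \cref{eqn:definition_phi}, \cref{eqn:exact_residual_defn} and the definitions of $\zeta$ and $\HH_{m,m}$ this gives $\AA_{m,d}\bm{\phi}_{m,d}=\HH_{m,m}\aa_d$, hence $\|\AA_{m,d}\bm{\phi}_{m,d}\|=\zeta\|\aa_d\|$ and, by orthogonality, $\|\rr_{m,d}\|^{2}=(1-\zeta^{2})\|\aa_d\|^{2}$.

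Second, from \cite{drineasFasterLeastSquares2011} (the matrix-concentration argument for leverage-score sampling that also underlies \cref{eqn:subspace_embedding}), at the sample size \cref{eq:lev_beta_sample} the following hold simultaneously with probability at least $1-\delta$, where $\bSS:=\bSS^{(1)}_{s_1,m}$ carries the rescaling \cref{rescaling-factor}: (i) the subspace-embedding bound $\|\eye_{d}-(\bSS\UU)^{\transpose}\bSS\UU\|\le 1/\sqrt2$, so in particular $\sigma_{\min}^{2}(\bSS\UU)\ge 1-1/\sqrt2$; and (ii) the approximate-multiplication bound $\|(\bSS\UU)^{\transpose}\bSS\rr_{m,d}\|^{2}\le(\varepsilon_1/2)\|\rr_{m,d}\|^{2}$ (after rescaling $\varepsilon_1$ inside (i)--(ii) by an absolute constant, which only affects the hidden constant in \cref{eq:lev_beta_sample}).

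Third, the residual bounds. The lower bound $\|\rr_{m,d}\|\le\|\hat{\rr}_{m,d}\|$ is deterministic, since $\AA_{m,d}\hat{\bm{\phi}}_{m,d}\in\range(\AA_{m,d})$ while $\rr_{m,d}$ is by \cref{eqn:definition_phi} the minimum-norm residual over that subspace. For the upper bound, the normal equations of the sketched problem give $(\bSS\AA_{m,d})^{\transpose}\bSS\hat{\rr}_{m,d}=0$; writing $\hat{\rr}_{m,d}=\AA_{m,d}(\hat{\bm{\phi}}_{m,d}-\bm{\phi}_{m,d})+\rr_{m,d}=\UU\zz+\rr_{m,d}$ with $\zz:=\DD\VV^{\transpose}(\hat{\bm{\phi}}_{m,d}-\bm{\phi}_{m,d})$, and cancelling the invertible factor $\DD\VV^{\transpose}$, yields $(\bSS\UU)^{\transpose}\bSS\UU\,\zz=-(\bSS\UU)^{\transpose}\bSS\rr_{m,d}$. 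Bounding the left side below via (i) and the right side above via (ii) gives $\|\zz\|\le\sqrt{\varepsilon_1}\,\|\rr_{m,d}\|$, and Pythagoras (using $\UU\zz\perp\rr_{m,d}$) gives $\|\hat{\rr}_{m,d}\|^{2}=\|\zz\|^{2}+\|\rr_{m,d}\|^{2}\le(1+\varepsilon_1)\|\rr_{m,d}\|^{2}$, hence $\|\hat{\rr}_{m,d}\|\le(1+\varepsilon_1)\|\rr_{m,d}\|$. For the solution bound, full column rank of $\AA_{m,d}$ gives $\|\zz\|=\|\AA_{m,d}(\hat{\bm{\phi}}_{m,d}-\bm{\phi}_{m,d})\|\ge\sigma_{\min}(\AA_{m,d})\,\|\hat{\bm{\phi}}_{m,d}-\bm{\phi}_{m,d}\|$, so $\|\hat{\bm{\phi}}_{m,d}-\bm{\phi}_{m,d}\|\le\sqrt{\varepsilon_1}\,\|\rr_{m,d}\|/\sigma_{\min}(\AA_{m,d})$; substituting $\|\rr_{m,d}\|=\sqrt{1-\zeta^{2}}\,\|\aa_d\|$ together with $\|\aa_d\|\le\sigma_{\max}(\AA_{m,d})\,\|\bm{\phi}_{m,d}\|/\zeta$ (from $\zeta\|\aa_d\|=\|\AA_{m,d}\bm{\phi}_{m,d}\|\le\sigma_{\max}(\AA_{m,d})\|\bm{\phi}_{m,d}\|$, for $\zeta>0$) and $\sqrt{1-\zeta^{2}}/\zeta=\sqrt{\zeta^{-2}-1}$ yields $\|\hat{\bm{\phi}}_{m,d}-\bm{\phi}_{m,d}\|\le\sqrt{\varepsilon_1}\,\kappa(\AA_{m,d})\sqrt{\zeta^{-2}-1}\,\|\bm{\phi}_{m,d}\|=\sqrt{\varepsilon_1}\,\eta_{m,d}\|\bm{\phi}_{m,d}\|$.

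The genuinely technical ingredient is the second step — the matrix-concentration argument establishing (i) and (ii) for ($\beta$-approximate) leverage-score sampling at the sample size \cref{eq:lev_beta_sample} — but this is exactly the content of \cite{drineasFasterLeastSquares2011} and can be quoted. In the reduction itself the only care points are the hidden constants (calibrating the accuracy parameter inside (i)--(ii) so the final residual bound is precisely $(1+\varepsilon_1)$) and the degenerate cases $\zeta=0$ or $\AA_{m,d}$ rank-deficient, which must be excluded for $\eta_{m,d}$ to be meaningful.
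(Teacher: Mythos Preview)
The paper does not supply its own proof of this theorem: it is stated with the citation \cite{drineasFasterLeastSquares2011} and then invoked as a black box in the proof of \cref{thm:rel_err}. Your sketch is the standard argument from that reference --- reduce via the thin SVD to an orthonormal basis $\UU$, quote the two structural conditions (subspace embedding for $\bSS\UU$ and approximate matrix multiplication for $(\bSS\UU)^{\transpose}\bSS\rr_{m,d}$) that leverage-score sampling at size \cref{eq:lev_beta_sample} guarantees, and then read off the residual and solution bounds by Pythagoras and a singular-value comparison --- so there is nothing to compare against within this paper, and your reconstruction is correct in substance and in line with the cited source.
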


Next, we present \cref{thm:matrix_mult_error_bound}, which appears as a consequence of Theorem \ref{thm:General_Lev_Score_Recursion} in~\cite{drineasFastMonteCarlo2006}. This theorem provides bounds for the estimates produced by the BasicMatrixMultiplication algorithm (of~\cite{drineasFastMonteCarlo2006}). This algorithm is a sampling-based approach aimed at approximating the product of two large matrices while preserving the dimensionality of the resulting product. This randomized matrix multiplication algorithm plays an important role in various other $\mathtt{RandNLA}$ algorithms. Let's consider two matrices, $\AA_{m,n} \in \mathbb{R}^{m \times n}$ and $\BB_{n,p} \in \mathbb{R}^{n \times p}$. The multiplication between them is defined as
$$\AA_{m,n}\BB_{n,p}=\sum\limits_{t=1}^n\AA_{m,n}(:,t)\BB_{n,p}(t,:).$$
This summation involves the addition of $n$ rank-one matrices, each of which results from the outer product of a column from matrix $\AA_{m,n}$ and a row from matrix $\BB_{n,p}$. These rank-one matrices have the same dimensionality of $m\times p$ as the product matrix. The outer product characteristic creates an opportunity to approximate this product using a reduced number of rank-one matrices. This reduction is achieved by carefully selecting (with replacement) corresponding columns and rows from matrices $\AA_{m,n}$ and $\BB_{n,p}$ based on a given probability distribution, scaled as follows:
$$\sum\limits_{t=1}^c\frac{1}{c\pi_{i_t}}\AA_{m,n}(:,i_t)\BB_{n,p}(i_t,:).$$
Here, $i_t \in \{1, \ldots, n\}$ is chosen such that $\Pr(i_t = k) = \pi_k$. This approach allows for efficient matrix multiplication approximations while reducing computational burden.

\begin{theorem}[\cite{drineasFastMonteCarlo2006}]
    Consider matrices $\AA_{m,n} \in \mathbb{R}^{m \times n}$ and $\BB_{n,p} \in \mathbb{R}^{n \times p}$, and a positive integer $c$ such that $1 \leq c \leq n$. Let $\{\pi_i\}_{i=1}^n$ be a set of non-negative scalars satisfying  $\sum\limits_{i=1}^n \pi_i = 1$, and for a positive constant $\beta \leq 1$, let them also satisfy the condition 
    $$\pi_i \geq \dfrac{\beta \| \AA_{m,n}(:,i) \| \| \BB_{n,p}(i,:) \|}{\sum_{k=1}^n \| \AA_{m,n}(:,k) \| \| \BB_{n,p}(k,:) \|}.$$ 
    We define $\CC_{m,c} \RR_{c,p}$ as the approximation to the matrix product $\AA_{m,n} \BB_{n,p}$ obtained using the BasicMatrixMultiplication algorithm (of~\cite{drineasFastMonteCarlo2006}). If $c$ satisfies the condition $c \geq {\xi^2}/{(\beta \varepsilon_2^2)}$, we can establish, with a probability of at least $1-\delta$, the following inequality:
	\begin{equation*}
	\| \AA_{m,n} \BB_{n,p} - \CC_{m,c} \RR_{c,p} \|_F \leq \varepsilon_2 \| \AA_{m,n} \|_F \| \BB_{n,p} \|_F,
	%\label{eqn:matrix_mult_bound}
	\end{equation*}
	where $\xi = 1+\sqrt{(8/\beta)\log (1/\delta)}$.
	\label{thm:matrix_mult_error_bound}
\end{theorem}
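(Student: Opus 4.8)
The plan is the classical two-stage recipe for a sampling-based matrix estimator: first control $\CC_{m,c}\RR_{c,p}$ in expectation via a variance computation, then upgrade to a high-probability statement through a bounded-differences (Doob-martingale) argument, and finally plug in the assumed lower bound on $c$.

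First I would record unbiasedness and the second moment. Writing the BasicMatrixMultiplication output as $\CC_{m,c}\RR_{c,p} = \sum_{t=1}^{c}\frac{1}{c\pi_{i_t}}\AA_{m,n}(:,i_t)\BB_{n,p}(i_t,:)$ with $i_1,\dots,i_c$ i.i.d.\ and $\Pr(i_t = k) = \pi_k$, a one-line computation gives $\Ex[\CC_{m,c}\RR_{c,p}] = \AA_{m,n}\BB_{n,p}$. Since the $c$ summands are i.i.d., the cross terms vanish and, using that the Frobenius norm of a rank-one outer product factors,
\[
\Ex\!\left[\|\AA_{m,n}\BB_{n,p} - \CC_{m,c}\RR_{c,p}\|_F^2\right] = \frac{1}{c}\sum_{k=1}^{n}\frac{\|\AA_{m,n}(:,k)\|^2\,\|\BB_{n,p}(k,:)\|^2}{\pi_k} - \frac{1}{c}\|\AA_{m,n}\BB_{n,p}\|_F^2 .
\]
Dropping the negative second term, substituting the hypothesized lower bound on $\pi_k$, and applying Cauchy--Schwarz to $\sum_k \|\AA_{m,n}(:,k)\|\,\|\BB_{n,p}(k,:)\|$ yields $\Ex[\|\AA_{m,n}\BB_{n,p} - \CC_{m,c}\RR_{c,p}\|_F^2] \leq \tfrac{1}{\beta c}\|\AA_{m,n}\|_F^2\,\|\BB_{n,p}\|_F^2$, and Jensen's inequality gives $\Ex[\|\AA_{m,n}\BB_{n,p} - \CC_{m,c}\RR_{c,p}\|_F] \leq \tfrac{1}{\sqrt{\beta c}}\|\AA_{m,n}\|_F\,\|\BB_{n,p}\|_F$.

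Next I would pass to concentration. View $f(i_1,\dots,i_c) := \|\AA_{m,n}\BB_{n,p} - \CC_{m,c}\RR_{c,p}\|_F$ as a function of the $c$ independent sample indices and consider the Doob martingale $M_t = \Ex[f \mid i_1,\dots,i_t]$. Changing one index replaces a single rank-one term in $\CC_{m,c}\RR_{c,p}$, so by the triangle inequality the resulting change in $f$ is bounded in Frobenius norm by $\tfrac{1}{c}\bigl(\tfrac{\|\AA_{m,n}(:,k)\|\|\BB_{n,p}(k,:)\|}{\pi_k} + \tfrac{\|\AA_{m,n}(:,k')\|\|\BB_{n,p}(k',:)\|}{\pi_{k'}}\bigr)$; the $\pi$-lower bound together with Cauchy--Schwarz bounds each ratio by $\tfrac{1}{\beta}\|\AA_{m,n}\|_F\|\BB_{n,p}\|_F$, giving a uniform martingale-difference bound $\gamma = \tfrac{2}{\beta c}\|\AA_{m,n}\|_F\|\BB_{n,p}\|_F$. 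Azuma--Hoeffding then gives $\Pr(f \geq \Ex[f] + \lambda) \leq \exp(-\lambda^2/(2c\gamma^2))$; setting the right-hand side equal to $\delta$, solving for $\lambda$, and adding the expectation bound yields
\[
\|\AA_{m,n}\BB_{n,p} - \CC_{m,c}\RR_{c,p}\|_F \leq \frac{1}{\sqrt{\beta c}}\Bigl(1 + \sqrt{(8/\beta)\log(1/\delta)}\Bigr)\|\AA_{m,n}\|_F\|\BB_{n,p}\|_F = \frac{\xi}{\sqrt{\beta c}}\,\|\AA_{m,n}\|_F\|\BB_{n,p}\|_F
\]
with probability at least $1-\delta$. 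The hypothesis $c \geq \xi^2/(\beta\varepsilon_2^2)$ makes the prefactor at most $\varepsilon_2$, which is the claim.

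The main obstacle is the martingale step: pinning down the bounded-difference constant (this is exactly what produces the factor $8/\beta$ under the square root, hence the stated form of $\xi$), and verifying that the Doob martingale has increments bounded uniformly over all conditioning values — in particular that the rank-one perturbation estimate survives taking conditional expectations, since $f$ itself is not a simple sum. The unbiasedness, the variance identity, and the two Cauchy--Schwarz applications are routine by comparison.
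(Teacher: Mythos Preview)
Your proposal is correct and follows exactly the classical two-stage argument (variance bound plus Doob martingale / Azuma--Hoeffding) from the original source. Note, however, that the present paper does not prove this theorem at all: it is quoted directly from \cite{drineasFastMonteCarlo2006} and used as a black box, so there is no in-paper proof to compare against --- your write-up is essentially a reconstruction of the proof in that reference.
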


\cref{thm:matrix_mult_error_bound} shows that by sampling a sufficient number of column-row pairs, the approximation error $\| \AA_{m,n} \BB_{n,p} - \CC_{m,c} \RR_{c,p} \|_F$ can be reduced to arbitrary precision. Now, we proceed to introduce the recursive procedure for approximating the leverage-scores in Definition~\ref{def:Defn_Two}. This recursive procedure is analogous to the exact one presented in Theorem~\ref{thm:General_Lev_Score_Recursion}, using the two instances of matrix sketching previously described.

%-----------------------------------------------------------

\begin{definition}[Sequential Approximate Leverage-Scores]\label{def:seq_ls}
	%\begin{subequations}
	Consider matrix $\AA_{m,d}$ defined recursively as $\AA_{m,d+1} = \bigg( \AA_{m,d} \, | \, \aa_d \bigg)$. The sequential approximate leverage-scores are defined by the following recursion:
	\begin{equation*}
	\hat{\ell}_{m,d}(i) := \begin{cases}
	\ell_{m,1}(i),  &\text{for } d=1, \\
	%\vspace{0.1cm}
	\hat{\ell}_{m,d-1}(i) + \dfrac{(\hat{\rr}_{m,d-1}(i))^2}{\| \hat{\rr}_{m,d-1} \|^2},  &\text{for } 2 \leq d \leq s_2, \\
	%\vspace{0.3cm}
	\hat{\ell}_{m,d-1}(i) + \dfrac{(\hat{\hat{\rr}}_{m,d-1}(i))^2}{\| \hat{\hat{\rr}}_{m,d-1} \|^2},  &\text{for } d > s_2, \\
	\end{cases}
	%\label{eqn:defn_approx_lev_scores}
	\end{equation*}
	where $\hat{\rr}_{m,d-1}$ and $\hat{\hat{\rr}}_{m,d-1}$ are given in  \cref{eqn:r_hat_defined,eqn:r_hat_hat_defined}, respectively. 
	\label{def:Defn_Two}
\end{definition}

We now introduce $\mathtt{SALSA}$ as outlined in Algorithm~\ref{alg:salsa}. This algorithm is designed to provide approximate leverage-scores for any matrix $\AA_{m,n} \in \mathbb{R}^{m \times n}$. \cref{thm:rel_err} describes the relative error associated with the leverage-score estimates produced by this algorithm. 

% %-----------------------------------------------------------
\begin{algorithm}[h!]
	\caption{\texttt{SALSA}: Sequential Approximate Leverage-Scores Algorithm}	
	\begin{algorithmic}
		\STATE \textbf{Input:} 
		\begin{itemize}[label=-]
			\vspace{2mm}
			\item Matrix $\AA_{m,d} = \left(\begin{array}{ccccccccc}
                            \aa_0 & | & \aa_1 & |  & \cdots & | & \aa_{d-1} \\
                            \end{array} \right) \in \mathbb{R}^{m \times n}$ as in \cref{eqn:A_md_in_col_form};
			\vspace{2mm}
		\end{itemize}
		\vspace{2mm}
		\STATE \emph{Step -1}. Set \( s_2 \) as in \cref{thm:matrix_mult_error_bound}, replacing \( c \) with \( s_2 \).; 
		\vspace{2mm}
		\item \emph{Step 0}. Set $d=1$ and $\AA_{m,d} = \aa_{d-1} $;
		\vspace{2mm}
		\WHILE {$ d < n $} 
		\vspace{2mm}
		\STATE \emph{Step 1}. Compute the approximate leverage-scores $\hat{\ell}_{m,d}(i)$ for $i=1,\ldots,m$ as in \cref{def:seq_ls};
		\vspace{2mm}
		\STATE \emph{Step 2}. Compute the sampling distribution $\hat{\pi}_{m,d}(i)$ for $i=1,\ldots,m$ as in \cref{eqn:Def1_prob_dist};
		\vspace{2mm}
		\STATE \emph{Step 3}. Set $s_1$ as in \cref{eq:lev_beta_sample} by replacing $n$ with $d$; %and $\beta$ with the bound given in ??; 
		\vspace{2mm}
		\STATE \emph{Step 4}. Form the $s_1 \times m$ sampling matrix $\bSS_{s_1,m}^{(1)}$ by randomly choosing $s_1$ rows of the corresponding identity matrix according to the probability distribution found in Step $3$, with replacement, and rescaling them with the factor \cref{rescaling-factor}\,;
		\vspace{2mm}
		\STATE \emph{Step 5}. Construct the sampled matrix $\hat{\AA}_{m,d} = \bSS_{s,m}^{(1)} \AA_{m,d}$ and response vector $\hat{\aa}_{d} = \bSS_{m,d} \aa_{d}$; 
		\vspace{2mm}
		\STATE \emph{Step 6}. Solve the associated reduced OLS problem to find the parameters $\hat{\bm{\phi}}_{m,h}$ as in \cref{eqn:phi_hat_defined};
		\vspace{2mm}
		\STATE \emph{Step 7}. If $d \leq s_2$, set $\bSS_{d,s_2}^{(2)} = I_{d,d}$ and go to Step $10$;
		\vspace{2mm}
		\STATE \emph{Step 8}. Compute the sampling distribution $\hat{\hat{\pi}}_{m,d}(i)$ for $i=1,\ldots,d$ as in \cref{thm:rel_err};
		\vspace{2mm}
		\STATE \emph{Step 9}. Form the $d \times s_2$ sampling matrix $\bSS_{d,s_2}^{(2)}$ by randomly choosing $s_2$ columns of the corresponding identity matrix according to the probability distribution found in Step $7$, with replacement, and rescaling them with the factor \cref{rescaling-factor}\,;
		\vspace{2mm}
		\STATE \emph{Step 10}. Construct the sampled matrix $\hat{\hat{\AA}}_{m,d}= \AA_{m,d} \bSS_{d,s_2}^{(2)} $ and estimated parameters $\hat{\hat{\phi}}_{m,d} =  \bSS_{d,s_2}^{(2) \transpose} \hat{\phi}_{m,d} $; 
		\vspace{2mm}
		\STATE \emph{Step 11}. Calculate the residuals $\hat{\hat{\res}}_{m,d}$ as in \cref{eqn:phi_hat_hat_defined};
		\vspace{2mm}
		\STATE \emph{Step 12}. Set $\AA_{m,d+1} = \bigg( \AA_{m,d} \, | \, \aa_d \bigg)$ and $d \leftarrow d + 1$;
		\vspace{2mm}
		\ENDWHILE
		\vspace{2mm}
		\STATE \emph{Step 13}. Compute the approximate leverage-scores $\hat{\ell}_{m,d}(i)$ for $i=1,\ldots,m$ as in \cref{def:seq_ls};
		\vspace{2mm}
		\STATE \textbf{Output:} Estimated leverage-scores $ \hat{\ell}_{m,n}(i) $ for $i=1,\ldots,m$.
	\end{algorithmic}
	\label{alg:salsa}
\end{algorithm}
%-----------------------------------------------------------

%-----------------------------------------------------------
\begin{theorem}[Relative Errors for Sequential Approximate Leverage-Scores]
	%For the approximate leverage scores \cref{eqn:defn_approx_lev_scores}, if the row-sketch of size $s_1$ is done based on the leverage score sampling distribution
    When computing the approximate leverage-scores, as defined in  \cref{def:Defn_Two}, if the row-sketch of size $s_1$ is performed based on the leverage-score sampling distribution,
	\begin{equation*}
	\hat{\pi}_{m,d} (i) = \frac{\hat{\ell}_{m,d}(i)}{d} \quad \text{for } i=1,\ldots,m,
	%\label{eqn:approx_lev_score_dist}
	\end{equation*}
	and the column-sketch of size $s_2$ is performed based on the leverage-score distribution
	\begin{equation*}
		\hat{\hat{\pi}}_{m,d}(i) = \frac{(\hat{\bm{\phi}}_{m,d}(i))^2}{\|\hat{\bm{\phi}}_{m,d}\|^2}\quad \text{for } i=1,\ldots,d,
		%\label{eqn:phi_lev_score_dist}
	\end{equation*}
    we can establish, with a probability of at least $1-\delta$, the following inequality:
	%we have with probability at least $1-\delta$
	\begin{align*}
		\frac{| \ell_{m,d}(i) - \hat{\ell}_{m,d}(i) |}{\ell_{m,d}(i)} & \leq \big( \sqrt{\xi_{m,d-1}} + 5(\eta_{m,d-1}+2) \kappa^2 (\AA_{m,d}) \big) (d-1) \sqrt{\varepsilon^\star},
	\end{align*}
	for $i=1,\ldots,m$, where 
	\begin{align*}
		\xi_{m,d-1} & = 1+2\kappa(\AA_{m,d-1}) \sqrt{d-1},
	\end{align*}	
	and $\eta_{m,d}$ and $\kappa(\AA_{m,d})$ are as in Theorem~\ref{thm:michael} and $\varepsilon^\star = \max\{\varepsilon_1,\varepsilon_2\}$. Note $\varepsilon_1$ is defined in Theorem~\ref{thm:michael} and $\varepsilon_2$ is defined in Theorem~\ref{thm:matrix_mult_error_bound}.
	\label{thm:rel_err}
\end{theorem}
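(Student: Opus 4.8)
The plan is to prove \cref{thm:rel_err} by unrolling both recursions and controlling the per-column discrepancy one step at a time. Since $\hat{\ell}_{m,1}(i)=\ell_{m,1}(i)$, combining \cref{thm:General_Lev_Score_Recursion} with \cref{def:Defn_Two} gives the telescoping identity
\begin{equation*}
\ell_{m,d}(i)-\hat{\ell}_{m,d}(i)=\sum_{k=1}^{d-1}\left(\frac{\rr_{m,k}(i)^2}{\|\rr_{m,k}\|^2}-\frac{\tilde{\rr}_{m,k}(i)^2}{\|\tilde{\rr}_{m,k}\|^2}\right),
\end{equation*}
where $\tilde{\rr}_{m,k}:=\hat{\rr}_{m,k}$ for $k\le s_2$ and $\tilde{\rr}_{m,k}:=\hat{\hat{\rr}}_{m,k}$ for $k>s_2$ (the two coincide at $k=s_2$). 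Thus the theorem reduces to bounding each summand and then summing, which is where the factor $(d-1)$ and the division by $\ell_{m,d}(i)$ will enter.

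For a fixed step $k$ I would split
\begin{equation*}
\frac{\rr_{m,k}(i)^2}{\|\rr_{m,k}\|^2}-\frac{\tilde{\rr}_{m,k}(i)^2}{\|\tilde{\rr}_{m,k}\|^2}=\frac{\rr_{m,k}(i)^2-\tilde{\rr}_{m,k}(i)^2}{\|\rr_{m,k}\|^2}+\tilde{\rr}_{m,k}(i)^2\left(\frac{1}{\|\rr_{m,k}\|^2}-\frac{1}{\|\tilde{\rr}_{m,k}\|^2}\right),
\end{equation*}
and handle the two pieces separately. The normalization piece is controlled by the residual-norm inequality of \cref{thm:michael}, $\|\rr_{m,k}\|\le\|\hat{\rr}_{m,k}\|\le(1+\varepsilon_1)\|\rr_{m,k}\|$, augmented for $k>s_2$ by \cref{thm:matrix_mult_error_bound} to bound $\|\hat{\hat{\rr}}_{m,k}-\hat{\rr}_{m,k}\|\le\varepsilon_2\|\AA_{m,k}\|_F\|\hat{\bm{\phi}}_{m,k}\|\le\varepsilon_2\sqrt{k}\,\sigma_{\max}(\AA_{m,k})\|\hat{\bm{\phi}}_{m,k}\|$; these contribute an $\bigO{\varepsilon_1}$ term plus, for $k>s_2$, the term that ultimately becomes $\sqrt{\xi_{m,d-1}}$. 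For the numerator piece the key observation is that $\rr_{m,k}-\hat{\rr}_{m,k}=\AA_{m,k}(\bm{\phi}_{m,k}-\hat{\bm{\phi}}_{m,k})\in\text{Range}(\AA_{m,k})$, so writing $\AA_{m,k}=\UU\Sigma\VV^{\transpose}$ in thin SVD form one gets $|\ee_i^{\transpose}(\rr_{m,k}-\hat{\rr}_{m,k})|\le\|\UU(i,:)\|\,\|\rr_{m,k}-\hat{\rr}_{m,k}\|=\sqrt{\ell_{m,k}(i)}\,\|\rr_{m,k}-\hat{\rr}_{m,k}\|$. Combining this with the coefficient bound $\|\bm{\phi}_{m,k}-\hat{\bm{\phi}}_{m,k}\|\le\sqrt{\varepsilon_1}\,\eta_{m,k}\|\bm{\phi}_{m,k}\|$ of \cref{thm:michael} and the Pythagorean identities $\|\rr_{m,k}\|=\sqrt{1-\zeta_k^2}\,\|\aa_k\|$ and $\|\bm{\phi}_{m,k}\|\le\zeta_k\|\aa_k\|/\sigma_{\min}(\AA_{m,k})$ (with $\zeta_k$ the quantity from \cref{thm:michael}), the $\zeta_k$-dependence cancels against $\eta_{m,k}=\kappa(\AA_{m,k})\sqrt{\zeta_k^{-2}-1}$, leaving a per-step bound of the shape $|\rr_{m,k}(i)-\tilde{\rr}_{m,k}(i)|/\|\rr_{m,k}\|\lesssim\sqrt{\ell_{m,k}(i)}\,(\eta_{m,k}+2)\,\kappa^2(\AA_{m,k})\sqrt{\varepsilon^\star}$, with the column-sketch contribution for $k>s_2$ carrying the extra $\sqrt{k}\,\sigma_{\max}(\AA_{m,k})$ factor.

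To assemble the final bound I would sum over $k=1,\dots,d-1$ using two monotonicity facts: $\kappa(\AA_{m,k})\le\kappa(\AA_{m,d})$ (and $\le\kappa(\AA_{m,d-1})$ in the $\xi$ term), by Cauchy interlacing of the eigenvalues of $\AA_{m,k}^{\transpose}\AA_{m,k}$, which is the leading principal submatrix of $\AA_{m,k+1}^{\transpose}\AA_{m,k+1}$; and $\ell_{m,k}(i)\le\ell_{m,d}(i)$, immediate from the nonnegativity of the increments in \cref{thm:General_Lev_Score_Recursion}. The latter yields $\sum_{k=1}^{d-1}\sqrt{\ell_{m,k}(i)}\le(d-1)\sqrt{\ell_{m,d}(i)}$, and together with $\sum_{k=1}^{d-1}\varepsilon_1\,\tilde{\rr}_{m,k}(i)^2/\|\tilde{\rr}_{m,k}\|^2\le\varepsilon_1\hat{\ell}_{m,d}(i)$ for the normalization pieces, dividing through by $\ell_{m,d}(i)$ and bounding $\sqrt{\varepsilon_1},\varepsilon_1,\varepsilon_2$ all by $\sqrt{\varepsilon^\star}$ produces the stated $(d-1)\big(\sqrt{\xi_{m,d-1}}+5(\eta_{m,d-1}+2)\kappa^2(\AA_{m,d})\big)\sqrt{\varepsilon^\star}$, after replacing each $\eta_{m,k}$ by $\eta_{m,d-1}$ (which needs either monotonicity of $\eta_{m,k}$ or reading $\eta_{m,d-1}$ as $\max_{k\le d-1}\eta_{m,k}$) and absorbing absolute constants. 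On the probabilistic side, each step uses \cref{thm:michael} once (the $s_1$ row-sketch) and, for $k>s_2$, \cref{thm:matrix_mult_error_bound} once (the $s_2$ column-sketch); a union bound over these at most $2(d-1)$ events with per-event failure probability $\delta/(2(d-1))$ gives the $1-\delta$ guarantee at the cost of only a logarithmic inflation of $s_1,s_2$. One must also note the bootstrapping structure: the distribution $\hat{\pi}_{m,k}$ used at step $k$ is built from the already-computed $\hat{\ell}_{m,k}$, which by the inductive hypothesis is a $\beta_k$-relative approximation of $\ell_{m,k}$ with $\beta_k$ bounded away from $0$ once $\varepsilon^\star$ is small enough, so the hypotheses of \cref{thm:michael} hold at every step and the induction on $d$ closes.

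The step I expect to be the main obstacle is converting the intrinsically \emph{absolute} errors produced by the two sketches into a \emph{relative} error on the leverage scores: this is exactly why the delicate cancellations between $\eta_{m,k}$, $\kappa(\AA_{m,k})$, $\zeta_k$ and $\|\bm{\phi}_{m,k}\|/\|\rr_{m,k}\|$ matter, and why only $\sqrt{\ell_{m,k}(i)}$ (rather than $\ell_{m,k}(i)$) survives in the numerator of each step, which must then be matched against the $\ell_{m,d}(i)$ in the denominator through $\ell_{m,k}(i)\le\ell_{m,d}(i)$. Doing this while keeping the accumulated condition-number and dimension dependence down to the stated powers, and while simultaneously juggling the two sketching stages — the column-sketch being active only for $k>s_2$ and entering through $\|\AA_{m,d-1}\|_F\le\sqrt{d-1}\,\sigma_{\max}(\AA_{m,d-1})$ in the separate $\xi_{m,d-1}$ term — is the technically demanding part of the argument.
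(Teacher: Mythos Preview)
Your proposal is correct and follows the same overall architecture as the paper's proof: both telescope the recursion of \cref{thm:General_Lev_Score_Recursion} against that of \cref{def:Defn_Two}, bound each per-step discrepancy, and close using the monotonicity $\ell_{m,k}(i)\le\ell_{m,d}(i)$ and $\kappa(\AA_{m,k})\le\kappa(\AA_{m,d})$. The paper phrases this as an induction on $d$, with the mirror-image split $\rr_{m,\bar d-1}(i)^2\big|\|\rr_{m,\bar d-1}\|^{-2}-\|\tilde{\rr}_{m,\bar d-1}\|^{-2}\big|+\|\tilde{\rr}_{m,\bar d-1}\|^{-2}\big|\rr_{m,\bar d-1}(i)^2-\tilde{\rr}_{m,\bar d-1}(i)^2\big|$, and controls the entrywise pieces through a chain of preparatory lemmas: $\|\AA_{m,d}(i,:)\|\le\|\AA_{m,d}\|\sqrt{\ell_{m,d}(i)}$, $|\rr_{m,d-1}(i)|\le\sqrt{\|\bm{\phi}_{m,d-1}\|^2+1}\,\|\AA_{m,d}\|\sqrt{\ell_{m,d}(i)}$, and the lower bound $\|\rr_{m,d-1}\|\ge\sigma_{\min}(\AA_{m,d})\sqrt{\|\bm{\phi}_{m,d-1}\|^2+1}$, so that the ratios collapse directly to $\kappa^2(\AA_{m,d})$. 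Your route via the SVD projection identity $|\ee_i^{\transpose}\vv|\le\sqrt{\ell_{m,k}(i)}\,\|\vv\|$ for $\vv\in\range(\AA_{m,k})$, together with the Pythagorean cancellations among $\eta_{m,k}$, $\zeta_k$ and $\|\bm{\phi}_{m,k}\|/\|\rr_{m,k}\|$, is a somewhat cleaner way to the same per-step estimate and bypasses several of those lemmas; the paper's packaging, in turn, makes the $\sqrt{\|\bm{\phi}\|^2+1}$ factors more explicit and does not invoke Cauchy interlacing (it simply asserts the monotonicity of $\kappa$). Two points you flag that the paper leaves implicit are the union bound over the $O(d)$ random sketches and the bootstrapping needed because the sampling distribution $\hat{\pi}_{m,k}$ is itself only a $\beta$-approximation; your caveat that replacing each $\eta_{m,k}$ by $\eta_{m,d-1}$ requires either monotonicity of $\eta_{m,k}$ or a $\max$ reading applies verbatim to the paper's own argument, which asserts $\eta_{m,\bar d-1}\le\eta_{m,\bar d}$ without justification.
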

    	
	%-----------------------
	% SALSA_Numerical Results
	%-----------------------
	
	\section{$\mathtt{SALSA}$: Numerical Results}
\label{sec:SALSA_numerical_results}

In this section, we present the numerical results that demonstrate the efficiency of $\mathtt{SALSA}$ in estimating leverage-scores. The numerical experiments were conducted using MATLAB R2022a on a Windows machine equipped with an Intel(R) Xeon(R) CPU @3.60GHz 6 cores. To ensure fair comparisons and consistency, all computations are performed using a single thread in MATLAB, despite the multi-core capacity of the device for parallel computing.  

\subsection{Synthetic Big Data}
To compare the performances of exact leverage-score calculations using (\ref{HatMatrix}) with the estimation method implemented in $\mathtt{SALSA}$, we conducted numerical experiments on a synthetic matrix $\AA_{m\times n}$, with $m=20,000,000$ rows and $n=300$ columns. The entries of this matrix were generated from a standard Gaussian distribution. To introduce outliers, we randomly selected $2,000$ rows and modified them. We adjusted these outlier rows by adding random numbers generated from a $t$-distribution with one degree of freedom, multiplied by a scalar to make them different from the original (normal) rows. The inclusion of these outliers allows us to assess the algorithm's capability to handle non-uniformity effectively, as expected. The distribution of leverage-scores of one of these data matrices is depicted in Figure \ref{fig:ls_syn}, which shows the non-uniformity included in the synthetic data matrix.

 \begin{figure*}[!ht]
    \centering
    \includegraphics[width=0.5\textwidth]{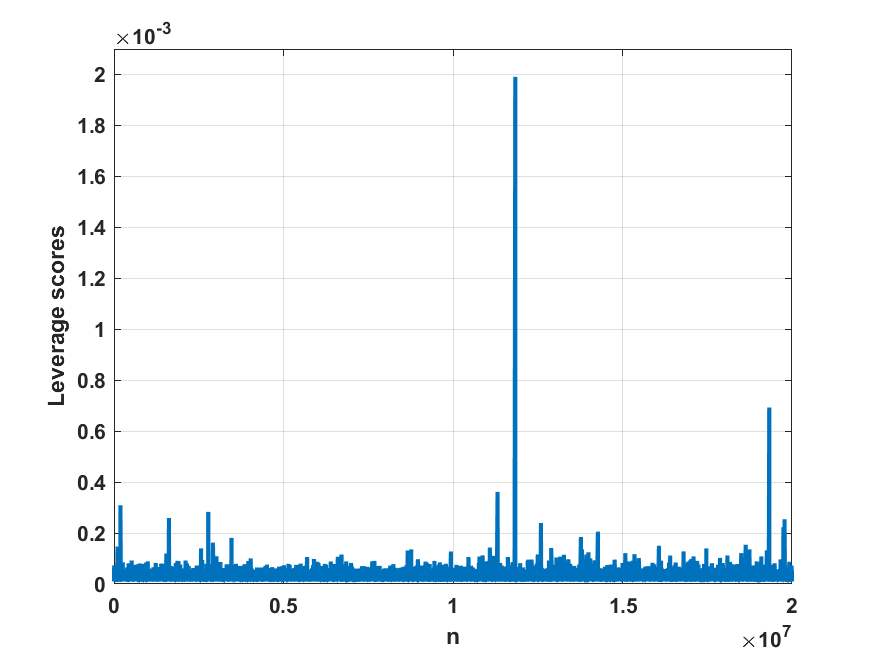}
    \caption{Non-uniform distribution of leverage-scores of the synthetic matrix with $m=20,000,000$ rows and $n=300$ columns, deliberately altered to include $2,000$ outliers.}   
    \label{fig:ls_syn}
\end{figure*}

Following Algorithm \ref{alg:salsa} for estimating leverage-scores using $\mathtt{SALSA}$, one needs to specify $s_1$, the row-sketch size for the $\mathtt{OLS}$ step, and $s_2$, the column-sketch size for the matrix/vector multiplication involved in calculating the residual vector. Hence, we investigate the impact of different choices of $s_1$ and $s_2$ on the execution time in estimating the leverage-scores and on the amount of deviation from the exact leverage-scores. In this experiment, we consider $s_1 \in \{0.005n, 0.001n, 0.002n, 0.004n\}$ and $s_2 \in \{1,2,4,8\}$. For each choice of $s_1$ and $s_2$, Figure \ref{fig:SALSA_time_error} records and depicts the average time and mean absolute percentage error (MAPE) of fifty independent runs. The $\mathtt{MAPE}$ between exact and estimated leverage-scores is calculated as follows:
\begin{equation}
  \mathtt{MAPE} =  \frac{1}{m}\sum\limits_{i=1}^m \left |\frac{\bm{\ell}_{m,n}(i) - \hat{\bm{\ell}}_{m,n}(i)}{\bm{\ell}_{m,n}(i)} \right | \times 100\%.
  \label{percentageError}
\end{equation}

% \begin{figure}[!ht]
% \vskip 0.2in
% \begin{center}
% \centerline{
%     \includegraphics[width=0.5\columnwidth]{time_s1.png}
%     \includegraphics[width=0.5\columnwidth]{time_s2.png}}
% \centerline{
%     \includegraphics[width=0.5\columnwidth]{error_s1.png}
%     \includegraphics[width=0.5\columnwidth]{error_s2.png}}
% \caption{Run times against $s_1$ (top left) and $s_2$ (top right) in computing exact and estimated leverage scores and percentage mean element-wise relative error  against $s_1$ (bottom left) and $s_2$ (bottom right) for a synthetic matrix with 20,000,000 rows, n=300 column, and altered to have 2,000 outliers.}
% \label{fig:SALSA_run_times}
% \end{center}
% \vskip -0.2in
% \end{figure}

\begin{figure*}[!ht]
        \centering
        \begin{subfigure}[b]{0.475\textwidth}
            \centering
            \includegraphics[width=\textwidth]{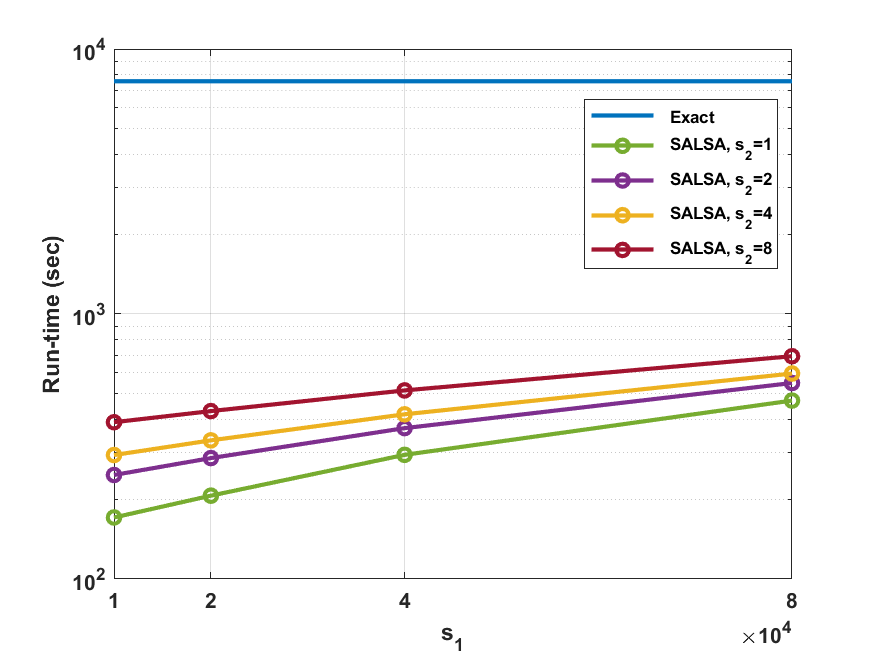}
            \caption[]{}    
            \label{fig:SALSA_time_error_a}
        \end{subfigure}
        \hfill
        \begin{subfigure}[b]{0.475\textwidth}  
            \centering 
            \includegraphics[width=\textwidth]{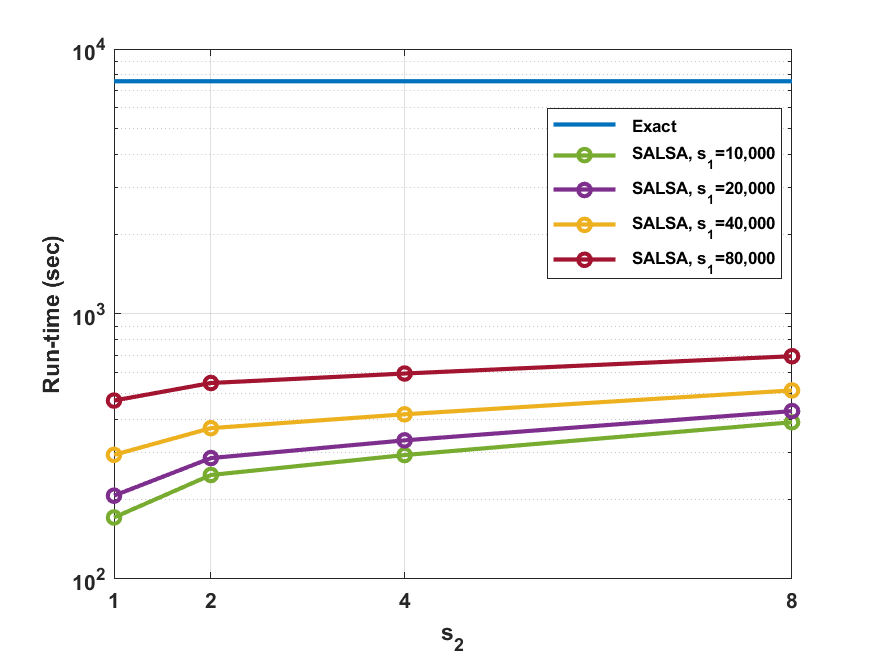}
            \caption[]{}    
            \label{fig:SALSA_time_error_b}
        \end{subfigure}
        \vskip\baselineskip
        \begin{subfigure}[b]{0.475\textwidth}   
            \centering 
            \includegraphics[width=\textwidth]{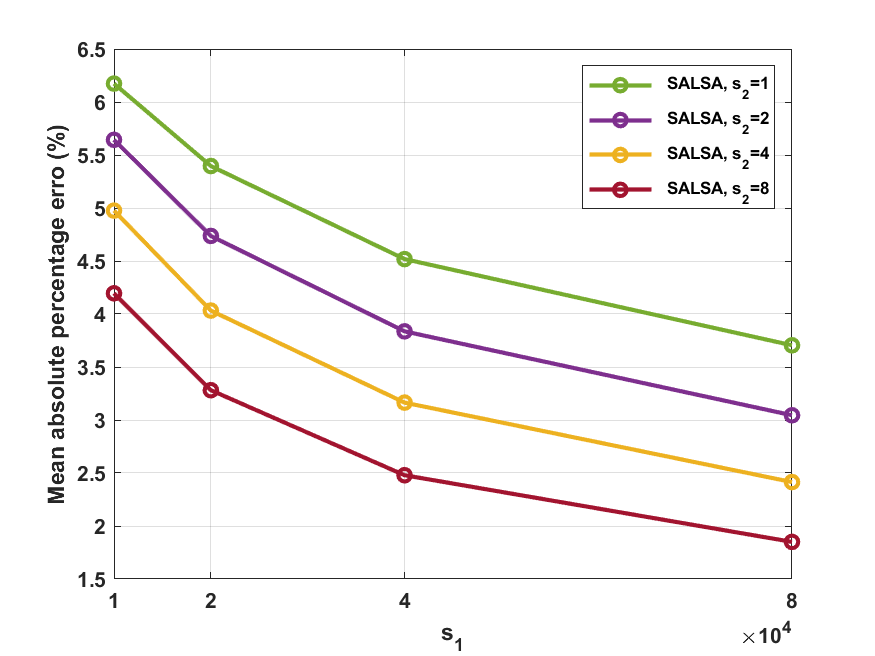}
            \caption[]{}    
            \label{fig:SALSA_time_error_c}
        \end{subfigure}
        \hfill
        \begin{subfigure}[b]{0.475\textwidth}   
            \centering 
            \includegraphics[width=\textwidth]{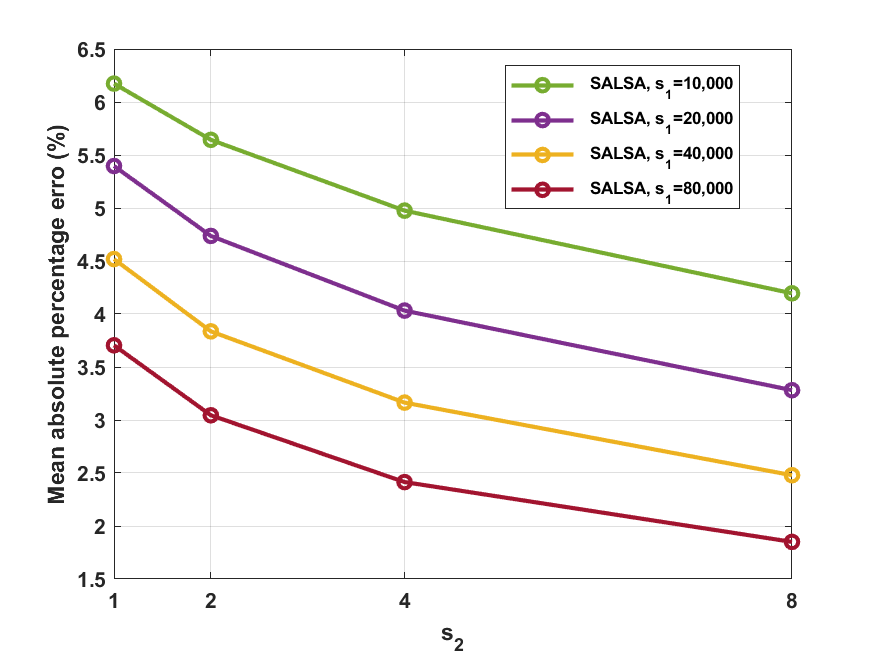}
            \caption[]{}    
            \label{fig:SALSA_time_error_d}
        \end{subfigure}
        \caption[]{(\ref{fig:SALSA_time_error_a}) shows the run times as a function of $s_1$ for different values of $s_2$ for computing both exact and estimated leverage-scores, while (\ref{fig:SALSA_time_error_b}) displays the run times as a function of $s_2$ for different fixed values of $s_1$. These results are obtained for a synthetic matrix with $m=20,000,000$ rows and $n=300$ columns, deliberately altered to include $2,000$ outliers. Furthermore, (\ref{fig:SALSA_time_error_c}) presents the $\mathtt{MAPE}$ against $s_1$, and (\ref{fig:SALSA_time_error_d}) depicts the same error metric against $s_2$ between the exact and estimated leverage-scores. } 
        \label{fig:SALSA_time_error}
\end{figure*}

Employing exact methods, such as (\ref{HatMatrix}), for computing leverage-scores demands substantial CPU time and RAM capacity, posing a significant obstacle for numerical experiments on standard computers. The computational cost of the exact method (\ref{HatMatrix}) is evident in Figures \ref{fig:SALSA_time_error_a} and \ref{fig:SALSA_time_error_b} when compared with the $\mathtt{SALSA}$ estimation method presented in this work. In our experiments with synthetic data, finding exact leverage-scores using (\ref{HatMatrix}) with single-threaded computations in MATLAB takes approximately $7,557$ seconds, equivalent to nearly two hours. Furthermore, a matrix with $20$ million rows and $300$ columns requires 45 GB of memory for storage. Using the most efficient approach for computing exact leverage-scores, by using $\mathtt{QR}$ decomposition, necessitates an additional 95 GB of memory. Consequently, the exact method demands a total of 140 GB of memory to successfully execute. However, by employing $\mathtt{SALSA}$, as described in Algorithm \ref{alg:salsa}, an extensive amount of RAM is not necessary. The error bounds demonstrate the estimated and exact leverage-scores are remarkably similar, with a $\mathtt{MAPE}$ consistently less than $6\%$ in the worst-case scenario as demonstrated in Figures \ref{fig:SALSA_time_error_c} and \ref{fig:SALSA_time_error_d}. This finding substantiates the practicality and accuracy of $\mathtt{SALSA}$ in estimating leverage-scores while circumventing the computational challenges associated with exact methods.

%In computing exact leverage scores of a data matrix of size $20,000,000 \times 300$, we need 45GB of memory to store it, 50GB of space to store the QR decomposition, and 45 GB of space for storing $QQ^T$ which indicates that finding exact leverage scores is not possible for ordinary computers. However, in using $\mathtt{SALSA}$ to estimate the leverage scores, we only need 45GB for storing the data matrix, and the amount of memory to run the recursive computations is negligible. 

Regarding the execution-time comparison, Figure \ref{fig:SALSA_time_error_a} clearly illustrate that as $s_1$ increases, regardless of the value of $s_2$, $\mathtt{SALSA}$ needs to solve larger $\mathtt{OLS}$ problems, resulting in increased computational time. However, even in the worst-case scenario, the algorithm only requires $692$ seconds, which is a remarkable eleven times faster than the exact method. Moreover, in the average case, $\mathtt{SALSA}$ completes in just $392$ seconds, making it $19$ times quicker than the time required for the exact method to compute the leverage-scores. This substantial reduction in computational time showcases the efficiency of $\mathtt{SALSA}$ in handling large-scale problems. Furthermore, similar behavior is observed for changes in $s_2$ in Figure \ref{fig:SALSA_time_error_b}. For each fixed value of $s_1$, as the value of $s_2$ increases, the execution time for $\mathtt{SALSA}$ increases only slightly. Figures \ref{fig:SALSA_time_error_a} and \ref{fig:SALSA_time_error_b} indicate that when $s_1$ and $s_2$ reach their highest values, as expected, $\mathtt{SALSA}$ needs to solve larger $\mathtt{OLS}$ problems, resulting in slightly longer execution times. Nevertheless, the differences in execution times between the highest and lowest values of $s_1$ and $s_2$ are negligible, further validating the consistent performance of $\mathtt{SALSA}$ across various scenarios.

The $\mathtt{SALSA}$ not only estimates leverage-scores in a remarkably short time, but also maintains low error bounds on the estimations. By selecting appropriate values for $s_1$ and $s_2$, $\mathtt{SALSA}$ efficiently extracts manageable-sized $\mathtt{OLS}$ problems from the original problem, which is solved using the exact algorithm. Despite some information being neglected in the sketching process, Figures \ref{fig:SALSA_time_error_c} and \ref{fig:SALSA_time_error_d} clearly demonstrate the mean percentage deviation of the estimated leverage-scores from the exact leverage-scores remains below $6\%$. Furthermore, in Figure \ref{fig:SALSA_time_error_c}, with a fixed value of $s_2$, as $s_1$ increases and $\mathtt{SALSA}$ incorporates more data, the deviation error decreases, indicating the algorithm's ability to produce more accurate results as it processes additional information. Similar behavior is observed in Figure \ref{fig:SALSA_time_error_d} when $s_2$ increases with a fixed value of $s_1$. Even when $s_1$ and $s_2$ are at their highest values, resulting in larger $\mathtt{OLS}$ problems, the percentage deviations remain at the lowest levels. Additionally, the difference between the $\mathtt{MAPE}$ when $s_1$ and $s_2$ are at their highest and lowest values is only about $3\%$, further affirming the consistency and reliability of $\mathtt{SALSA}$ across different scenarios. 

Figure \ref{fig:SALSA_time_error} provides clear evidence of the remarkable efficiency of the $\mathtt{SALSA}$ algorithm in estimating leverage-scores. As $s_1$ and $s_2$ increase, the algorithm acquires more data and solves progressively larger problems, resulting in longer execution times but lower percentage deviations. Notably, the figure indicates the differences in execution times and deviation errors between assigning the highest or lowest values to $s_1$ and $s_2$ are not substantial. This observation highlights a trade-off between two competing objectives: the level of deviation from the exact leverage-scores and the execution time. 

Setting $s_1=0.002n$ and $s_2=4$ indeed establishes a balance point where both the percentage deviation from the exact solution and the execution time reach a moderate level. This configuration proposes a well-balanced compromise, achieving reasonably accurate leverage-score estimates while keeping the computational overhead at a manageable size. In particular, with this setting, estimating the leverage-scores of a data matrix with dimensions of $20M \times 300$ takes only $293$ seconds (about five minutes), and the $\mathtt{MAPE}$ deviation is at most $5\%$. This remarkable result stands in contrast to the alternative of $7,557$ seconds (nearly two hours) required for exact calculations. Additionally, the advantage of not requiring a huge amount of RAM capacity further enhances the practicality and efficiency of $\mathtt{SALSA}$ in numerical experiments. 

$\mathtt{SALSA}$ introduces a new level of practicality for time series analysis, enabling it to be implemented on small devices such as smartphones and wearable technologies. This advancement opens up possibilities for performing health data or market data analysis directly on these devices. With the capability of fitting good models to the data and making trustworthy predictions, $\mathtt{SALSA}$ empowers users to engage in data-driven decision-making and gain valuable insights without relying on resource-intensive computing systems. This integration of time series analysis on portable devices enhances convenience, accessibility, and efficiency for a wide range of applications, contributing to improved health monitoring, personalized recommendations, and smarter financial strategies.

\subsection{Real-World Big Data}
The real-data matrix used in our study is derived from a data matrix associated with a large knowledge graph used for hypothesis generation in the field of medical studies. In the medical and biomedical sciences, the ability to uncover implicit relationships among various concepts for rapid hypothesis generation is important. Sybrandt et al.'s seminal work \cite{sybrandt2017moliere} made a remarkable contribution to this domain. They assembled an extensive dataset comprising more than 24.5 million documents, encompassing scientific papers, keywords, genes, proteins, diseases, and diagnoses. Their work involved the development of a method for constructing a comprehensive network of public knowledge and implementing a query process capable of revealing human-readable relationships between nodes in this network. For this purpose, they constructed a substantial graph from these objects, with 30,239,687 nodes, 4,023,336 explicitly specified zero edges, and 3,106,164 duplicate edges. The edge weights within this graph were recorded in a square data matrix, containing 30,239,687 rows and columns, with approximately 6,669,254,694 nonzero elements. We extracted a data matrix of size $m=20,000,000$ rows and $m=160$ from the weight matrix of the knowledge graph. The distribution of leverage-scores of this matrix is depicted in Figure \ref{fig:ls_real}, which shows the non-uniformity in the data matrix. 
 \begin{figure*}[!ht] 
            \centering
            \includegraphics[width=0.5\textwidth]{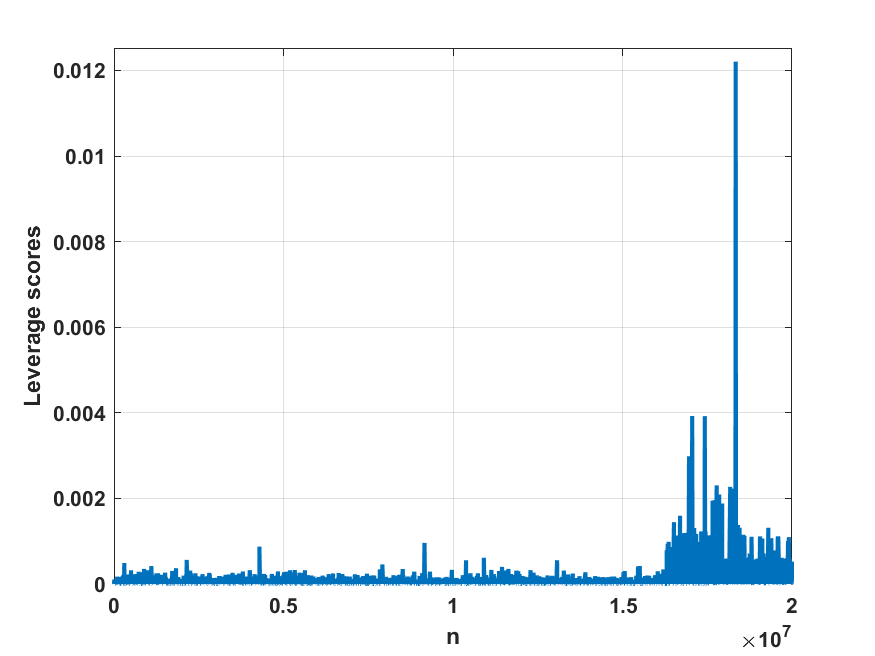}
            \caption[]{}    
            \label{fig:ls_real}
        \caption[]{Non-uniform distribution of leverage-scores of the real-data matrix with $m=20,000,000$ rows and $n=160$ columns extracted from weight data of a huge knowledge graph in \cite{sybrandt2017moliere}.} 
        \label{fig:ls}
\end{figure*}
For the large real-data matrix, the exact leverage-scores are computed only once, because the exact algorithm does not have any stochastic components. However, when estimating the leverage-scores using $\mathtt{SALSA}$, we perform calculations for every combination of $(s_1, s_2)$, where $s_1 \in \{20,000, 40,000, \ldots, 200,000\}$ and $s_2 \in \{1, 2, \ldots, 10\}$. We repeated this process 50 times to remove the impact of potential fluctuations. Figure \ref{fig:Real} provides details on execution times and $\mathtt{MAPE}$ for both the exact algorithm and $\mathtt{SALSA}$ for each $(s_1, s_2)$ pair. 
\begin{figure*}[!ht]
        \centering
        \begin{subfigure}[b]{0.475\textwidth}
            \centering
            \includegraphics[width=\textwidth]{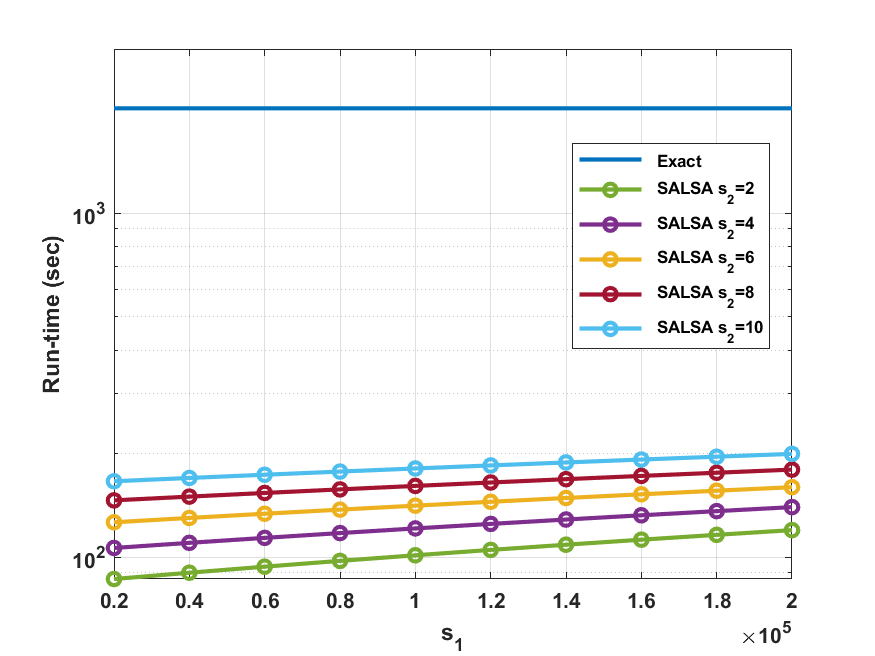}
            \caption[]{}    
            \label{fig:Real_s1_a}
        \end{subfigure}
        \hfill
        \begin{subfigure}[b]{0.475\textwidth}  
            \centering
            \includegraphics[width=\textwidth]{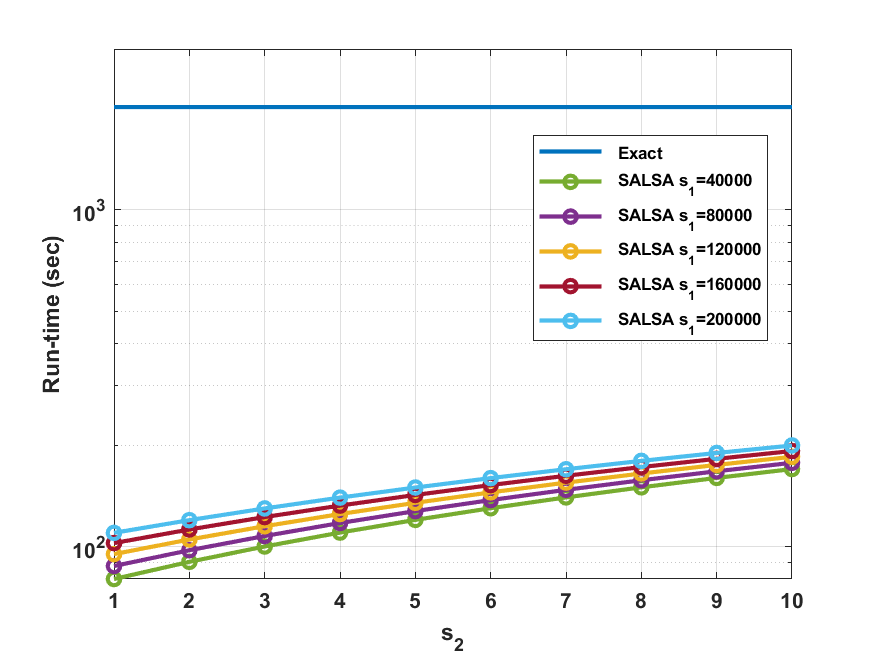}
            \caption[]{}    
            \label{fig:Real_s2_b}
        \end{subfigure}
        \vskip\baselineskip
        \begin{subfigure}[b]{0.475\textwidth}   
            \centering 
            \includegraphics[width=\textwidth]{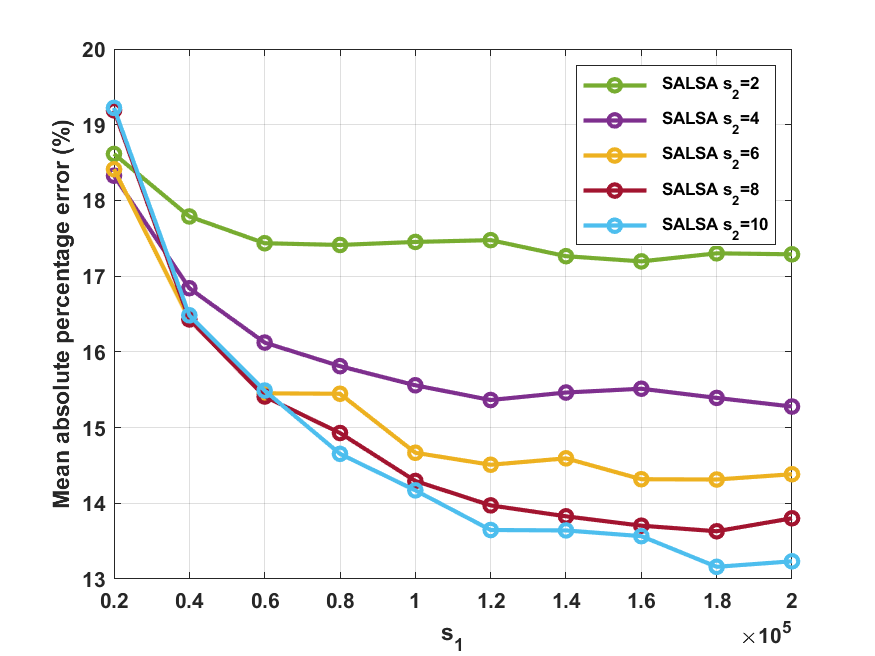}
            \caption[]{}    
            \label{fig:Real_s1_c}
        \end{subfigure}
        \hfill
        \begin{subfigure}[b]{0.475\textwidth}   
            \centering
            \includegraphics[width=\textwidth]{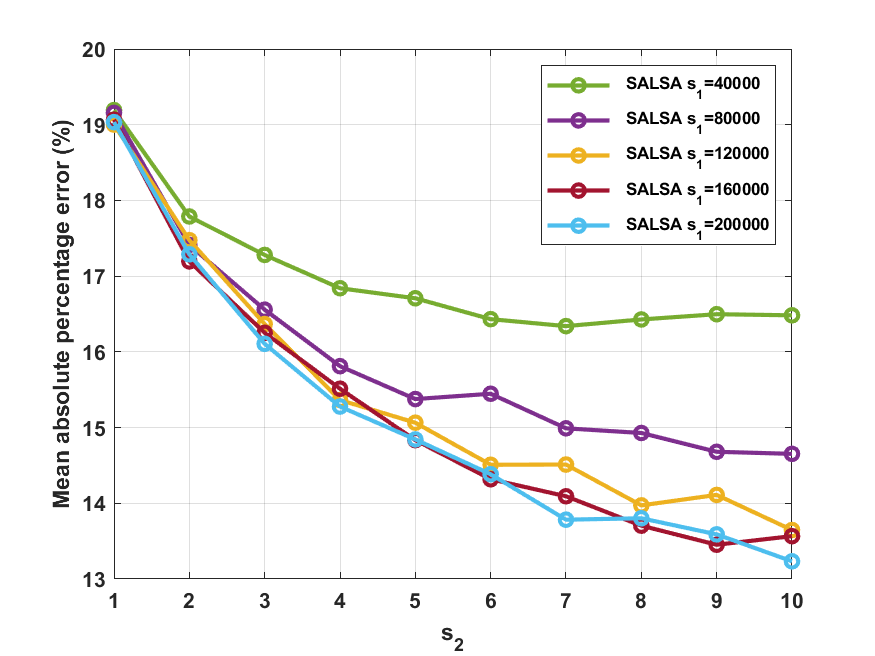}
            \caption[]{}    
            \label{fig:Real_s2_d}
        \end{subfigure}
        \caption[]{(\ref{fig:Real_s1_a}) shows the run times of approximating leverage-scores of the real-data matrix as a function of $s_2$ for different values of $s_1$, while (\ref{fig:Real_s2_b}) displays the run times as a function of $s_2$ for different fixed values of $s_1$. Run time for using the exact method in computing the leverage-scores is depicted in blue. The real-data matrix contains $m=20,000,000$ rows and $n=160$ columns, extracted from the edge-weight data matrix in \cite{sybrandt2017moliere}. Furthermore, (\ref{fig:Real_s1_c}) presents the $\mathtt{MAPE}$ against $s_1$, and (\ref{fig:Real_s2_d}) depicts the same error metric against $s_2$ between the exact and estimated leverage-scores.} 
        \label{fig:Real}
\end{figure*}

Figures \ref{fig:Real_s1_a} and \ref{fig:Real_s2_b} illustrate the efficiency of $\mathtt{SALSA}$ in estimating leverage-scores based on the values of $s_1$ and $s_2$ when compared with the exact algorithm. Evidently, as $s_1$ or $s_2$ increases, $\mathtt{SALSA}$ needs to handle a larger sketch of the original data matrix, subsequently requiring more time for leverage-score estimation. This consistent linear trend remains noticeable regardless of variations in either $s_1$ or $s_2$. Furthermore, these plots highlight that variations in $s_2$  have a greater impact than increases in $s_1$. The gaps between the curves representing different levels of $s_2$ in Figures \ref{fig:Real_s1_a} are significantly larger than the differences between the curves for varying levels of $s_1$  in Figure \ref{fig:Real_s2_b}. This observation aligns with the innovation introduced in $\mathtt{SALSA}$, which employs column-wise sketching with $s_2$ to enhance the numerical performance of estimating leverage-scores. Whereas exact leverage-score computation takes $2,017$ seconds (approximately $33$ minutes) for the whole data matrix, the most time-demanding pair, $s_1=200,000$ and $s_2=10$, requires only $3.3$ minutes. This finding indicates an approximately 10-fold increase in execution efficiency. The least time-demanding pair, $s_1= 20,000$ and $s_2=1$, only requires about one minute for leverage-score estimation.

Figures \ref{fig:Real_s1_c} and \ref{fig:Real_s2_d} explain the reduction in the percentage error when estimating leverage-scores as functions of $s_1$ and $s_2$. $\mathtt{MAPE}$, defined in (\ref{percentageError}), measures the$\mathtt{MAPE}$ between the estimated leverage-scores and those computed exactly. As both $s_1$ and $s_2$ increase, the approximate matrix incorporates more information from the original matrix, leading to more accurate leverage-score estimations. For instance, when $s_1$ and $s_2$ hit their highest values, namely, $s_2=10$ or $s_1=200,000$, the lowest error curves are observed. The highest error of $19.69\%$ occurred when $s_1=20,000$ and $s_2=10$, whereas the lowest $\mathtt{MAPE}$ of $13.16\%$ occurred for $s_1=180,000$ and $s_2=10$. With a worst-case $\mathtt{MAPE}$ of approximately $20\%$ and a 5-fold reduction in execution time, these results signify the efficiency of $\mathtt{SALSA}$ in estimating leverage-scores.

Another interesting observation about $\mathtt{SALSA}$ when applied to real-data matrices, as depicted in Figure \ref{fig:Real_s1_c}, is that increasing the value of $s_1$ while keeping $s_2$ fixed does not appear to lead to significant improvements in the mean-absolute-error levels. However, Figure \ref{fig:Real_s2_d} reveals that by increasing the value of $s_2$ while holding $s_1$ constant, achieving a reduction in the$\mathtt{MAPE}$.

% Subfigures \ref{fig:SALSA_time_error_a} and \ref{fig:SALSA_time_error_b} clearly demonstrate that increasing $s_1$ and $s_2$ which enlarges the size of row and column sketches and consequently, $\mathtt{SALSA}$ needs to solve a bigger OLS problem, the execution time converges to an upper bound. This scalability phenomenon is an important aspect of $\mathtt{SALSA}$ that the method is capable of dealing with even larger problems without requiring size-dependent execution time. However, based on the plots in Subfigures \ref{fig:SALSA_time_error_c} and \ref{fig:SALSA_time_error_d}, as the values of $s_1$ and $s_2$ increase, the amount of percentage relative error decrease, which implies that the estimated leverage scores are closer to the exact values.  This fact potentially creates a situation in which we can decide about a sweat spot or a trade-off in equilibrium values for $s_1$ and $s_2$ that we obtain a relatively small amount of percentage deviation from the exact leverage scores with moderately lower execution time. 

	%-----------------------
	% LSARMA
	%-----------------------
	
    \section{$\mathtt{LSARMA}$: Application of $\mathtt{SALSA}$ in Big Time Series Data}
\label{sec:lsarma}

This section demonstrates an application of the theory established in Section~\ref{chap_theo} regarding approximating leverage-scores. We employ $\mathtt{SALSA}$ in the development of a novel algorithm designed for fitting autoregressive moving average ($\mathtt{ARMA}$) models in scenarios involving big data. To provide context, we begin with a concise overview of the necessary background information on $\mathtt{ARMA}$ models and the $\mathtt{MLE}$ of their parameters.

\subsection{$\mathtt{ARMA}$ Models}
A time series $\{X_t; \; t=0,\pm 1, \pm 2, \ldots \}$ is a sequence of random variables indexed according to the order they are observed in time. The main goal of time series analysis is to create statistical models that predict the future behavior of a system. These models have established their efficacy and benefits in the modeling and analysis of stochastic dynamic systems. Their popularity is increasing for a variety of applications, ranging from supply chains and energy systems to epidemiology and engineering problems \cite{abolghasemi_2020,Eshragh2019,Eshragh2020,fahimnia_2015}.

Each observed value of $X_t$ is labeled $x_t$. A time series $X_t$ is considered (weakly) stationary if its mean function $\mathbb{E}[X_t]$ is constant, and the autocovariance function $Cov(X_t,X_{t+h})$ is independent of time.

$\mathtt{ARMA}$ models are a class of statistical models designed for the analysis and forecasting of stationary time series data. These models are characterized by their order, denoted as $\mathtt{ARMA}(p, q)$, which specifies the order of the autoregressive component ($p$) and the moving average component ($q$). In practice, a specific $\mathtt{ARMA}$ model is fully identified by estimating the parameters $\mathbf{\phi}=(\phi_1, \ldots, \phi_p)$ and $\mathbf{\theta}=(\theta_1, \ldots, \theta_q)$, as illustrated in
\begin{equation*}
    X_t = \phi_1 X_{t-1} + \ldots +\phi_p X_{t-p} + \theta_1 W_{t-1} + \ldots+ \theta_q W_{t-q} + W_t,
\end{equation*}
where $\phi_p \neq 0$ and $\theta_q \neq 0$. Additionally, $\{W_t; t=0,\pm1, \pm2, \ldots \}$ is a Gaussian white noise process with the properties $\mathbb{E}[W_t] = 0$ and $Cov(W_t,W_s) = \delta_{ts} \sigma^2_W$, where $\delta_{ts}$ denotes the Kronecker delta. Note we can assume $\mathbb{E}[X_t] = 0$ without any loss of generality. If $\mathbb{E}[X_t] = \mu \neq 0$, a simple transformation can be applied by replacing $X_t$ with $Y_t = X_t - \mu$ to hold this assumption. Furthermore, an $\mathtt{ARMA}(p,q)$ process is said to be invertible if the time series can be written as
\begin{equation}
    W_t = \sum_{j=0}^{\infty} \pi_j X_{t-j}, \quad \, \text{where  } \: \pi_0 = 1 \text{ and } \sum_{j=0}^{\infty} |\pi_j| < \infty.
    \label{eqn:invertible_process}
\end{equation}
Equivalently, the process is considered invertible if and only if the roots of the $\mathtt{MA}$ polynomial $\Theta_{q}(z) = 1+\theta_1 z + \ldots + \theta_q z^q$, for $z \in \mathbb{C}$, lie outside the unit circle. We assume all subsequent $\mathtt{ARMA}$ processes are causal, invertible, and stationary unless explicitly stated otherwise.

A total of $p + q + 3$ parameters need to be estimated when fitting an $\mathtt{ARMA}$ model, which includes the orders $p$ and $q$, coefficients $\phi_i$ and $\theta_i$, and the variance of the white noise $\sigma^2_W$. The estimation of the orders $p$ and $q$ requires the analysis of the Autocorrelation Function ($\mathtt{ACF}$) and Partial Autocorrelation Function ($\mathtt{PACF}$) plots. These functions help identify the number of past observations that significantly influence the current value in the time series (see \cite{shumwayTimeSeriesAnalysis2017} for details). The estimation of coefficients $\phi_i$, $\theta_i$, and $\sigma^2_W$ through the maximum likelihood technique is discussed in the next section.

\subsection{Maximum Likelihood Estimation of the Parameters} 
Consider a time series realization $x_1,\ldots,x_n$ generated by an $\mathtt{ARMA}(p,q)$ process, where the orders $p$ and $q$ are known and the length of the series $n$ is significantly larger than $p$ and $q$. This section considers the estimation of the parameters $\phi_1,\ldots,\phi_p,\theta_1,\ldots,\theta_q,$ and $\sigma^2_W$ through the $\mathtt{MLE}$ method. Obtaining an analytical expression for the estimates is interactable due to the complicated and nonlinear nature of the log-likelihood function, which is defined as
$$
\log ( f_{X_1,\ldots,X_n}(x_1,\ldots,x_n;\phi_1,\ldots,\phi_p,\theta_1,\ldots,\theta_q,\sigma^2_W)),
$$
where  $f$ is the joint probability distribution function of the random variables $X_1,\ldots,X_n$. 
As a result, numerical optimization techniques are typically used to approximate these estimates. 
These techniques explore parameter values that maximize the log-likelihood function so that the model fits the observed data as accurately as possible. However, we can demonstrate that the conditional log-likelihood function, described below, shares similarities with the log-likelihood function of a linear regression model:
\begin{align*}
    &\log(f_{X_{p+1},\ldots,X_n | X_1,\ldots,X_p,W_1,\ldots,W_q} (x_{p+1},\ldots,x_n;  \phi_1,\ldots,\phi_p,\theta_1,\ldots,\theta_q,\sigma^2_W | x_1,\ldots,x_p,w_1,\ldots,w_q)) \\
    & \quad \quad = - \frac{n-p}{2} \log (2\pi) - \frac{n-p}{2} \log(\sigma^2_W) \\ 
    & \quad \quad \quad \quad - \sum^n_{t=p+1} \frac{(x_t-\phi_1 x_{t-1} - \ldots - \phi_p x_{t-p} - \theta_1 w_{t-1} - \ldots - \theta_q w_{t-q})^2 }{2\sigma^2_W} ,
\end{align*}
where the log-likelihood has been conditioned on the first $p$ observations, equivalent to the order of the $\mathtt{AR}$ component, and the first $q$ white noises, equivalent to the order of the $\mathtt{MA}$ component \cite{hamiltonTSA}. 
This similarity allows for the use of regression-based optimization techniques in estimating the parameters of an $\mathtt{ARMA}$ model. Hence, the conditional $\mathtt{MLE}$ ($\mathtt{CMLE}$) of the coefficients and the variance can be obtained through an $\mathtt{OLS}$ regression of $x_t$ against its own lagged values up to order $p$ and the previous white noise terms up to order $q$. However, in practice, the previous white noise terms are not directly available and must be estimated from the available data before this approach can be applied.

J. Durbin's method \cite{durbinEfficientEstimationParameters1959}, which was originally developed for fitting $\mathtt{MA}$ models, has been extended to handle $\mathtt{ARMA}$ processes. The approach involves fitting an $\mathtt{AR}$ model to the data and then using the residuals of this model to estimate the parameters of the $\mathtt{MA}$ model. We refer to this method as Durbin's algorithm here. 
Durbin's algorithm allows for the estimation of the unobserved white noise terms, hence enabling the $\mathtt{OLS}$ approach to be used when the noise terms are unavailable. Equation~\cref{eqn:invertible_process} shows any invertible $\mathtt{ARMA}(p,q)$ process can be equivalently represented as an $\mathtt{AR}(\infty)$ process. However, in practice, due to finite sample sizes, Durbin's algorithm first fits an $\mathtt{AR}$ model with a sufficiently large order, denoted as $\tilde{p}$ (where $\tilde{p} \ll n$), to the data. Next, the algorithm approximates the values of the white noise term $W_t$ by computing the residuals of the fitted $\mathtt{AR}$ model,
\begin{equation}
    \hat{w}_t = x_t -  \sum_{j=1}^{\tilde{p}} \hat{\pi}_j x_{t-j}.
    \label{eqn:long_AR}
\end{equation}
Subsequently, as the next step, the algorithm approximates the $\mathtt{CMLE}$ of the coefficient vector $\bm{\psi}_{n,p,q} = (\phi_1,\ldots,\phi_p,\theta_1,\ldots,\theta_q)^\transpose$ as
\begin{equation}
    \hat{\bm{\psi}}_{n,p,q} = (\XX^\transpose_{n,p,q} \XX_{n,p,q})^{-1} \XX^\transpose_{n,p,q} \xx_{n,p,q},
    \label{eqn:coef_vector}
\end{equation}
where the data matrix $\XX_{n,p,q}$ is defined as
\begin{align}
    \left(\begin{array}{cccc cccc}
	\medskip x_{q+\tilde{p}} & \cdots & x_{q+\tilde{p}-p+1} 			& \hat{w}_{q+\tilde{p}} & \cdots	& \hat{w}_{\tilde{p}+1}		\\
	\medskip x_{q+\tilde{p}+1} & \cdots & x_{q+\tilde{p}-p+2} 			& \hat{w}_{q+\tilde{p}+1}	& \cdots	& \hat{w}_{\tilde{p}+2}	\\ 
	\medskip \vdots & \ddots & \vdots 		& \vdots  	& \ddots 	& \vdots	\\	
	\medskip x_{n-1} & \cdots & x_{n-p} & \hat{w}_{n-1}	& \cdots	& \hat{w}_{n-q}
	\end{array} \right),
	\label{eqn:data_matrix}
\end{align}
and $\xx_{n,p,q} := (x_{q+\tilde{p}+1},x_{q+\tilde{p}+2},\ldots,x_n)^\transpose$. Moreover, the $\mathtt{CMLE}$ of $\sigma^2_W$ is obtained  by 
$$\hat{\sigma}^2_W =\frac{ \| \rr_{n,p,q} \|^2}{(n-p-q)},$$ 
where $\hat{\rr}_{n,p,q} := \XX_{n,p,q} \hat{\bm{\psi}}_{n,p,q} - \xx_{n,p,q}$ is the residual vector. Hannan and Rissanen \cite{hannanRecursiveEstimationMixed1982} extended this algorithm with an additional set of trimming steps to improve the initial parameter~estimates.

\subsection{Choosing a Sufficiently Large $\tilde{p}$} 

The equivalence between invertible $\mathtt{ARMA}(q)$ and $\mathtt{AR}(\infty)$ models might imply larger values of $\tilde{p}$ would consistently outperform smaller ones. However, this equivalence does not consider finite sample sizes. As noted in \cite{broersenAutoregressiveModelOrders2000}, \\
\begin{quote}  %\enquote{
\textit{Practice and simulations have shown that the best $\mathtt{AR}$ order in estimation is finite and depends on the true process parameters and the number of observations.}
\end{quote} %} \\
\noindent Since the first procedure in estimating $\tilde{p}$ in  \cite{durbinEfficientEstimationParameters1959}, various methods have been developed to select the most suitable value.

As an example, Hannan and Rissanen \cite{hannanRecursiveEstimationMixed1982} proposed the use of the Bayesian information criterion ($\mathtt{BIC}$) to determine the optimal value of $\tilde{p}$. This criterion can be applied to a regression model with $k$ coefficients. In this context, the $\mathtt{MLE}$ for the variance is denoted as $\hat{\sigma}_k^2 = \mathtt{SSE}(k)/n$, where $\mathtt{SSE}(k)$ represents the residual sum of squares under the model with $k$ regression coefficients. Then,
\begin{align}
	  \medskip \mathtt{BIC} := \log(\hat{\sigma_k^2}) + \frac{k\log(n)}{n}, 
	  \label{eqn:BIC}
\end{align}
with the value of $k$ yielding the minimum $\mathtt{BIC}$ specifying the best regression model. Additionally, Broersen \cite{broersenAutoregressiveModelOrders2000} recommended selecting the optimal order as the one that minimizes the $\mathtt{GIC}(p,1)$ criterion, where $p$ ranges from zero to infinity. Here, $\mathtt{GIC}$ stands for a generalization of Akaike's information criterion, offering a practical approach to determining the most suitable order for $\mathtt{AR}$ models. More recently, the rolling average algorithm, introduced by Eshragh et al. \cite{eshraghRollage2020}, is a method for selecting an optimal $\tilde{p}$ when fitting $\mathtt{ARMA}$ models, especially in the context of big time series datasets.

\subsection{The $\mathtt{LSARMA}$ Algorithm}
We now introduce $\mathtt{LSARMA}$, as presented in Algorithm~\ref{alg:LSARMA}. This algorithm is based on the theoretical foundations outlined in Section~\ref{chap_theo} and extends the $\mathtt{LSAR}$ algorithm \cite{eshraghLSAREfficientLeverage2019}. 
$\mathtt{LSAR}$ uses leverage-score sampling and the Toeplitz structure of the data matrix to approximate $\mathtt{AR}$ models for large time series datasets, and $\mathtt{LSARMA}$ offers a generalization to $\mathtt{ARMA}$ models.

%-----------------------------------------------------------
\begin{algorithm}[tb] 
	\caption{$\mathtt{LSARMA}$}	
	\label{alg:LSARMA}
	\begin{algorithmic}
		\STATE {\bfseries Input:} Time series data $\{x_1,\ldots,x_n\}$, relatively large order $\bar{p} \ll n$, an estimate of the $\mathtt{MA}$ order $q$;
        \vspace{0.2cm}
        \STATE Apply the $\mathtt{LSAR}$ algorithm using the $BIC$ criterion, as in \cref{eqn:BIC}, to choose a suitable $\tilde{p}$ to obtain the white noise estimates $\hat{w}_t$ as in \cref{eqn:long_AR};
        \STATE Construct the data matrix $\XX_{n,0,q}$ as in \cref{eqn:data_matrix} and estimate its leverage-scores using \texttt{SALSA};
        \FOR{$h=1$ {\bfseries to} $\bar{p}$}
            \STATE Use $\mathtt{SALSA}$ to estimate the leverage-scores of $\XX_{n,h,q}$;
            \STATE Use the estimated leverage-scores to find the counterpart sketched solution as given in \cref{eqn:phi_hat_defined}, of $\hat{\bm{\psi}}_{n,h,q}$ as in \cref{eqn:coef_vector}
            \STATE Compute the $\mathtt{PACF}_h$
        \ENDFOR
        \STATE Plot $\mathtt{PACF}_h$ against $h$ to estimate the $\mathtt{AR}$ order $p$. 
		\STATE {\bfseries Output:} Estimated orders $p$, $q$ and parameter vector $\hat{\bm{\psi}}_{n,p,q}$.  
	\end{algorithmic}
\end{algorithm}

\begin{remark}[$\mathtt{LSARMA}$ Computational Complexity, \cite{eshraghLSAREfficientLeverage2019}, Theorem 6]
    The computational bottleneck of the $\mathtt{LSARMA}$ algorithm is due to the use of the $\mathtt{LSAR}$ algorithm. Consequently, the worst-case time complexity of the $\mathtt{LSARMA}$ algorithm is the same as that of the $\mathtt{LSAR}$ algorithm, that is,
    \begin{equation*}
        \bigO{n\tilde{p} + \frac{\tilde{p}^4 \log{\tilde{p}}}{ \varepsilon_1^2 }},
    \end{equation*}
    where $\varepsilon_1$ is introduced in \cref{thm:michael}. Note exact $\mathtt{ARMA}(p,q)$ fitting requires $\bigO{n\tilde{p}^3}$. 
\end{remark}

\subsection{Numerical Results: Big Time Series Data} \label{sec:lsarma_numerical}

The numerical results comparing the execution time and percentage error of parameter estimation using $\mathtt{LSARMA}$ versus an exact method for synthetic $\mathtt{MA}(q)$ data with $n=5,000,000$ entries are illustrated in Figure \ref{fig:SALSA_MA2}. Although the exact method is executed only once, because it lacks any randomization in its procedure, the $\mathtt{LSARMA}$ method is run for $25$ iterations due to the inclusion of a stochastic sketching module within its procedure. Consequently, the results of $\mathtt{LSARMA}$ are averaged to facilitate a meaningful comparison with the exact method.

The benefits of employing $\mathtt{LSARMA}$ for estimating the model parameters are evident in Figure \ref{fig:SALSA_MA2_a}. As the value of $q$ increases, both methods show positive slopes in their computational time curves, but $\mathtt{LSARMA}$ consistently requires significantly fewer resources. To grasp the computational advantage, note the average execution time for the exact method remains at $106,875$ seconds (about $30$ hours) for all values of $q$, whereas $\mathtt{LSARMA}$ averages at only $3,526$ seconds (about one hour), making it approximately $30$ times faster. Furthermore, when considering all values of $q$, the maximum time required for the exact method to find the parameters is a substantial $89$ hours, whereas $\mathtt{LSARMA}$ accomplishes the task in a mere $3$ hours.

These results presented in Figure \ref{fig:SALSA_MA2_a} underscore the superior performance and efficiency of $\mathtt{LSARMA}$ over the exact method, particularly as the complexity of the data increases. This advantage becomes even more significant as the value of $q$ grows, highlighting the practicality, scalability, and computational gain of using $\mathtt{LSARMA}$ for parameter estimation in $\mathtt{MA}(q)$ data.

\begin{figure}[!ht]
\vskip 0.2in
     \centering
     \begin{subfigure}[b]{0.45\textwidth}
         \centering
         \includegraphics[width=1\columnwidth]{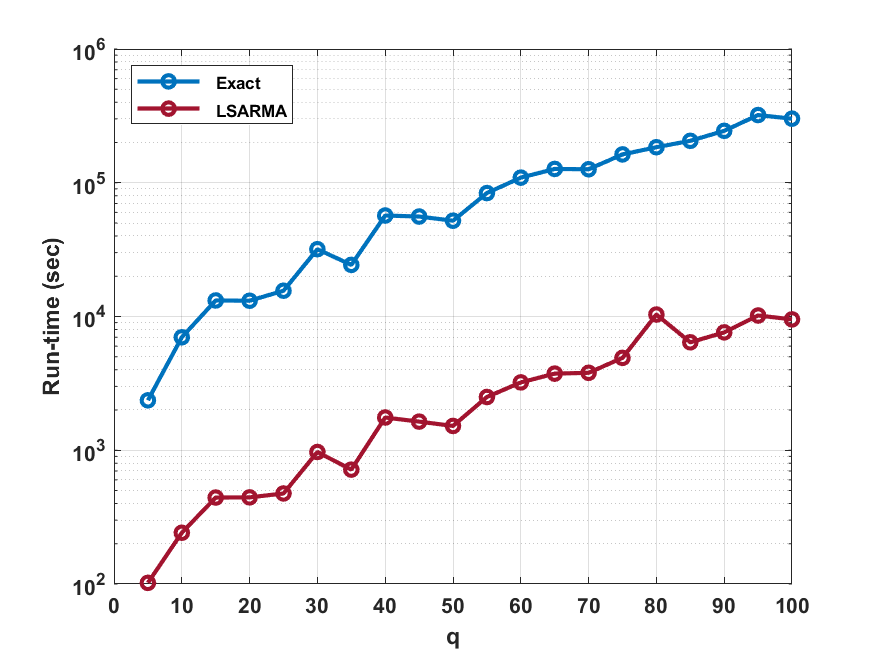}
         \caption{}
         \label{fig:SALSA_MA2_a}
     \end{subfigure}
     %\hfill
     \begin{subfigure}[b]{0.45\textwidth}
         \centering
         \includegraphics[width=1\columnwidth]{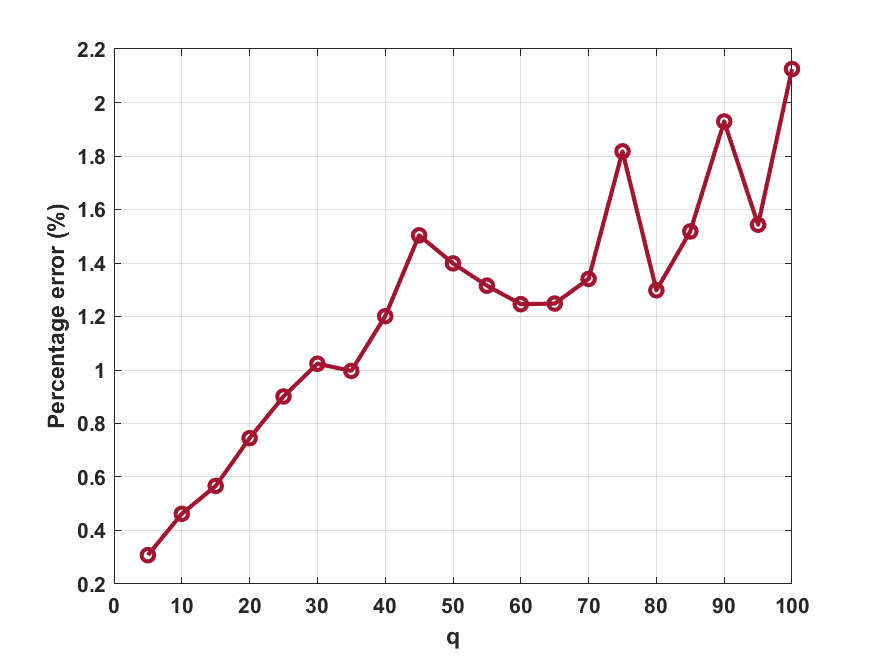}
         \caption{}
         \label{fig:SALSA_MA2_b}
     \end{subfigure}
     
        \caption{(\ref{fig:SALSA_MA2_a}) Run times in computing exact and estimated parameters and (\ref{fig:SALSA_MA2_b}) percentage error in the difference between exact and estimated parameters for synthetic $\mathtt{MA}(q)$ data with $n=5,000,000$ entries.}
        \label{fig:SALSA_MA2}
    \vskip -0.2in
\end{figure}

To further demonstrate the efficiency of $\mathtt{LSARMA}$, we visualize its computational accuracy in comparison to the exact method by analyzing the percentage error between the parameters obtained from both methods in Figure \ref{fig:SALSA_MA2_b}. To calculate the percentage error for each value of $q$, we use the Euclidean norm as follows:
\begin{equation}
  \text{Percentage error} =   \frac{\| \bm{\psi}_{n,p,q} - \hat{\bm{\psi}}_{n,p,q} \|}{\|\bm{\psi}_{n,p,q}\|} \times 100\%. 
  \label{percentageErrorMA}
\end{equation}
Although a slight upward trend occurs in the percentage error as $q$ increases, it remains remarkably low, with the maximum percentage error across all values of $q$ being only $2.1\%$. This outstanding result showcases the accuracy of parameter estimation in large time series data using $\mathtt{LSARMA}$, even when working with just a fraction of the data. Occasional fluctuations are observed for $q$ values greater than $75$. We attribute these fluctuations primarily to the resource allocation of MATLAB. However, these fluctuations are minimal - less than $0.2\%$ - and can be considered negligible given the data size.

Considering both its computational efficiency and accuracy over the exact method in Figure \ref{fig:SALSA_MA2}, $\mathtt{LSARMA}$ represents itself as a reliable and robust method for estimating parameters in $\mathtt{MA}$ models. With a significant reduction in execution time, approximately 30 times faster than the exact method, and only a marginal deviation of $2.1\%$ from the parameters obtained from the exact method, $\mathtt{LSARMA}$ proves to be an excellent choice for parameter estimation in large time series data.

Furthermore, the $\mathtt{LSARMA}$ algorithm finds application in estimating parameters for synthetically generated $\mathtt{ARMA}(p,q)$ models, with $p$ and $q$ selected from the set $\{5,10,20,40, 80\}$, and data models containing $n=5,000,000$ entries. These datasets are constructed with randomly generated data; hence, they contain varying levels of complexity in parameter estimation, due to the differing characteristics of the underlying models. For each specific model, the true parameters are determined exactly only once, because no stochasticity is involved in this process. Subsequently, the $\mathtt{LSARMA}$ algorithm is employed to estimate these parameters, and this estimation process is repeated ten times to account for variability. Both the execution time and the percentage of errors, as defined in (\ref{percentageErrorMA}), are computed for each method. The results of these experiments are visually presented in Figure \ref{fig:LSARMA_time_error}, providing insights into the algorithm's performance across a range of $\mathtt{ARMA}(p,q)$ models with distinct complexities.
\begin{figure*}[!ht]
        \centering
        \begin{subfigure}[b]{0.475\textwidth}
            \centering
            \includegraphics[width=\textwidth]{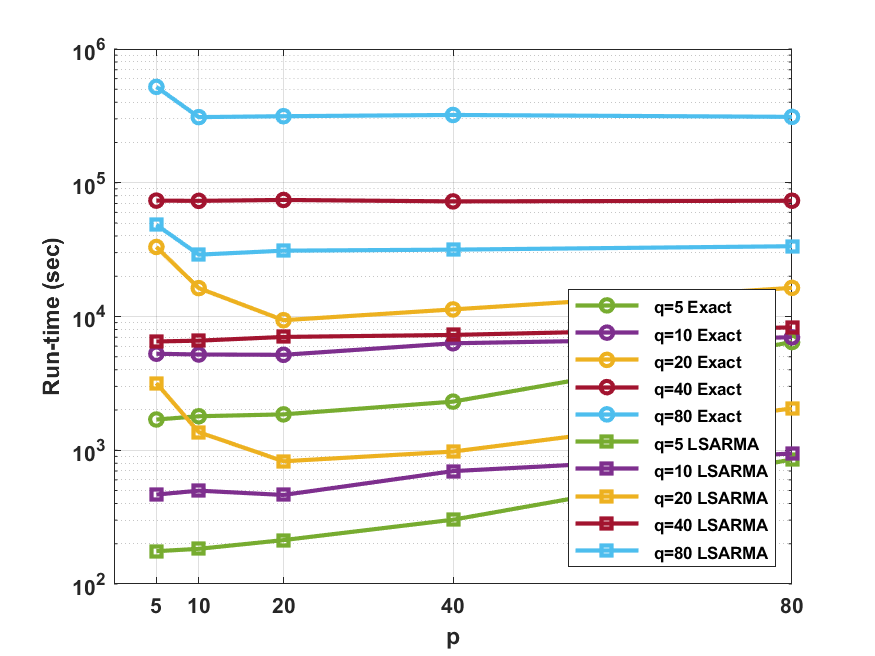}
            
            \caption[]{}    
            \label{fig:LSARMA_time_error_a}
        \end{subfigure}
        \hfill
        \begin{subfigure}[b]{0.475\textwidth}  
            \centering 
            \includegraphics[width=\textwidth]{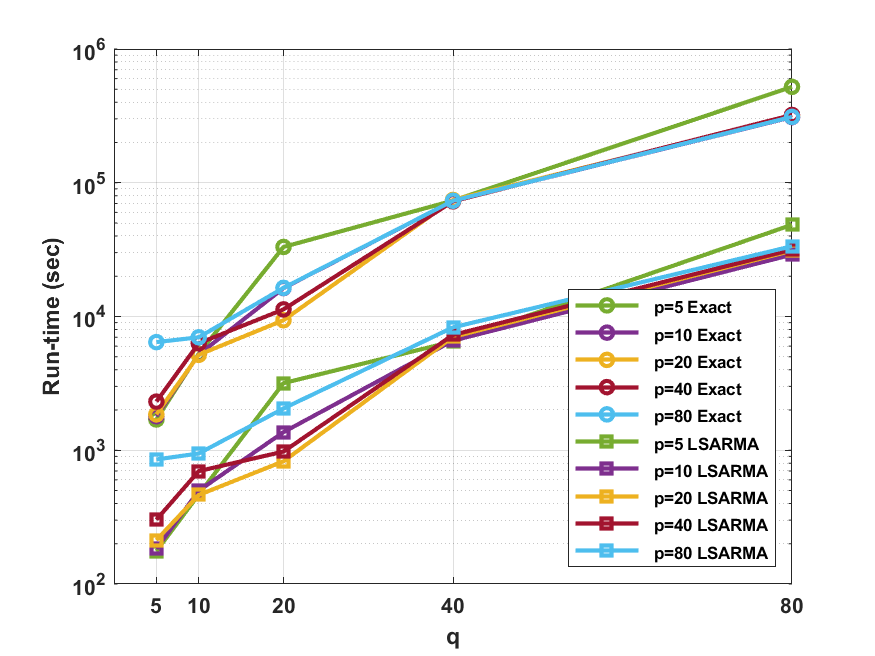}
            \caption[]{}    
            \label{fig:LSARMA_time_error_b}
        \end{subfigure}
        \vskip\baselineskip
        \begin{subfigure}[b]{0.475\textwidth}   
            \centering 
            \includegraphics[width=\textwidth]{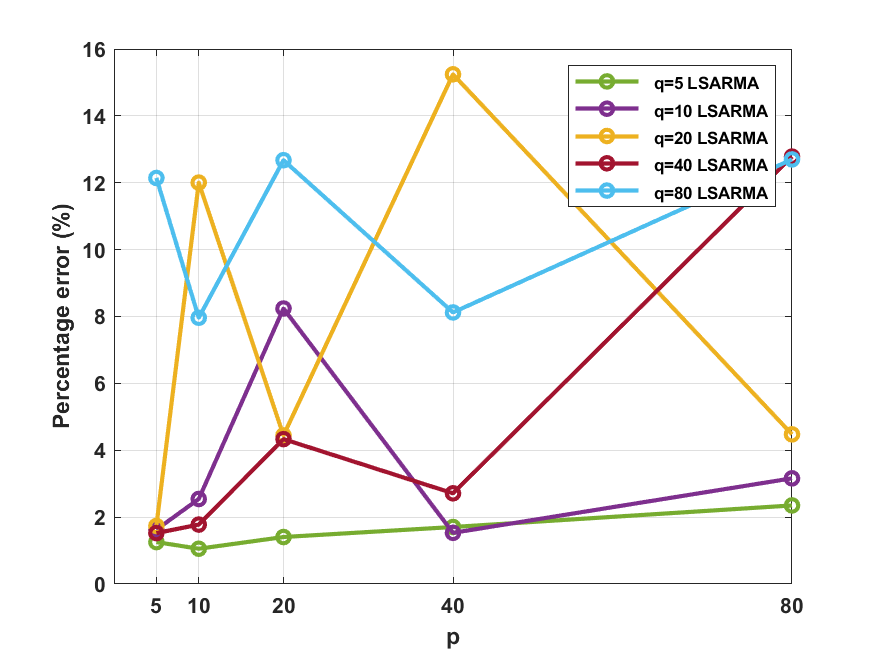}
            \caption[]{}    
            \label{fig:LSARMA_time_error_c}
        \end{subfigure}
        \hfill
        \begin{subfigure}[b]{0.475\textwidth}   
            \centering 
            \includegraphics[width=\textwidth]{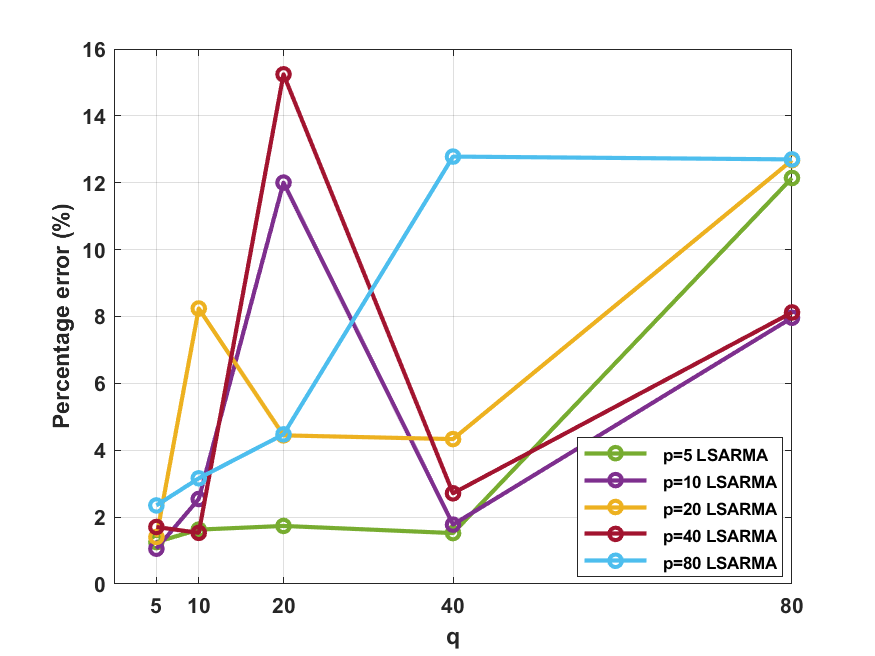}
            \caption[]{}    
            \label{fig:LSARMA_time_error_d}
        \end{subfigure}
        \caption[]{(\ref{fig:LSARMA_time_error_a}) shows the run times as functions of $p$ for different values of $q$ for computing both exact and estimated parameters, while (\ref{fig:LSARMA_time_error_b}) displays the run times as functions of $q$ for different fixed values of $p$. These results are obtained for synthetic $\mathtt{ARMA}(p,q)$ models  with $p,q \in \{5,10,20,40, 80\}$ and $n=5,000,000$ entries. Furthermore, (\ref{fig:LSARMA_time_error_c}) presents the$\mathtt{MAPE}$ against $p$, and (\ref{fig:LSARMA_time_error_d}) depicts the same error metric against $q$ between the exact and  $\mathtt{LSARMA}$ algorithms.} 
        \label{fig:LSARMA_time_error}
\end{figure*}

From a time-efficiency perspective, Figures \ref{fig:LSARMA_time_error_a} and \ref{fig:LSARMA_time_error_b} provide a comprehensive illustration of the computational time required to obtain exact and estimated parameters using the exact and $\mathtt{LSARMA}$ algorithms, respectively. These figures present insights into how this computational time changes across different combinations of $p$ and $q$ values. In both scenarios, note the most time-intensive computations are associated with the $\mathtt{ARMA}(5,80)$ model. Specifically, the exact algorithm demands $521,132$ seconds (approximately $145$ hours) to complete, whereas the $\mathtt{LSARMA}$ algorithm accomplishes the same task in significantly less time, taking only $48,495$ seconds (roughly $13$ hours). This remarkable efficiency is evidenced by the fact that, for the most computationally demanding model, $\mathtt{LSARMA}$ is approximately $11$ times faster than the exact algorithm. Conversely, the least computationally demanding model, $\mathtt{ARMA}(5,5)$, can be estimated in $47$ minutes using the exact algorithm and only $3$ minutes with the $\mathtt{LSARMA}$ algorithm. Here, $\mathtt{LSARMA}$ shows approximately a $16$-times increase in efficiency.

Additionally, Figures \ref{fig:LSARMA_time_error_a} and \ref{fig:LSARMA_time_error_b} provide insights into execution times by fixing values of either $q$ or $p$ while varying the other variable. Particularly, Figure \ref{fig:LSARMA_time_error_a} demonstrates nearly constant execution times when $q$ is fixed and $p$ values are varied. Conversely, Figure \ref{fig:LSARMA_time_error_b} reveals an increasing trend in execution times when $p$ is held constant, and $q$ values are changed. These observations shed light on the complicated interplay between $p$ and $q$ in determining the computational time required for parameter estimation. Note the execution time of the exact algorithm and $\mathtt{LSARMA}$ share the same color, but different markers of $\bigcirc$ and $\Box$, respectively.

Figures \ref{fig:LSARMA_time_error_c} and \ref{fig:LSARMA_time_error_d} provide graphical representations of the percentage error between parameters estimated by the exact and $\mathtt{LSARMA}$ algorithms, as defined in (\ref{percentageErrorMA}). Given the random construction of models for each combination of $(p,q)$, these models inherently possess varying degrees of complexity, resulting in fluctuating error curves as $p$ or $q$ change. Consequently, the observed fluctuations in the error curves can be attributed to these inherent variations. Particularly, the maximum percentage error, amounting to $15.24\%$, is observed in parameter estimation for the $\mathtt{ARMA}(40,20)$ model. Conversely, the minimum percentage error, at only $1.05\%$, is observed for the $\mathtt{ARMA}(10,5)$ model. On average, across all models considered, the percentage error in parameter estimation between the exact and $\mathtt{LSARMA}$ algorithms is approximately $4.63\%$, with a standard deviation of $5.58\%$. 

These statistics offer valuable insights into the overall accuracy and consistency of parameter estimation achieved by the $\mathtt{LSARMA}$ algorithm, when compared with the exact method. With an average percentage error of just $4.63\%$, coupled with the significant efficiency gains of being approximately $11$ times faster in terms of execution time, the $\mathtt{LSARMA}$ algorithm clearly demonstrates its superior capability in reliably estimating the parameters of $\mathtt{ARMA}(p,q)$ models.

	%-----------------------
	% Conclusion
	%-----------------------
	
	%\input{Conclusion.tex}
	
	%-----------------------
	% Acknowledgement
	%-----------------------
	
	%\input{Acknowledgement.tex}
	
	%-----------------------

	%-----------------------------------------------

	%-----------------------
	% Appendix
	%-----------------------

%\begin{appendix}
\newpage
\onecolumn
\appendix
\allowdisplaybreaks

%---------------------------------------------
\section{Technical Lemmas and Proofs}
\label{Sec:Proofs}
%---------------------------------------------

%---------------------------------------------
\subsection{Proof of \cref{thm:General_Lev_Score_Recursion} (Exact Recursive Leverage Scores)}
%---------------------------------------------
We first present \cref{lem:Matrix_Inversion}, which we use in the proof of \cref{thm:General_Lev_Score_Recursion}.   

\begin{lemma}[Block Matrix Inversion Lemma \cite{golubMatrixComputations2013}]
    Consider the $2 \times 2$ block matrix 
    \begin{equation*}
        \MMM = \begin{bmatrix}
        \AA_{m,d} & \bb \\
        \bb^{\transpose} & c
        \end{bmatrix},
    \end{equation*}
    where $\AA_{m,d} \in \mathbb{R}^{m \times d}$, $\bb \in \mathbb{R}^{m \times 1}$ and $c \in \mathbb{R}$. If $\AA_{m,d}$ is invertible, the inverse of $M$ exists and can be calculated as follows:
    \begin{equation*}
                \MMM^{-1} = \frac1k \begin{bmatrix}
        k\AA_{m,d}^{-1} + \AA_{m,d}^{-1}\bb\bb^{\transpose}\AA_{m,d}^{-1} & -\AA_{m,d}^{-1} \bb \\
         -\bb^{\transpose} \AA_{m,d}^{-1} & 1
        \end{bmatrix},
    \end{equation*}
    where $k = c - \bb^{\transpose} \AA_{m,d}^{-1} \bb$. 
    \label{lem:Matrix_Inversion}
\end{lemma}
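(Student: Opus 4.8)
The plan is to verify the displayed formula directly: either check that $\MMM$ times the claimed right-hand side equals $\eye$, or --- the route I would actually write, since it also exhibits where the formula comes from --- first produce the block $LDU$ (Schur-complement) factorization
\[
\MMM = \begin{bmatrix}\eye & \zero\\ \bb^{\transpose}\AA_{m,d}^{-1} & 1\end{bmatrix}\begin{bmatrix}\AA_{m,d} & \zero\\ \zero & k\end{bmatrix}\begin{bmatrix}\eye & \AA_{m,d}^{-1}\bb\\ \zero & 1\end{bmatrix},\qquad k = c - \bb^{\transpose}\AA_{m,d}^{-1}\bb,
\]
which is valid as soon as $\AA_{m,d}$ is invertible and is confirmed by multiplying the three factors out (the $(2,1)$ block returns $\bb^{\transpose}$ and the $(2,2)$ block returns $\bb^{\transpose}\AA_{m,d}^{-1}\bb + k = c$).

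From this factorization, $\MMM$ is invertible if and only if the middle block-diagonal factor is, i.e.\ if and only if $k \neq 0$, and then $\MMM^{-1}$ is the product, in reverse order, of the inverses of the three factors. The two triangular factors are unipotent, so each inverts simply by negating its off-diagonal block, and the diagonal factor inverts entrywise to $\mathrm{diag}\!\left(\AA_{m,d}^{-1},\,k^{-1}\right)$. Multiplying
\[
\begin{bmatrix}\eye & -\AA_{m,d}^{-1}\bb\\ \zero & 1\end{bmatrix}\begin{bmatrix}\AA_{m,d}^{-1} & \zero\\ \zero & k^{-1}\end{bmatrix}\begin{bmatrix}\eye & \zero\\ -\bb^{\transpose}\AA_{m,d}^{-1} & 1\end{bmatrix}
\]
and collecting terms gives
\[
\MMM^{-1} = \begin{bmatrix}\AA_{m,d}^{-1} + k^{-1}\AA_{m,d}^{-1}\bb\bb^{\transpose}\AA_{m,d}^{-1} & -k^{-1}\AA_{m,d}^{-1}\bb\\ -k^{-1}\bb^{\transpose}\AA_{m,d}^{-1} & k^{-1}\end{bmatrix},
\]
which is exactly the stated expression once the common factor $1/k$ is pulled out. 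As a cross-check one can instead expand $\MMM$ times the claimed inverse block by block: the $(1,1)$ block is $\AA_{m,d}\cdot\tfrac1k\!\left(k\AA_{m,d}^{-1}+\AA_{m,d}^{-1}\bb\bb^{\transpose}\AA_{m,d}^{-1}\right) - \tfrac1k\bb\bb^{\transpose}\AA_{m,d}^{-1} = \eye$, and the remaining three blocks collapse in the same way, using $k = c - \bb^{\transpose}\AA_{m,d}^{-1}\bb$ in the $(2,2)$ slot.

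There is no genuine obstacle here: the result is the standard Schur-complement inverse, and the only care needed is bookkeeping of $2\times 2$ block products. Two remarks are worth folding into the write-up. First, the hypothesis as stated is slightly too weak --- invertibility of $\AA_{m,d}$ by itself does not force $\MMM$ to be invertible (one also needs $k\neq 0$) --- but since the displayed formula already divides by $k$, this assumption is implicit and harmless. Second, $\AA_{m,d}$ must be read here as a square invertible matrix; in the use made of this lemma in the proof of \cref{thm:General_Lev_Score_Recursion} it will be the Gram matrix $\AA_{m,d-1}^{\transpose}\AA_{m,d-1}$ with $\bb = \AA_{m,d-1}^{\transpose}\aa_{d-1}$ and $c = \aa_{d-1}^{\transpose}\aa_{d-1}$, so that $\MMM = \AA_{m,d}^{\transpose}\AA_{m,d}$ and $k = c - \bb^{\transpose}(\AA_{m,d-1}^{\transpose}\AA_{m,d-1})^{-1}\bb = \|\rr_{m,d-1}\|^{2}$, which is precisely the quantity appearing in the denominator of the recursion and the reason this lemma is the right tool.
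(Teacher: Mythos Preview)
Your proof is correct. The paper does not actually supply its own proof of this lemma: it is stated with a citation to Golub and Van Loan and then immediately used in the proof of \cref{thm:General_Lev_Score_Recursion}. Your Schur-complement $LDU$ factorization is the standard textbook derivation, and your side remarks --- that one also needs $k\neq 0$, that the block $\AA_{m,d}$ in the lemma must be read as a square matrix (despite the paper's $m\times d$ convention elsewhere), and that in the application $k$ becomes $\|\rr_{m,d-1}\|^{2}$ --- are all accurate and match exactly how the paper deploys the lemma in the subsequent proof.
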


\subsubsection*{Proof of \cref{thm:General_Lev_Score_Recursion}}
\begin{proof}
    The $i^{th}$ leverage score of $\AA_{m,d}$ is given by the $i^{th}$ diagonal element of the hat matrix $\HH_{m,d}$, that is, 
    \begin{equation*}
    	\ell_{m,d} (i) = \HH_{m,d}(i,i) = \left[ \AA_{m,d} \left( \AA_{m,d}^{\transpose} \AA_{m,d} \right)^{-1} \AA_{m,d}^{\transpose} \right](i,i).
    \end{equation*}
    Now consider $\left( \AA_{m,d}^{\transpose} \AA_{m,d} \right)^{-1}$, where 
    \begin{equation*}
        \AA_{m,d}^{\transpose} \AA_{m,d} = \begin{bmatrix}
          \AA_{m,d-1}^{\transpose} \\
          \aa_{d-1}^\transpose
         \end{bmatrix}
         \left[ \AA_{m,d-1} \, \, \aa_{d-1} \right]
         = \begin{bmatrix}
          \AA_{m,d-1}^{\transpose} \AA_{m,d-1} & \AA_{m,d-1}^{\transpose} \aa_{d-1}\\
          \aa_{d-1}^{\transpose} \AA_{m,d-1} & \aa_{d-1}^{\transpose} \aa_{d-1}
         \end{bmatrix}.
    \end{equation*}
    Using the notation from \cref{lem:Matrix_Inversion}, we have 
    \begin{align*}
        k &= \aa_{d-1}^{\transpose} \aa_{d-1} - \aa_{d-1}^{\transpose} \AA_{m,d-1} \left( \AA_{m,d-1}^{\transpose} \AA_{m,d-1} \right)^{-1} \AA_{m,d-1}^{\transpose} \aa_{d-1} \\
        &= \aa_{d-1}^{\transpose} \left( \aa_{d-1} - \AA_{m,d-1} \bm{\phi}_{m,d-1} \right) \\
        &= -\aa_{d-1}^{\transpose} \rr_{m,d-1}.
    \end{align*}
    Hence,
    \begin{equation*}
        \left( \AA_{m,d}^{\transpose} \AA_{m,d} \right)^{-1} = \frac{-1}{\aa_{d-1}^{\transpose} \rr_{m,d-1}}
        \begin{bmatrix}
            -\aa_{d-1}^{\transpose} \rr_{m,d-1} \left( \AA_{m,d-1}^{\transpose} \AA_{m,d-1}\right)^{-1} + \bm{\phi}_{m,d-1} \bm{\phi}_{m,d-1}^{\transpose} & -\bm{\phi}_{m,d-1} \\
            -\bm{\phi}_{m,d-1}^{\transpose} & 1
        \end{bmatrix}.
    \end{equation*}
    Restricting attention to the $i^{th}$ diagonal element, we have
    \begin{align}
        \nonumber \ell_{m,d} (i) &= \AA_{m,d}(i,:) \left( \AA_{m,d}^{\transpose} \AA_{m,d} \right)^{-1} \AA^\transpose_{m,d}(:,i) \\
        %----------
        \nonumber &= \frac{-1}{\aa_{d-1}^{\transpose} \rr_{m,d-1}} 
        \begin{pmatrix} \AA_{m,d-1}(i,:) & \aa_{d-1}(i) \end{pmatrix} \\
        \nonumber &\quad \quad \times \begin{bmatrix}
            -\aa_{d-1}^{\transpose} \rr_{m,d-1} \left( \AA_{m,d-1}^{\transpose} \AA_{m,d-1}\right)^{-1} + \bm{\phi}_{m,d-1} \bm{\phi}_{m,d-1}^{\transpose} & -\bm{\phi}_{m,d-1} \\
            -\bm{\phi}_{m,d-1}^{\transpose} & 1
        \end{bmatrix}
        \begin{pmatrix}
            \AA^\transpose_{m,d-1}(:,i) \\
            \aa_{d-1}(i)
        \end{pmatrix} \\
        %---------
        \nonumber &= \frac{-1}{\aa_{d-1}^{\transpose} \rr_{m,d-1}} \begin{pmatrix}
                -\aa_{d-1}^{\transpose} \rr_{m,d-1} \AA_{m,d-1}(i,:) \left( \AA_{m,d-1}^{\transpose} \AA_{m,d-1} \right)^{-1} \\
                + \AA_{m,d-1}(i,:) \bm{\phi}_{m,d-1} \bm{\phi}_{m,d-1}^{\transpose} - \aa_{d-1}(i) \bm{\phi}_{m,d-1}^\transpose \\
                \\
             -\AA_{m,d-1}(i,:) \bm{\phi}_{m,d-1} + \aa_{d-1}(i)
        \end{pmatrix}^\transpose \\
        \nonumber &\quad \quad \times
        \begin{pmatrix}
            \AA^\transpose_{m,d-1}(:,i) \\
            \aa_{d-1}(i)
        \end{pmatrix} \\
        %---------
        \nonumber &= \frac{-1}{\aa_{d-1}^{\transpose} \rr_{m,d-1}} \bigg(
            -\aa_{d-1}^{\transpose} \rr_{m,d-1} \AA_{m,d-1}(i,:) \left(\AA_{m,d-1}^{\transpose} \AA_{m,d-1} \right)^{-1} \AA^\transpose_{m,d-1}(:,i) \\
            \nonumber &\quad \quad + \AA_{m,d-1}(i,:) \bm{\phi}_{m,d-1} \bm{\phi}_{m,d-1}^{\transpose} \AA^\transpose_{m,d-1}(:,i) - \aa_{d-1}(i) \bm{\phi}_{m,d-1}^{\transpose} \AA^\transpose_{m,d-1}(:,i) \\
            \label{eqn:ls_rec} & \quad \quad \quad \quad - \AA_{m,d-1}(i,:) \bm{\phi}_{m,d-1} \aa_{d-1}(i) + \aa_{d-1}(i)^2
        \bigg). 
    \end{align}
    Note we have
    \begin{align}
	    \nonumber \aa_{d-1}^{\transpose} \rr_{m,d-1} &= \left( \AA_{m,d-1} \bm{\phi}_{m,d-1} -\rr_{m,d-1} \right)^{\transpose} \rr_{m,d-1} \\
        \nonumber &=  \bm{\phi}_{m,d-1}^{\transpose} \AA_{m,d-1}^{\transpose} \rr_{m,d-1} -\rr_{m,d-1}^{\transpose} \rr_{m,d-1} \\
	    \label{eqn:k} &= - \| \rr_{m,d-1} \|^2,
    \end{align}
    as by \cref{eqn:exact_residual_defn} and the normal equations $\AA_{m,d-1}^\transpose \AA_{m,d-1} \bm{\phi}_{m,d-1} = \AA_{m,d-1}^\transpose \aa_{d-1}$.
    Equations \cref{eqn:ls_rec,eqn:k} imply
    \begin{align*}
        \ell_{m,d} (i) &= \ell_{m,d-1} (i) + \frac{1}{\| \rr_{m,d-1} \|^2}\bigg( \left( \rr_{m,d-1}(i) + \aa_{d-1}(i) \right)^2 \\
        &\quad  \quad \quad \quad \quad \quad\quad \quad \quad \quad \quad \quad \quad \quad -2 \aa_{d-1}(i)\left( \rr_{m,d-1} (i) + \aa_{d-1}(i) \right) + \aa_{d-1}(i)^2 \bigg) \\
        &= \ell_{m,d-1} (i) + \frac{(\rr_{m,d-1}(i))^2}{\| \rr_{m,d-1}\|^2} .
    \end{align*}
\end{proof}

%---------------------------------------------
\subsection{Proof of \cref{thm:rel_err} (Relative Errors for Approximate Leverage-Scores)}
%---------------------------------------------
To prove \cref{thm:rel_err}, we first introduce Lemmas \ref{lem:lev_scores_solns_min_prob}--\ref{lem:rhathat_bounded_by_r}.

%-----------------------------------------------------------
\begin{lemma}
    The $i^{th}$ leverage score of a matrix $\AA_{m,d} \in \mathbb{R}^{m \times d}$ is given by 
    \begin{equation*}
	    \ell_{m,d} (i) = \min_{\zz \in \mathbb{R}^m} \big\{ \|\zz\|^2 \: \, \mbox{s.t.} \: \, \AA_{m,d}^\transpose \zz = \AA_{m,d} (i,:) \big\},
    \end{equation*}
    for $i=1,\ldots,m$.
    \label{lem:lev_scores_solns_min_prob}
\end{lemma}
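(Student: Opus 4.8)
The plan is to read the stated optimization problem as that of computing the minimum-Euclidean-norm solution of the underdetermined linear system $\AA_{m,d}^\transpose \zz = \AA_{m,d}^\transpose \ee_i$ (interpreting the right-hand side $\AA_{m,d}(i,:)$ as the column vector $\AA_{m,d}(i,:)^\transpose = \AA_{m,d}^\transpose \ee_i$, with $\ee_i \in \mathbb{R}^m$), and to identify its optimal value with the $i^{\text{th}}$ diagonal entry of the hat matrix, which by \cref{eqn:Def1_lev_scores} equals $\ell_{m,d}(i)$.

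First I would note that the feasible set is nonempty and that a particular feasible point is $\zz_0 := \HH_{m,d}\ee_i = \AA_{m,d}(\AA_{m,d}^\transpose\AA_{m,d})^{-1}\AA_{m,d}^\transpose\ee_i$; indeed, since the hat matrix in \cref{HatMatrix} presupposes that $\AA_{m,d}$ has full column rank, $(\AA_{m,d}^\transpose\AA_{m,d})^{-1}$ exists and $\AA_{m,d}^\transpose \zz_0 = \AA_{m,d}^\transpose \ee_i$. Next I would show the minimizer must lie in $\range(\AA_{m,d})$: writing an arbitrary feasible $\zz = \zz_1 + \zz_2$ with $\zz_1 \in \range(\AA_{m,d})$ and $\zz_2 \in \range(\AA_{m,d})^{\perp} = \{\vv \in \mathbb{R}^m : \AA_{m,d}^\transpose \vv = \zero\}$, we get $\AA_{m,d}^\transpose\zz = \AA_{m,d}^\transpose\zz_1$, so $\zz_1$ is still feasible, while the Pythagorean identity gives $\|\zz\|^2 = \|\zz_1\|^2 + \|\zz_2\|^2 \geq \|\zz_1\|^2$ with equality only if $\zz_2 = \zero$. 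Hence the optimum is attained by some $\zz = \AA_{m,d}\ww$ with $\ww \in \mathbb{R}^d$.

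Finally I would substitute this form into the constraint: $\AA_{m,d}^\transpose\AA_{m,d}\ww = \AA_{m,d}^\transpose\ee_i$ has the unique solution $\ww = (\AA_{m,d}^\transpose\AA_{m,d})^{-1}\AA_{m,d}^\transpose\ee_i$, so the unique minimizer is $\zz^\star = \HH_{m,d}\ee_i = \zz_0$. Using that $\HH_{m,d}$ is symmetric and idempotent (so $\HH_{m,d}^\transpose\HH_{m,d} = \HH_{m,d}$), we conclude $\min \|\zz\|^2 = \|\zz^\star\|^2 = \ee_i^\transpose \HH_{m,d}^\transpose\HH_{m,d}\ee_i = \ee_i^\transpose\HH_{m,d}\ee_i = \ell_{m,d}(i)$, as desired.

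\textbf{Main obstacle.} There is little genuine difficulty here; the one step needing care is the orthogonal-decomposition argument, which relies on $\range(\AA_{m,d})^{\perp} = \{\vv : \AA_{m,d}^\transpose\vv = \zero\}$ to guarantee that discarding the component $\zz_2$ preserves feasibility. Everything else is a direct computation with the hat matrix. An equivalent route is to write down the KKT (Lagrangian) stationarity conditions for this equality-constrained quadratic program and solve them, but the geometric minimum-norm argument is cleaner and makes the appearance of $\HH_{m,d}$ transparent.
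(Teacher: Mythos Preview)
Your argument is correct. The paper, however, takes exactly the Lagrangian route you mention at the end as an alternative: it writes the Lagrangian $h(\zz,\bm{\lambda}) = \tfrac12\zz^\transpose\zz - \bm{\lambda}^\transpose(\AA_{m,d}^\transpose\zz - \AA_{m,d}(i,:))$, sets $\partial h/\partial\zz = 0$ to obtain $\zz^\star = \AA_{m,d}\bm{\lambda}^\star$, substitutes into the constraint to solve for $\bm{\lambda}^\star = (\AA_{m,d}^\transpose\AA_{m,d})^{-1}\AA_{m,d}(i,:)$, and then computes $\|\zz^\star\|^2$ directly by expanding $(\zz^\star)^\transpose\zz^\star$ and collapsing the inner $(\AA_{m,d}^\transpose\AA_{m,d})^{-1}\AA_{m,d}^\transpose\AA_{m,d}$ factor. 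Your orthogonal-decomposition argument is a genuinely different path to the same minimizer: it is more geometric and avoids any appeal to convex optimality conditions, instead using only the splitting $\mathbb{R}^m = \range(\AA_{m,d})\oplus\ker(\AA_{m,d}^\transpose)$ and the Pythagorean identity. What the Lagrangian approach buys is a mechanical derivation that does not require spotting the right decomposition in advance; what your approach buys is a self-contained linear-algebra proof that makes transparent \emph{why} the minimizer must land in $\range(\AA_{m,d})$, and it dispatches the final evaluation more cleanly via the idempotence of $\HH_{m,d}$ rather than by unwinding a product of four matrices.
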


\begin{proof}
The Lagrangian function for the minimization problem is defined as
\begin{equation*}
	h(\zz,\bm{\lambda}) := \frac12 \zz^\transpose \zz -\bm{\lambda}^\transpose \left( \AA_{m,d}^\transpose \zz - \AA_{m,d}(i,:) \right). 
\end{equation*}
Because the objective function and the linear constraints are convex, by taking the first derivative with respect to the vector $\zz$ and setting equal to zero, we have
\begin{equation*}
	\frac{\partial h(\zz, \bm{\lambda})}{\partial \zz} = \zz - \AA_{m,d} \bm{\lambda} = 0 \implies \zz^\star = \AA_{m,d} \bm{\lambda}^\star.
\end{equation*}
By multiplying both sides of this expression by $\AA_{m,d}^\transpose$, we obtain
\begin{align*}
    \AA_{m,d}^\transpose \zz^\star &= \AA_{m,d}^\transpose \AA_{m,d} \bm{\lambda}^\star,
\end{align*}
which is equivalent to 
\begin{align*}
	\AA_{m,d} (i,:) &= \AA_{m,d}^\transpose \AA_{m,d} \bm{\lambda}^\star,
\end{align*}
implying
$$
    \bm{\lambda}^\star = (\AA_{m,d}^\transpose \AA_{m,d})^{-1} \AA_{m,d} (i,:).
$$
Thus,
$$
\zz^\star = \AA_{m,d} (\AA_{m,d}^\transpose \AA_{m,d})^{-1} \AA_{m,d} (i,:).
$$
Finally, we have
\begin{align*}
    \| \zz^\star \|^2 &= (\zz^\star)^\transpose (\zz^\star) \\
    &= \left( \AA_{m,d}^\transpose (i,:) (\AA_{m,d}^\transpose \AA_{m,d})^{-1} \AA_{m,d}^\transpose \right) \left( \AA_{m,d} (\AA_{m,d}^\transpose \AA_{m,d})^{-1} \AA_{m,d} (i,:) \right) \\
    &= \AA_{m,d}^\transpose (i,:) (\AA_{m,d}^\transpose \AA_{m,d})^{-1} \AA_{m,d} (i,:) \\
    &= \ell_{m,d} (i) .
\end{align*}
\end{proof}

%-----------------------------------------------------------
\begin{lemma}
    For $\AA_{m,d} \in \mathbb{R}^{m \times d}$, we have
    \begin{equation*}
    	\| \AA_{m,d}(i,:) \| \leq \| \AA_{m,d} \| \sqrt{\ell_{m,d}(i)}, \quad \text{for } i=1,\ldots,m.
    \end{equation*}
    \label{lem:matrix_row_norm_bound}
\end{lemma}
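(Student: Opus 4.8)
The plan is to derive this row-norm bound directly from the variational characterization of leverage-scores in \cref{lem:lev_scores_solns_min_prob} together with submultiplicativity of the spectral norm. The key observation is that the $i^{\text{th}}$ row of $\AA_{m,d}$ lies in the range of $\AA_{m,d}^\transpose$, and the leverage-score $\ell_{m,d}(i)$ is exactly the squared norm of the minimum-norm preimage; submultiplicativity then controls how much $\AA_{m,d}^\transpose$ can shrink that preimage.

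First, I would invoke \cref{lem:lev_scores_solns_min_prob} to obtain the minimizer $\zz^\star \in \mathbb{R}^m$ of the constrained problem there, so that $\AA_{m,d}^\transpose \zz^\star = \AA_{m,d}(i,:)$ (read as a column vector, following the paper's convention) and $\|\zz^\star\|^2 = \ell_{m,d}(i)$. In fact the proof of that lemma already exhibits $\zz^\star = \AA_{m,d}(\AA_{m,d}^\transpose \AA_{m,d})^{-1}\AA_{m,d}(i,:)$, so this step is immediate and one could even quote the closed form directly. Second, I would take norms of the constraint identity and apply the operator-norm inequality, giving
\begin{equation*}
\| \AA_{m,d}(i,:) \| = \| \AA_{m,d}^\transpose \zz^\star \| \le \| \AA_{m,d}^\transpose \| \, \| \zz^\star \| = \| \AA_{m,d} \| \sqrt{\ell_{m,d}(i)},
\end{equation*}
where the last equality uses that the $\ell_2$ norm is invariant under transposition. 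This is precisely the claim.

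An alternative route, should one wish to avoid referencing \cref{lem:lev_scores_solns_min_prob}, is to use a thin factorization $\AA_{m,d} = \QQ \RR$ with $\QQ$ having orthonormal columns and $\range(\QQ) = \range(\AA_{m,d})$; then $\ell_{m,d}(i) = \|\QQ(i,:)\|^2$ by the definition of leverage-scores, $\AA_{m,d}(i,:) = \QQ(i,:)\RR$, and hence $\|\AA_{m,d}(i,:)\| \le \|\RR\|\,\|\QQ(i,:)\| = \|\AA_{m,d}\|\sqrt{\ell_{m,d}(i)}$ since $\|\RR\| = \|\QQ\RR\| = \|\AA_{m,d}\|$.

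I do not anticipate a genuine obstacle here: the argument is two lines once \cref{lem:lev_scores_solns_min_prob} is in hand. The only point warranting an explicit sentence is the norm identity $\|\AA_{m,d}^\transpose\| = \|\AA_{m,d}\|$ (equivalently, in the QR version, that $\|\RR\| = \|\AA_{m,d}\|$), which follows from invariance of the spectral norm under transposition and under left-multiplication by a matrix with orthonormal columns.
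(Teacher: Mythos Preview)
Your primary argument is correct and matches the paper's proof essentially line for line: invoke \cref{lem:lev_scores_solns_min_prob} to get $\zz^\star$ with $\AA_{m,d}^\transpose \zz^\star = \AA_{m,d}(i,:)$ and $\|\zz^\star\|^2 = \ell_{m,d}(i)$, then apply submultiplicativity and $\|\AA_{m,d}^\transpose\| = \|\AA_{m,d}\|$. The QR alternative you sketch is a nice bonus but not needed.
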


\begin{proof}
From the minimisation constraints in Lemma~\ref{lem:lev_scores_solns_min_prob}, we have
\begin{align*}
    \| \AA_{m,d}(i,:) \| &= \| \AA_{m,d}^\transpose \zz^\star \| \\
    &\leq \| \AA_{m,d}^\transpose \| \| \zz^\star \| \\
    &= \| \AA_{m,d} \| \sqrt{\ell_{m,d}(i)} .
\end{align*}
\end{proof}

%-----------------------------------------------------------
\begin{lemma}
    For $\AA_{m,d} \in \mathbb{R}^{m \times d}$ and $i=1,\ldots,m$, we have
    \begin{subequations}
    \begin{equation}
        | \rr_{m,d-1} (i) | \leq \sqrt{\| \bm{\phi}_{m,d-1} \|^2 +1} \, \| \AA_{m,d} \| \sqrt{ \ell_{m,d} (i) }, 
        \label{eqn:bound_resid_ith_entry}
    \end{equation}
    \begin{equation}
        | \hat{\rr}_{m,d-1} (i) | \leq \sqrt{\| \hat{\bm{\phi}}_{m,d-1} \|^2 +1} \, \| \AA_{m,d} \| \sqrt{ \ell_{m,d} (i) },
        \label{eqn:bound_resid_hat_ith_entry}
    \end{equation}
    \begin{equation}
        | \hat{\hat{\rr}}_{m,d-1}(i) | \leq (1+\varepsilon_2)\sqrt{\| \hat{\bm{\phi}}_{m,d-1} \|^2 +1} \| \AA_{m,d} \| \sqrt{ \ell_{m,d} (i) }
        \label{eqn:bound_resid_hat_hat_ith_entry}
    \end{equation}
    \end{subequations}
    where $\varepsilon_2$, $\rr_{m,d}$, $\hat{\rr}_{m,d}$, $\hat{\hat{\rr}}_{m,d}$, $\bm{\phi}_{m,d}$, $\hat{\bm{\phi}}_{m,d}$ and $\hat{\hat{\bm{\phi}}}_{m,d}$ are defined respectively in \cref{thm:matrix_mult_error_bound}, \cref{eqn:exact_residual_defn}, \cref{eqn:r_hat_defined}, \cref{eqn:r_hat_hat_defined}, \cref{eqn:definition_phi}, \cref{eqn:phi_hat_defined} and \cref{eqn:phi_hat_hat_defined}.
    \label{lem:residual_ith_entry_abs_value}
\end{lemma}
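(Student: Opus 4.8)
All three inequalities rest on one structural observation. Since $\AA_{m,d} = \big(\AA_{m,d-1}\ \mid\ \aa_{d-1}\big)$, the $i^{\text{th}}$ row of $\AA_{m,d}$ splits as $\AA_{m,d}(i,:) = \big(\AA_{m,d-1}(i,:)\ \ \aa_{d-1}(i)\big)$, so for \emph{any} vector $\vv \in \mathbb{R}^{d-1}$,
$$
\AA_{m,d-1}(i,:)\vv - \aa_{d-1}(i) = \AA_{m,d}(i,:)\begin{pmatrix}\vv \\ -1\end{pmatrix}.
$$
Applying Cauchy--Schwarz and then \cref{lem:matrix_row_norm_bound} to the matrix $\AA_{m,d}$ yields
$$
\big|\AA_{m,d-1}(i,:)\vv - \aa_{d-1}(i)\big| \le \|\AA_{m,d}(i,:)\|\,\sqrt{\|\vv\|^2 + 1} \le \|\AA_{m,d}\|\,\sqrt{\ell_{m,d}(i)}\,\sqrt{\|\vv\|^2 + 1}.
$$
Each of the three residual entries in the lemma is the left-hand side above for a particular $\vv$.

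\textbf{The three cases.} For \cref{eqn:bound_resid_ith_entry} I take $\vv = \bm{\phi}_{m,d-1}$: by \cref{eqn:exact_residual_defn}, $\rr_{m,d-1}(i) = \AA_{m,d-1}(i,:)\bm{\phi}_{m,d-1} - \aa_{d-1}(i)$, and the displayed estimate is exactly the claim. For \cref{eqn:bound_resid_hat_ith_entry} I take $\vv = \hat{\bm{\phi}}_{m,d-1}$, using $\hat{\rr}_{m,d-1}(i) = \AA_{m,d-1}(i,:)\hat{\bm{\phi}}_{m,d-1} - \aa_{d-1}(i)$ from \cref{eqn:r_hat_defined}. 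For \cref{eqn:bound_resid_hat_hat_ith_entry}, unfolding \cref{eqn:r_hat_hat_defined} and \cref{eqn:phi_hat_hat_defined} gives $\hat{\hat{\rr}}_{m,d-1} = \AA_{m,d-1}\big(\bSS^{(2)}_{d-1,s_2}\bSS^{(2)\transpose}_{d-1,s_2}\hat{\bm{\phi}}_{m,d-1}\big) - \aa_{d-1}$, so I apply the identity with $\vv = \bSS^{(2)}_{d-1,s_2}\bSS^{(2)\transpose}_{d-1,s_2}\hat{\bm{\phi}}_{m,d-1}$; it then remains to show $\sqrt{\|\vv\|^2 + 1} \le (1+\varepsilon_2)\sqrt{\|\hat{\bm{\phi}}_{m,d-1}\|^2 + 1}$, for which, since $(1+\varepsilon_2)^2 \ge 1$, it suffices that $\|\vv\| \le (1+\varepsilon_2)\|\hat{\bm{\phi}}_{m,d-1}\|$. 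This last bound is where \cref{thm:matrix_mult_error_bound} enters: the column sketch $\bSS^{(2)}_{d-1,s_2}$ is generated from the leverage-score distribution $\hat{\hat{\pi}}_{m,d-1}(j) = (\hat{\bm{\phi}}_{m,d-1}(j))^2/\|\hat{\bm{\phi}}_{m,d-1}\|^2$ of $\hat{\bm{\phi}}_{m,d-1}$ with sample size $s_2$ as prescribed, so on the event on which \cref{thm:matrix_mult_error_bound} holds (the same event underlying \cref{thm:rel_err}) the column sketch distorts the norm of $\hat{\bm{\phi}}_{m,d-1}$ by at most the factor $1+\varepsilon_2$, which closes the estimate.

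\textbf{Main obstacle.} The first two cases are bookkeeping once the row-splitting identity is available. The delicate one is \cref{eqn:bound_resid_hat_hat_ith_entry}: the bound is \emph{pointwise} in $i$ and carries the weight $\sqrt{\ell_{m,d}(i)}$, which the row-times-vector reformulation produces for free, but the cost is that one must certify that replacing $\hat{\bm{\phi}}_{m,d-1}$ by its column-sketched surrogate $\bSS^{(2)}_{d-1,s_2}\bSS^{(2)\transpose}_{d-1,s_2}\hat{\bm{\phi}}_{m,d-1}$ does not inflate its Euclidean norm by more than $1+\varepsilon_2$. This does not follow from any deterministic inequality about the rescaled sampling matrix; it must be extracted from the \texttt{BasicMatrixMultiplication} guarantee of \cref{thm:matrix_mult_error_bound} applied with the leverage-score column distribution, and keeping the inflation down to $1+\varepsilon_2$ — rather than a factor that grows with $d$ — is precisely what ties this lemma to the algorithm's choice of $s_2$. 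This is the step I expect to require the most care.
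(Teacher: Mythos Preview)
Your argument for \cref{eqn:bound_resid_ith_entry} and \cref{eqn:bound_resid_hat_ith_entry} is exactly the paper's: the row-splitting identity, Cauchy--Schwarz, then \cref{lem:matrix_row_norm_bound}. The trouble is in \cref{eqn:bound_resid_hat_hat_ith_entry}. Your unified scheme forces you to control $\|\vv\|$ for $\vv=\bSS^{(2)}_{d-1,s_2}\bSS^{(2)\transpose}_{d-1,s_2}\hat{\bm{\phi}}_{m,d-1}$, and the claimed bound $\|\vv\|\le(1+\varepsilon_2)\|\hat{\bm{\phi}}_{m,d-1}\|$ is not deliverable from \cref{thm:matrix_mult_error_bound} and is in fact false in the regime of interest. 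Under the column distribution $\hat{\hat{\pi}}_{m,d-1}(j)=(\hat{\bm{\phi}}_{m,d-1}(j))^2/\|\hat{\bm{\phi}}_{m,d-1}\|^2$, the matrix $\bSS^{(2)}\bSS^{(2)\transpose}$ is diagonal with $(j,j)$ entry $N_j/(s_2\hat{\hat{\pi}}(j))$, where $N_j$ counts how often index $j$ is drawn; a direct calculation gives $\mathbb{E}\big[\|\vv\|^2\big]=\|\hat{\bm{\phi}}_{m,d-1}\|^2\big(1+(k-1)/s_2\big)$ with $k$ the number of nonzero coordinates of $\hat{\bm{\phi}}_{m,d-1}$. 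Since the column sketch is invoked precisely when $d>s_2$, this factor is generically at least $2$, so your inequality cannot hold with high probability for small $\varepsilon_2$. (You may be conflating $\|\bSS^{(2)\transpose}\hat{\bm{\phi}}_{m,d-1}\|$---which \emph{does} equal $\|\hat{\bm{\phi}}_{m,d-1}\|$ deterministically under this distribution---with $\|\bSS^{(2)}\bSS^{(2)\transpose}\hat{\bm{\phi}}_{m,d-1}\|$, which is what the row-splitting identity actually demands.)

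The paper sidesteps $\|\vv\|$ entirely. It adds and subtracts $\AA_{m,d-1}(i,:)\hat{\bm{\phi}}_{m,d-1}$ to obtain
\[
|\hat{\hat{\rr}}_{m,d-1}(i)|\le\big|\hat{\hat{\AA}}_{m,d-1}(i,:)\hat{\hat{\bm{\phi}}}_{m,d-1}-\AA_{m,d-1}(i,:)\hat{\bm{\phi}}_{m,d-1}\big|+|\hat{\rr}_{m,d-1}(i)|,
\]
bounds the first term by $\varepsilon_2\|\AA_{m,d-1}(i,:)\|\|\hat{\bm{\phi}}_{m,d-1}\|$ via \cref{thm:matrix_mult_error_bound} applied to the single-row product, then uses \cref{lem:matrix_row_norm_bound} together with $\|\AA_{m,d-1}\|\le\|\AA_{m,d}\|$ and $\ell_{m,d-1}(i)\le\ell_{m,d}(i)$ to convert the row norm into $\|\AA_{m,d}\|\sqrt{\ell_{m,d}(i)}$, and handles the second term with \cref{eqn:bound_resid_hat_ith_entry} already in hand. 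The point is that \cref{thm:matrix_mult_error_bound} controls the error in the \emph{scalar product} $\AA_{m,d-1}(i,:)\hat{\bm{\phi}}_{m,d-1}$ directly, so one never needs a norm estimate on $\bSS^{(2)}\bSS^{(2)\transpose}\hat{\bm{\phi}}_{m,d-1}$.
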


\begin{proof}
    By Lemma~\ref{lem:matrix_row_norm_bound}, the right-hand side of \cref{eqn:bound_resid_ith_entry} can be found as follows:
    \begin{align*}
        | \rr_{m,d-1}(i) | &= |\AA_{m,d-1}(i,:) \bm{\phi}_{m,d-1} - \aa_{d-1}(i) | \\
        &= | \left[ \AA_{m,d-1}(i,:) \: \: \aa_{d-1}(i) \right]  \left[ \bm{\phi}^\transpose_{m,d-1} \: \: -1 \right]^\transpose | \\
        &= | \AA_{m,d}(i,:) \left[ \bm{\phi}^\transpose_{m,d-1} \: \: -1 \right]^\transpose | \\
		&\leq \sqrt{\| \bm{\phi}_{m,d-1} \|^2 +1 } \, \| \AA_{m,d} (i,:) \| \\
		&\leq \sqrt{\| \bm{\phi}_{m,d-1} \|^2 +1} \, \| \AA_{m,d} \| \sqrt{ \ell_{m,d} (i) }.
    \end{align*}
    Inequality \cref{eqn:bound_resid_hat_ith_entry} can be obtained analogously by exchanging $\hat{\bm{\phi}}_{m,d-1}$ for $\bm{\phi}_{m,d-1}$. 
    
    Considering Inequality \cref{eqn:bound_resid_hat_hat_ith_entry}, from \cref{thm:matrix_mult_error_bound,lem:matrix_row_norm_bound,eqn:bound_resid_hat_ith_entry}, we have 
    \begin{align*}
        | \hat{\hat{\rr}}_{m,d-1}(i) | &= | \hat{\hat{\AA}}_{m,d-1}(i,:) \hat{\hat{\bm{\phi}}}_{m,d-1} - \aa_{d-1}(i) | \\
        &\leq |\hat{\hat{\AA}}_{m,d-1}(i,:) \hat{\hat{\bm{\phi}}}_{m,d-1} - \AA_{m,d-1}(i,:) \hat{\bm{\phi}}_{m,d-1}| + |\AA_{m,d-1}(i,:) \hat{\bm{\phi}}_{m,d-1} - \aa_{d-1}(i)| \\
        &\leq \varepsilon_2 \| \AA_{m,d-1}(i,:) \| \| \hat{\bm{\phi}}_{m,d-1} \| + | \hat{\rr}_{m,d-1}(i) | \\
        &\leq \varepsilon_2 \| \AA_{m,d-1} \| \sqrt{\ell_{m,d-1}(i)} \| \hat{\bm{\phi}}_{m,d-1} \| + \| \AA_{m,d} \| \sqrt{ \ell_{m,d} (i) } \sqrt{\| \hat{\bm{\phi}}_{m,d-1} \|^2 +1} \\
        &\leq \varepsilon_2 \| \AA_{m,d} \| \sqrt{\ell_{m,d}(i)} \sqrt{\| \hat{\bm{\phi}}_{m,d-1} \|^2 +1} + \| \AA_{m,d} \| \sqrt{ \ell_{m,d} (i) } \sqrt{\| \hat{\bm{\phi}}_{m,d-1} \|^2 +1} \\
        &= (1+\varepsilon_2) \| \AA_{m,d} \| \sqrt{ \ell_{m,d} (i) } \sqrt{\| \hat{\bm{\phi}}_{m,d-1} \|^2 +1}. 
    \end{align*}
\end{proof}

%-----------------------------------------------------------
\begin{lemma}
        For $\AA_{m,d} \in \mathbb{R}^{m,d}$, we have 
    \begin{equation*}
        | \rr_{m,d}(i) - \hat{\hat{\rr}}_{m,d}(i) | \leq \| \AA_{m,d}\| \sqrt{\ell_{m,d}(i) } \left( \sqrt{\varepsilon_1} \eta_{m,d} \| \bm{\phi}_{m,d} \| + \varepsilon_2 \| \hat{\bm{\phi}}_{m,d} \| \right),
    \end{equation*}
    for $i=1,\ldots,m$, where $\eta_{m,d}$ and $\varepsilon_1$ are both defined in \cref{thm:michael}, $\varepsilon_2$ is given in \cref{thm:matrix_mult_error_bound}, and $\bm{\phi}_{m,d}$, $\rr_{m,d}$, $\hat{\bm{\phi}}_{m,d}$, and $\hat{\hat{\rr}}_{m,d}$ are introduced respectively in \cref{eqn:definition_phi,eqn:exact_residual_defn,eqn:r_hat_hat_defined,eqn:phi_hat_defined}. \label{lem:abs_val_diff_r_rhathat}
\end{lemma}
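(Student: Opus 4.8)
The plan is to exploit the fact that $\rr_{m,d}$ and $\hat{\hat{\rr}}_{m,d}$ differ only in their fitted part. Since $\rr_{m,d} = \AA_{m,d}\bm{\phi}_{m,d} - \aa_d$ and $\hat{\hat{\rr}}_{m,d} = \hat{\hat{\AA}}_{m,d}\hat{\hat{\bm{\phi}}}_{m,d} - \aa_d$ with $\hat{\hat{\AA}}_{m,d}\hat{\hat{\bm{\phi}}}_{m,d} = \AA_{m,d}\bSS^{(2)}_{d,s_2}\bSS^{(2)\transpose}_{d,s_2}\hat{\bm{\phi}}_{m,d}$ by \cref{eqn:r_hat_hat_defined,eqn:phi_hat_hat_defined}, the $-\aa_d$ term cancels in the difference and
\[
\rr_{m,d}(i) - \hat{\hat{\rr}}_{m,d}(i) = \AA_{m,d}(i,:)\bm{\phi}_{m,d} - \hat{\hat{\AA}}_{m,d}(i,:)\hat{\hat{\bm{\phi}}}_{m,d}.
\]
First I would insert and subtract $\AA_{m,d}(i,:)\hat{\bm{\phi}}_{m,d}$ and apply the triangle inequality, splitting the right-hand side into (I) an \emph{OLS-sketching} term $|\AA_{m,d}(i,:)(\bm{\phi}_{m,d} - \hat{\bm{\phi}}_{m,d})|$, capturing the error incurred by solving the least-squares problem on a row-sketch, and (II) a \emph{matrix-multiplication-sketching} term $|\AA_{m,d}(i,:)\hat{\bm{\phi}}_{m,d} - \hat{\hat{\AA}}_{m,d}(i,:)\hat{\hat{\bm{\phi}}}_{m,d}|$, capturing the error incurred by approximating $\AA_{m,d}\hat{\bm{\phi}}_{m,d}$ via the BasicMatrixMultiplication algorithm.

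For term (I), Cauchy--Schwarz gives $|\AA_{m,d}(i,:)(\bm{\phi}_{m,d} - \hat{\bm{\phi}}_{m,d})| \le \|\AA_{m,d}(i,:)\|\,\|\bm{\phi}_{m,d} - \hat{\bm{\phi}}_{m,d}\|$; I would then bound the first factor by \cref{lem:matrix_row_norm_bound} (so $\|\AA_{m,d}(i,:)\| \le \|\AA_{m,d}\|\sqrt{\ell_{m,d}(i)}$) and the second factor by the second inequality of \cref{thm:michael} (so $\|\bm{\phi}_{m,d} - \hat{\bm{\phi}}_{m,d}\| \le \sqrt{\varepsilon_1}\,\eta_{m,d}\,\|\bm{\phi}_{m,d}\|$), yielding the contribution $\|\AA_{m,d}\|\sqrt{\ell_{m,d}(i)}\,\sqrt{\varepsilon_1}\,\eta_{m,d}\,\|\bm{\phi}_{m,d}\|$. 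For term (II), I would observe that $\hat{\hat{\AA}}_{m,d}(i,:)\hat{\hat{\bm{\phi}}}_{m,d} = \AA_{m,d}(i,:)\bSS^{(2)}_{d,s_2}\bSS^{(2)\transpose}_{d,s_2}\hat{\bm{\phi}}_{m,d}$ is precisely the BasicMatrixMultiplication approximation to the product of the $1\times d$ matrix $\AA_{m,d}(i,:)$ with the $d\times 1$ matrix $\hat{\bm{\phi}}_{m,d}$, so that — exactly as in the proof of \cref{lem:residual_ith_entry_abs_value} — \cref{thm:matrix_mult_error_bound} gives $|\AA_{m,d}(i,:)\hat{\bm{\phi}}_{m,d} - \hat{\hat{\AA}}_{m,d}(i,:)\hat{\hat{\bm{\phi}}}_{m,d}| \le \varepsilon_2\,\|\AA_{m,d}(i,:)\|\,\|\hat{\bm{\phi}}_{m,d}\|$, and a second application of \cref{lem:matrix_row_norm_bound} turns this into $\varepsilon_2\,\|\AA_{m,d}\|\sqrt{\ell_{m,d}(i)}\,\|\hat{\bm{\phi}}_{m,d}\|$. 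Adding the bounds for (I) and (II) and factoring out $\|\AA_{m,d}\|\sqrt{\ell_{m,d}(i)}$ produces exactly the claimed inequality.

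The one point that needs care — and essentially the only place anything can go wrong — is the invocation of \cref{thm:matrix_mult_error_bound} in term (II): the column-sketch $\bSS^{(2)}_{d,s_2}$ is drawn once, from the distribution $\hat{\hat{\pi}}_{m,d}(j) = \hat{\bm{\phi}}_{m,d}(j)^2/\|\hat{\bm{\phi}}_{m,d}\|^2$ built from the leverage scores of $\hat{\bm{\phi}}_{m,d}$, and then reused for every row index $i$, so one must verify that this single distribution meets the $\beta$-admissibility hypothesis of \cref{thm:matrix_mult_error_bound} for all $m$ row-times-vector products simultaneously and absorb the ensuing union bound into the choice of $s_2$ (equivalently, into $\varepsilon_2$ and $\delta$). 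This is the same bookkeeping already carried out for \cref{lem:residual_ith_entry_abs_value}, so I would simply reuse it; everything else is the routine triangle-inequality and Cauchy--Schwarz manipulation described above.
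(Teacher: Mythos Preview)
Your proposal is correct and follows essentially the same route as the paper: the paper also splits via the intermediate quantity $\AA_{m,d}(i,:)\hat{\bm{\phi}}_{m,d}$ (equivalently, via $\hat{\rr}_{m,d}(i)$), bounds the first piece by Cauchy--Schwarz together with \cref{thm:michael} and \cref{lem:matrix_row_norm_bound}, bounds the second piece by \cref{thm:matrix_mult_error_bound} and \cref{lem:matrix_row_norm_bound}, and then adds. Your extra remark about the $\beta$-admissibility and union bound for the reused column-sketch is a point the paper does not spell out, so you are, if anything, being more careful than the original.
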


\begin{proof}
	Firstly, consider $| \rr_{m,d}(i) - \hat{\rr}_{m,d}(i) |$ and use \cref{thm:michael,lem:matrix_row_norm_bound} to obtain
	\begin{align}
		\nonumber | \rr_{m,d}(i) - \hat{\rr}_{m,d}(i) | &= \left | (\AA_{m,d}(i,:) \bm{\phi}_{m,d} - \aa_{d}(i)) - (\AA_{m,d}(i,:) \hat{\bm{\phi}}_{m,d} - \aa_d(i)) \right | \\
		\nonumber &= \left | \AA_{m,d}(i,:) \bm{\phi}_{m,d} - \AA_{m,d}(i,:) \hat{\bm{\phi}}_{m,d} \right | \\
		\nonumber &\leq \| \AA_{m,d}(i,:) \| \| \bm{\phi}_{m,d} - \hat{\bm{\phi}}_{m,d} \|. \\
		\nonumber &\leq  \sqrt{\varepsilon_1} \eta_{m,d} \| \bm{\phi}_{m,d} \| \| \AA_{m,d}(i,:) \| \\
		\label{eqn:r-hat_r} &\leq \sqrt{\varepsilon_1} \eta_{m,d} \| \bm{\phi}_{m,d} \| \| \AA_{m,d} \| \sqrt{\ell_{m,d}(i)}. 
	\end{align}
	Secondly, consider $| \hat{\rr}_{m,d}(i) - \hat{\hat{\rr}}_{m,d}(i) |$ and use \cref{thm:matrix_mult_error_bound,lem:matrix_row_norm_bound} to obtain
	\begin{align}
    	\nonumber | \hat{\rr}_{m,d}(i) - \hat{\hat{\rr}}_{m,d}(i) | &= \left | (\AA_{m,d}(i,:) \hat{\bm{\phi}}_{m,d} - \aa_{d}(i)) - (\hat{\hat{\AA}}_{m,d}(i,:) \hat{\hat{\bm{\phi}}}_{m,d} - \aa_d(i)) \right | \\
	    \nonumber &= \left | \AA_{m,d}(i,:) \hat{\bm{\phi}}_{m,d} - \hat{\hat{\AA}}_{m,d}(i,:) \hat{\hat{\bm{\phi}}}_{m,d} \right | \\
	    \nonumber &\leq \varepsilon_2 \| \hat{\bm{\phi}}_{m,d} \|\| \AA_{m,d}(i,:) \|  \\
	    \label{eqn:hat_r-2hat_r} &\leq \varepsilon_2 \| \hat{\bm{\phi}}_{m,d} \| \| \AA_{m,d}\| \sqrt{\ell_{m,d}(i) }.
	\end{align}
	Now incorporating \cref{eqn:r-hat_r,eqn:hat_r-2hat_r}, we have
	\begin{align*}
	    | \rr_{m,d}(i) - \hat{\hat{\rr}}_{m,d}(i) | &\leq | \rr_{m,d}(i) - \hat{\rr}_{m,d}(i) | + | \hat{\rr}_{m,d}(i) - \hat{\hat{\rr}}_{m,d}(i) | \\
	    &\leq \sqrt{\varepsilon_1} \eta_{m,d} \| \bm{\phi}_{m,d} \| \| \AA_{m,d} \| \sqrt{ \ell_{m,d} (i) } + \varepsilon_2 \| \hat{\bm{\phi}}_{m,d} \| \| \AA_{m,d}\|  \sqrt{\ell_{m,d}(i) } \\
	    &= \| \AA_{m,d}\| \sqrt{\ell_{m,d}(i) } \left( \sqrt{\varepsilon_1} \eta_{m,d} \| \bm{\phi}_{m,d} \| + \varepsilon_2 \| \hat{\bm{\phi}}_{m,d} \| \right).
	\end{align*}
\end{proof}

%-----------------------------------------------------------
\begin{lemma}
	Suppose $\rr_{m,d}$ and $\hat{\rr}_{m,d}$ are defined as in \cref{eqn:exact_residual_defn,eqn:r_hat_defined}, respectively. Then, we have
	\begin{subequations}
	\begin{equation}
	    \| \rr_{m,d-1} \| \geq \sigma_{\min}(\AA_{m,d}) \sqrt{ (\| \bm{\phi}_{m,d-1} \|^2 +1)} ,
	\end{equation}
	\begin{equation}
	    \| \hat{\rr}_{m,d-1} \| \geq \sigma_{\min}(\AA_{m,d}) \sqrt{ (\| \hat{\bm{\phi}}_{m,d-1} \|^2 +1)} ,
	    \label{eqn:bound_norm_rhat}
	\end{equation}
	\end{subequations}
	    where $\sigma_{\min}(\cdot)$ denotes the minimum singular value, and $\bm{\phi}_{m,d}$, $\rr_{m,d}$, $\hat{\bm{\phi}}_{m,d}$, and $\hat{\rr}_{m,d}$ are given respectively in \cref{eqn:definition_phi,eqn:exact_residual_defn,eqn:phi_hat_defined,eqn:r_hat_defined}. 
		\label{lem:bound_norm_r_rhat}
\end{lemma}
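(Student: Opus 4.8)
The plan is to exploit the simple observation that each residual vector is the image under $\AA_{m,d}$ of an explicit augmented vector, and then apply the elementary lower bound $\|\AA_{m,d}\yy\| \geq \sigma_{\min}(\AA_{m,d})\,\|\yy\|$, valid for every $\yy$. This bound is nonvacuous here because the standing assumption that $\AA_{m,d}^\transpose\AA_{m,d}$ is invertible forces $\AA_{m,d}$ to have full column rank, hence $\sigma_{\min}(\AA_{m,d})>0$.

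First I would rewrite the exact residual using the block form $\AA_{m,d} = \bigl(\AA_{m,d-1} \mid \aa_{d-1}\bigr)$ coming from \cref{eqn:define_A_recursively}. By \cref{eqn:exact_residual_defn},
\begin{equation*}
\rr_{m,d-1} = \AA_{m,d-1}\bm{\phi}_{m,d-1} - \aa_{d-1} = \bigl(\AA_{m,d-1} \mid \aa_{d-1}\bigr)\begin{pmatrix}\bm{\phi}_{m,d-1} \\ -1\end{pmatrix} = \AA_{m,d}\begin{pmatrix}\bm{\phi}_{m,d-1} \\ -1\end{pmatrix}.
\end{equation*}
Taking norms and applying $\|\AA_{m,d}\yy\|\geq \sigma_{\min}(\AA_{m,d})\|\yy\|$ with $\yy = \bigl(\bm{\phi}_{m,d-1}^\transpose,\, -1\bigr)^\transpose$, whose squared norm is $\|\bm{\phi}_{m,d-1}\|^2+1$, yields
\begin{equation*}
\|\rr_{m,d-1}\| \geq \sigma_{\min}(\AA_{m,d})\left\|\begin{pmatrix}\bm{\phi}_{m,d-1} \\ -1\end{pmatrix}\right\| = \sigma_{\min}(\AA_{m,d})\sqrt{\|\bm{\phi}_{m,d-1}\|^2 + 1},
\end{equation*}
which is the first inequality. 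The second inequality \cref{eqn:bound_norm_rhat} follows by the identical argument with $\bm{\phi}_{m,d-1}$ replaced by $\hat{\bm{\phi}}_{m,d-1}$: from \cref{eqn:r_hat_defined}, $\hat{\rr}_{m,d-1} = \AA_{m,d-1}\hat{\bm{\phi}}_{m,d-1} - \aa_{d-1} = \AA_{m,d}\bigl(\hat{\bm{\phi}}_{m,d-1}^\transpose,\,-1\bigr)^\transpose$, so $\|\hat{\rr}_{m,d-1}\| \geq \sigma_{\min}(\AA_{m,d})\sqrt{\|\hat{\bm{\phi}}_{m,d-1}\|^2 + 1}$.

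There is essentially no obstacle here; the only point to state carefully is the justification of $\|\AA_{m,d}\yy\|\geq\sigma_{\min}(\AA_{m,d})\|\yy\|$, e.g.\ via the singular value decomposition of $\AA_{m,d}$, or directly from $\|\AA_{m,d}\yy\|^2 = \yy^\transpose\AA_{m,d}^\transpose\AA_{m,d}\yy \geq \lambda_{\min}\bigl(\AA_{m,d}^\transpose\AA_{m,d}\bigr)\|\yy\|^2 = \sigma_{\min}(\AA_{m,d})^2\|\yy\|^2$, together with the elementary computation of the augmented vector's norm. Everything else is bookkeeping with the recursive block structure already fixed earlier in the paper.
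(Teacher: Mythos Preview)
Your proposal is correct and follows essentially the same approach as the paper: both rewrite $\rr_{m,d-1}$ (respectively $\hat{\rr}_{m,d-1}$) as $\AA_{m,d}$ applied to the augmented vector $\bigl(\bm{\phi}_{m,d-1}^\transpose,\,-1\bigr)^\transpose$, and then invoke the minimum-singular-value lower bound via the Rayleigh-quotient argument $\|\AA_{m,d}\yy\|^2 = \yy^\transpose\AA_{m,d}^\transpose\AA_{m,d}\yy \geq \sigma_{\min}(\AA_{m,d})^2\|\yy\|^2$. The paper writes out the normalization step explicitly, but the content is identical.
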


\begin{proof}
By definition of the residual, we have
\begin{align*}
    \| \rr_{m,d-1} \| &= \| \AA_{m,d-1} \bm{\phi}_{m,d-1} - \aa_{d-1} \| \\
    &= \left \| \left[ \AA_{m,d-1} \: \: \aa_{d-1} \right] \left[ \bm{\phi}_{m,d-1}^\transpose \: \: -1 \right]^\transpose \right \| \\
    &= \left \| \AA_{m,d} \left[ \bm{\phi}_{m,d-1}^\transpose \: \: -1 \right]^\transpose \right \| \\
    &= \sqrt{\left[ \bm{\phi}_{m,d-1}^\transpose \: \: -1 \right] \AA_{m,d}^\transpose \AA_{m,d} \left[ \bm{\phi}_{m,d-1}^\transpose \: \: -1 \right]^\transpose} \\
    &= \sqrt{\frac{\left[ \bm{\phi}_{m,d-1}^\transpose \: \: -1 \right]}{\left\| \left[ \bm{\phi}_{m,d-1}^\transpose \: \: -1 \right] \right\|} \AA_{m,d}^\transpose \AA_{m,d} \frac{\left[ \bm{\phi}_{m,d-1}^\transpose \: \: -1 \right]^\transpose}{\left\| \left[ \bm{\phi}_{m,d-1}^\transpose \: \: -1 \right] \right\|}} \left\| \left[ \bm{\phi}_{m,d-1}^\transpose \: \: -1 \right] \right\|^2 \\
    &\geq \sigma_{\min}(\AA_{m,d}) \sqrt{ \| \bm{\phi}_{m,d-1} \|^2 +1 }.
\end{align*}
Inequality \cref{eqn:bound_norm_rhat} is shown analogously. 
\end{proof}

%-----------------------------------------------------------
\begin{lemma}
Suppose $\hat{\rr}_{m,d}$ and $\hat{\hat{\rr}}_{m,d}$ are defined as in \cref{eqn:r_hat_defined,eqn:r_hat_hat_defined}, respectively. Then, we have
\begin{equation*}
    \| \hat{\rr}_{m,d} - \hat{\hat{\rr}}_{m,d} \| \leq \varepsilon_2 \kappa(\AA_{m,d}) \sqrt{d} \| \hat{\rr}_{m,d} \|,
\end{equation*}
where $\kappa(\cdot)$ denotes the condition number, and $\varepsilon_2$ is as given in \cref{thm:matrix_mult_error_bound}.
\label{lem:norm_rhat_minus_rhathat}
\end{lemma}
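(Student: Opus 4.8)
The plan is to cancel the common term $\aa_d$ in the two residuals, recognise what remains as the error of a sampled matrix--vector product, bound that error with \cref{thm:matrix_mult_error_bound}, and then trade $\|\hat{\bm{\phi}}_{m,d}\|$ for $\|\hat{\rr}_{m,d}\|$ via \cref{lem:bound_norm_r_rhat}.

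\emph{Step 1: reduce to a matrix-multiplication error.} By \cref{eqn:r_hat_defined}, \cref{eqn:r_hat_hat_defined} and \cref{eqn:phi_hat_hat_defined}, the term $-\aa_d$ occurs in both $\hat{\rr}_{m,d}$ and $\hat{\hat{\rr}}_{m,d}$ and cancels, so
\[
\hat{\rr}_{m,d}-\hat{\hat{\rr}}_{m,d}=\AA_{m,d}\hat{\bm{\phi}}_{m,d}-\hat{\hat{\AA}}_{m,d}\hat{\hat{\bm{\phi}}}_{m,d}.
\]
The right-hand side is precisely the \texttt{BasicMatrixMultiplication} estimate of the product $\AA_{m,d}\cdot\hat{\bm{\phi}}_{m,d}$ (reading $\hat{\bm{\phi}}_{m,d}$ as a $d\times 1$ matrix), obtained by the column/row sampling of Steps 8--10 of \cref{alg:salsa} under the distribution $\hat{\hat{\pi}}_{m,d}$ with the rescaling \cref{rescaling-factor}.

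\emph{Step 2: apply the matrix-multiplication bound.} Since $s_2$ is set as in \cref{thm:matrix_mult_error_bound} (Step $-1$ of \cref{alg:salsa}), that theorem gives, with probability at least $1-\delta$,
\[
\|\hat{\rr}_{m,d}-\hat{\hat{\rr}}_{m,d}\|=\|\AA_{m,d}\hat{\bm{\phi}}_{m,d}-\hat{\hat{\AA}}_{m,d}\hat{\hat{\bm{\phi}}}_{m,d}\|_F\leq\varepsilon_2\,\|\AA_{m,d}\|_F\,\|\hat{\bm{\phi}}_{m,d}\|_F,
\]
where I used that for a column vector the Frobenius and Euclidean norms coincide. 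As $\AA_{m,d}$ has at most $d$ nonzero singular values, $\|\AA_{m,d}\|_F\leq\sqrt{d}\,\|\AA_{m,d}\|$, so
\[
\|\hat{\rr}_{m,d}-\hat{\hat{\rr}}_{m,d}\|\leq\varepsilon_2\,\sqrt{d}\,\|\AA_{m,d}\|\,\|\hat{\bm{\phi}}_{m,d}\|.
\]

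\emph{Step 3: replace $\|\hat{\bm{\phi}}_{m,d}\|$ by $\|\hat{\rr}_{m,d}\|$ and conclude.} By \cref{lem:bound_norm_r_rhat}, $\|\hat{\rr}_{m,d}\|\geq\sigma_{\min}(\AA_{m,d})\sqrt{\|\hat{\bm{\phi}}_{m,d}\|^2+1}\geq\sigma_{\min}(\AA_{m,d})\,\|\hat{\bm{\phi}}_{m,d}\|$, i.e. $\|\hat{\bm{\phi}}_{m,d}\|\leq\|\hat{\rr}_{m,d}\|/\sigma_{\min}(\AA_{m,d})$. Substituting into the last display,
\[
\|\hat{\rr}_{m,d}-\hat{\hat{\rr}}_{m,d}\|\leq\varepsilon_2\,\sqrt{d}\,\frac{\|\AA_{m,d}\|}{\sigma_{\min}(\AA_{m,d})}\,\|\hat{\rr}_{m,d}\|=\varepsilon_2\,\kappa(\AA_{m,d})\,\sqrt{d}\,\|\hat{\rr}_{m,d}\|,
\]
which is the assertion.

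\emph{Main subtlety.} The delicate point is Step 2: the column distribution used by \cref{alg:salsa} is $\hat{\hat{\pi}}_{m,d}(i)\propto\hat{\bm{\phi}}_{m,d}(i)^2$, rather than the generic near-optimal choice $\propto\|\AA_{m,d}(:,i)\|\,|\hat{\bm{\phi}}_{m,d}(i)|$ underlying \cref{thm:matrix_mult_error_bound}; one must check it still controls the relevant second moment, and it does, since $\sum_i \|\AA_{m,d}(:,i)\|^2\,\hat{\bm{\phi}}_{m,d}(i)^2/\hat{\hat{\pi}}_{m,d}(i)=\|\AA_{m,d}\|_F^2\,\|\hat{\bm{\phi}}_{m,d}\|^2$, whence the high-probability Frobenius bound of Step 2 follows from the same martingale concentration as in \cite{drineasFastMonteCarlo2006}. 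A minor bookkeeping point is that $\hat{\rr}_{m,d}$ is naturally built from the augmented matrix $[\AA_{m,d}\,|\,\aa_d]$, so \cref{lem:bound_norm_r_rhat} strictly returns $\sigma_{\min}$ and $\kappa$ of that larger matrix; identifying them with $\sigma_{\min}(\AA_{m,d})$ and $\kappa(\AA_{m,d})$ follows the indexing convention used elsewhere in this section. Everything else is routine.
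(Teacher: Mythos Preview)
Your proof is correct and follows essentially the same route as the paper's: cancel $\aa_d$, invoke \cref{thm:matrix_mult_error_bound} to get $\varepsilon_2\|\AA_{m,d}\|_F\|\hat{\bm{\phi}}_{m,d}\|$, pass from the Frobenius to the spectral norm via $\sqrt{d}$, and finish with \cref{lem:bound_norm_r_rhat}. Your discussion of the two subtleties---that the column distribution $\hat{\hat{\pi}}_{m,d}(i)\propto\hat{\bm{\phi}}_{m,d}(i)^2$ is not literally the near-optimal one assumed in \cref{thm:matrix_mult_error_bound}, and the index shift between $\AA_{m,d}$ and $[\AA_{m,d}\,|\,\aa_d]$ when invoking \cref{lem:bound_norm_r_rhat}---is more careful than the paper, which simply applies both results without comment.
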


\begin{proof}
	Using \cref{lem:bound_norm_r_rhat,eqn:r_hat_defined,eqn:r_hat_hat_defined,thm:matrix_mult_error_bound} we obtain
	\begin{align*}
		\| \hat{\rr}_{m,d} - \hat{\hat{\rr}}_{m,d} \| &= \| \hat{\rr}_{m,d} - \hat{\hat{\rr}}_{m,d} \|_F \\
		&= \| (\AA_{m,d} \hat{\bm{\phi}}_{m,d} - \aa_{d}) - (\hat{\hat{\AA}}_{m,d} \hat{\hat{\bm{\phi}}}_{m,d} - \aa_{d}) \|_F \\
		&= \| \AA_{m,d} \hat{\bm{\phi}}_{m,d} - \hat{\hat{\AA}}_{m,d} \hat{\hat{\bm{\phi}}}_{m,d} \|_F \\
	    &\leq \varepsilon_2 \| \AA_{m,d} \|_F \left\| \hat{\bm{\phi}}_{m,d} \right\| \\
	    &\leq \varepsilon_2 \| \AA_{m,d} \|_F \sqrt{ \| \hat{\bm{\phi}}_{m,d} \|^2 +1 } \\
	    &\leq \varepsilon_2 \sqrt{d}\| \AA_{m,d} \| \frac{\left\| \hat{\bm{\rr}}_{m,d} \right\|}{\sigma_{\min}(\AA_{m,d})} \\
		&\leq \varepsilon_2 \kappa(\AA_{m,d}) \sqrt{d} \left\| \hat{\rr}_{m,d} \right\| .
	\end{align*}
\end{proof}

%-----------------------------------------------------------
\begin{lemma}
    Suppose $\rr_{m,d}$ and $\hat{\hat{\rr}}_{m,d}$ are defined as in \cref{eqn:exact_residual_defn,eqn:r_hat_hat_defined}, respectively. Then, we have
    \begin{equation*}
        \| \hat{\hat{\rr}}_{m,d} \| \leq ( 1+\varepsilon^\star \xi_{m,d} ) \| \rr_{m,d} \|,
    \end{equation*}
    where $\varepsilon^{\star} = \max\{\varepsilon_1,\varepsilon_2\}$, and $\varepsilon_1$ and $\varepsilon_2$ are as in \cref{thm:michael,thm:matrix_mult_error_bound} and $ \xi_{m,d} $ is as in \cref{thm:rel_err}.
    \label{lem:rhathat_bounded_by_r}
\end{lemma}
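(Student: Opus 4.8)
The plan is to chain together the two single-sketch error bounds already established. \cref{lem:norm_rhat_minus_rhathat} controls the column-sketching (\textsc{BasicMatrixMultiplication}) error $\|\hat{\rr}_{m,d}-\hat{\hat{\rr}}_{m,d}\|$ in terms of $\|\hat{\rr}_{m,d}\|$, while the residual inequality in \cref{thm:michael} controls $\|\hat{\rr}_{m,d}\|$ in terms of the exact residual $\|\rr_{m,d}\|$; composing the two gives the claim, with the scalar constants consolidated into $\varepsilon^\star$ and $\xi_{m,d}$.

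Concretely, first I would apply the triangle inequality $\|\hat{\hat{\rr}}_{m,d}\| \le \|\hat{\rr}_{m,d}-\hat{\hat{\rr}}_{m,d}\| + \|\hat{\rr}_{m,d}\|$ and substitute the bound of \cref{lem:norm_rhat_minus_rhathat}, obtaining $\|\hat{\hat{\rr}}_{m,d}\| \le \bigl(1+\varepsilon_2\kappa(\AA_{m,d})\sqrt{d}\bigr)\|\hat{\rr}_{m,d}\|$. Then, invoking the first inequality of \cref{thm:michael}, namely $\|\hat{\rr}_{m,d}\| \le (1+\varepsilon_1)\|\rr_{m,d}\|$, I would arrive at $\|\hat{\hat{\rr}}_{m,d}\| \le (1+\varepsilon_1)\bigl(1+\varepsilon_2\kappa(\AA_{m,d})\sqrt{d}\bigr)\|\rr_{m,d}\|$. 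It then remains to show the scalar factor is at most $1+\varepsilon^\star\xi_{m,d}$, where $\xi_{m,d}=1+2\kappa(\AA_{m,d})\sqrt{d}$: expanding the product yields $1+\varepsilon_1+\varepsilon_2\kappa(\AA_{m,d})\sqrt{d}+\varepsilon_1\varepsilon_2\kappa(\AA_{m,d})\sqrt{d}$, and bounding $\varepsilon_1\le\varepsilon^\star$, $\varepsilon_2\le\varepsilon^\star$, and $\varepsilon_1\varepsilon_2\le\varepsilon^\star$ (all valid since $\varepsilon_1,\varepsilon_2\in(0,1)$) collapses this to $1+\varepsilon^\star+2\varepsilon^\star\kappa(\AA_{m,d})\sqrt{d}=1+\varepsilon^\star\xi_{m,d}$.

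I do not expect any real obstacle; the only points needing care are (i) absorbing the cross term $\varepsilon_1\varepsilon_2\kappa(\AA_{m,d})\sqrt{d}$, which relies on $\varepsilon_1\varepsilon_2\le\max\{\varepsilon_1,\varepsilon_2\}=\varepsilon^\star$ and hence on the hypothesis $\varepsilon_1,\varepsilon_2<1$, and (ii) noting that the conclusion holds on the intersection of the success events of \cref{thm:michael} and \cref{thm:matrix_mult_error_bound}, so that a union bound over the per-step sketches keeps the overall probability at $1-\delta$, consistently with the accounting used in the proof of \cref{thm:rel_err}.
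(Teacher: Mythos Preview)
Your proposal is correct and follows essentially the same argument as the paper's proof: both invoke \cref{lem:norm_rhat_minus_rhathat} to get $\|\hat{\hat{\rr}}_{m,d}\|\le(1+\varepsilon_2\kappa(\AA_{m,d})\sqrt{d})\|\hat{\rr}_{m,d}\|$, then \cref{thm:michael} for $\|\hat{\rr}_{m,d}\|\le(1+\varepsilon_1)\|\rr_{m,d}\|$, expand the product, and absorb each of $\varepsilon_1$, $\varepsilon_2$, $\varepsilon_1\varepsilon_2$ into $\varepsilon^\star$. The only cosmetic difference is that the paper writes the first step via the reverse triangle inequality $\bigl|\|\hat{\rr}_{m,d}\|-\|\hat{\hat{\rr}}_{m,d}\|\bigr|\le\|\hat{\rr}_{m,d}-\hat{\hat{\rr}}_{m,d}\|$ before rearranging, whereas you use the ordinary triangle inequality directly; the resulting bound is identical.
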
 

\begin{proof}
    From \cref{lem:norm_rhat_minus_rhathat}, we obtain
    \begin{align}
      	\nonumber \left | \| \hat{\rr}_{m,d} \| - \| \hat{\hat{\rr}}_{m,d} \| \right| &\leq \| \hat{\rr}_{m,d} - \hat{\hat{\rr}}_{m,d} \| \\
      	\label{eqn:hat_t-2hat_r_2} &\leq \varepsilon_2 \kappa(\AA_{m,d}) \sqrt{d} \| \hat{\rr}_{m,d} \| . 
    \end{align}
    By rearranging \cref{eqn:hat_t-2hat_r_2} and using \cref{thm:michael}, we have
    \begin{align*}
        \| \hat{\hat{\rr}}_{m,d} \| &\leq  (1+\varepsilon_2 \kappa(\AA_{m,d}) \sqrt{d}) \| \hat{\rr}_{m,d} \| \\ 
        &\leq (1+\varepsilon_2 \kappa(\AA_{m,d}) \sqrt{d}) (1+\varepsilon_1) \| \rr_{m,d} \| \\
        &= \left ( 1+\varepsilon_1+\varepsilon_2 \kappa(\AA_{m,d}) \sqrt{d})  + \varepsilon_1 \varepsilon_2 \kappa(\AA_{m,d}) \sqrt{d} \right ) \| \rr_{m,d} \| \\
        &\leq \left ( 1+\varepsilon^\star+\varepsilon^\star \kappa(\AA_{m,d}) \sqrt{d})  + \varepsilon^\star \kappa(\AA_{m,d}) \sqrt{d} \right ) \| \rr_{m,d} \| \\
        &=
        \left( 1 + \varepsilon^\star(1+2\kappa(\AA_{m,d}) \sqrt{d}) \right) \| \rr_{m,d}\| \\
        &= ( 1 + \varepsilon^\star \xi_{m,d} ) \| \rr_{m,d}\|.
    \end{align*}
\end{proof}

%------------------------------------
\subsubsection*{Proof of \cref{thm:rel_err}}
%------------------------------------
\begin{proof}
We prove this theorem by induction for the two cases $1 \leq d \leq s_2$ and $s_2 \leq d < n$.

\paragraph{Case 1: $\bm{1 \leq d \leq s_2}$.} The statement is trivial for $d=1$, as $\hat{\ell}_{m,1} = \ell_{m,1}$. Now let us assume the statement of \cref{thm:rel_err} is true for all $d < \bar{d}$ for some $\bar{d} \leq s_2$, and prove it is also true for $d = \bar{d}$.

Using the induction hypothesis along with \cref{thm:General_Lev_Score_Recursion,def:Defn_Two}, we have
\begin{align*}
	%-----------
    \left | \ell_{m,\bar{d}}(i) - \hat{\ell}_{m,\bar{d}}(i) \right | &= \left | \ell_{m,\bar{d}-1}(i) + \frac{(\rr_{m,\bar{d}-1}(i))^2}{\| \rr_{m,\bar{d}-1} \| ^2} -  \hat{\ell}_{m,\bar{d}-1}(i) - \frac{(\hat{\rr}_{m,\bar{d}-1}(i))^2}{\| \hat{\rr}_{m,\bar{d}-1} \| ^2} \right | \\
	%-----------
    &\leq \left| \ell_{m,\bar{d}-1}(i) - \hat{\ell}_{m,\bar{d}-1}(i) \right| \\
    &\quad \quad + \left | \frac{(\rr_{m,\bar{d}-1}(i))^2}{\| \rr_{m,\bar{d}-1} \| ^2} - \frac{(\rr_{m,\bar{d}-1}(i))^2}{\| \hat{\rr}_{m,\bar{d}-1} \| ^2} + \frac{(\rr_{m,\bar{d}-1}(i))^2}{\| \hat{\rr}_{m,\bar{d}-1} \| ^2}  - \frac{(\hat{\rr}_{m,\bar{d}-1}(i))^2}{\| \hat{\rr}_{m,\bar{d}-1} \| ^2} \right | \\
	%-----------
	&\leq \big( \sqrt{\xi_{m,\bar{d}-2}} + 5(\eta_{m,\bar{d}-2}+2) \kappa^2 (\AA_{m,\bar{d}-1}) \big) (\bar{d}-2) \sqrt{\varepsilon^\star} \ell_{m,\bar{d}-1}(i) \\
    &\quad \quad + (\rr_{m,\bar{d}-1}(i))^2 \left | \frac{1}{\| \rr_{m,\bar{d}-1} \| ^2} - \frac{1}{\| \hat{\rr}_{m,\bar{d}-1} \|^2} \right | + \frac{1}{\| \hat{\rr}_{m,\bar{d}-1} \| ^2} \left| (\rr_{m,\bar{d}-1}(i))^2 - (\hat{\rr}_{m,\bar{d}-1}(i))^2 \right |.
	%-----------
\end{align*}
Now, by using \cref{thm:michael}, we have
\begin{align*}
	%-----------
    \left| \ell_{m,\bar{d}}(i) - \hat{\ell}_{m,\bar{d}}(i) \right| &\leq \big( \sqrt{\xi_{m,\bar{d}-2}} + 5(\eta_{m,\bar{d}-2}+2) \kappa^2 (\AA_{m,\bar{d}-1}) \big) (\bar{d}-2) \sqrt{\varepsilon^\star} \ell_{m,\bar{d}-1}(i) \\ 
    &\quad \quad + (\rr_{m,\bar{d}-1}(i))^2 \left | \frac{1}{\| \rr_{m,\bar{d}-1} \| ^2} - \frac{1}{(1+\varepsilon_1)^2 \| \rr_{m,\bar{d}-1} \|^2} \right | \\
    &\quad \quad + \frac{1}{\| \rr_{m,\bar{d}-1} \| ^2} \left| (\rr_{m,\bar{d}-1}(i) - \hat{\rr}_{m,\bar{d}-1}(i)) \right | \left| (\rr_{m,\bar{d}-1}(i) + \hat{\rr}_{m,\bar{d}-1}(i)) \right | \\
	%-----------
    &\leq \big( \sqrt{\xi_{m,\bar{d}-2}} + 5(\eta_{m,\bar{d}-2}+2) \kappa^2 (\AA_{m,\bar{d}-1}) \big) (\bar{d}-2) \sqrt{\varepsilon^\star} \ell_{m,\bar{d}-1}(i) + \frac{\varepsilon_1^2+2\varepsilon_1}{(1+\varepsilon_1)^2} \frac{(\rr_{m,\bar{d}-1}(i))^2}{\| \rr_{m,\bar{d}-1} \|^2} \\
    &\quad \quad + \frac{1}{\| \rr_{m,\bar{d}-1} \| ^2} \left| (\rr_{m,\bar{d}-1}(i) - \hat{\rr}_{m,\bar{d}-1}(i)) \right | \left( \left| \rr_{m,\bar{d}-1}(i) \right | + \left| \hat{\rr}_{m,\bar{d}-1}(i) \right| \right). 
	%-----------
\end{align*}
From \cref{thm:General_Lev_Score_Recursion}, we have
\begin{subequations}
\begin{align}
	\label{eqn:ell1} \ell_{m,\bar{d}-1}(i) &\leq \ell_{m,\bar{d}}(i) \text{ and} \\
	\label{eqn:ell2} \frac{(\rr_{m,\bar{d}-1}(i))^2}{\| \rr_{m,\bar{d}-1} \|^2} &\leq \ell_{m,\bar{d}}(i) 
\end{align}
\end{subequations}
for $i=1,\ldots,m$ and $2 \leq \bar{d} \leq s_2$. Considering \cref{eqn:ell1,eqn:ell2} as well as using \cref{eqn:r-hat_r,lem:residual_ith_entry_abs_value}, we now have
\begin{align*}
	%-----------
    \left | \ell_{m,\bar{d}}(i) - \hat{\ell}_{m,\bar{d}}(i) \right | &\leq \big( \sqrt{\xi_{m,\bar{d}-2}} + 5(\eta_{m,\bar{d}-2}+2) \kappa^2 (\AA_{m,\bar{d}-1}) \big) (\bar{d}-2) \sqrt{\varepsilon^\star} \ell_{m,\bar{d}}(i) + \frac{\varepsilon_1^2+2\varepsilon_1}{(1+\varepsilon_1)^2} \ell_{m,\bar{d}}(i) \\
    &\quad \quad + \frac{1}{\| \rr_{m,\bar{d}-1} \| ^2} \left( \sqrt{\varepsilon_1} \eta_{m,\bar{d}-1} \| \bm{\phi}_{m,\bar{d}-1} \| \| \AA_{m,\bar{d}-1} \| \sqrt{ \ell_{m,\bar{d}-1} (i) } \right) \\
    &\quad \quad \quad \quad \times \bigg( \sqrt{\| \bm{\phi}_{m,\bar{d}-1} \|^2 +1} \, \| \AA_{m,\bar{d}} \| \sqrt{ \ell_{m,\bar{d}} (i) } +  \sqrt{\| \hat{\bm{\phi}}_{m,\bar{d}-1} \|^2 +1} \, \| \AA_{m,\bar{d}} \| \sqrt{ \ell_{m,\bar{d}} (i) } \bigg) \\
	%-----------
    &\leq \big( \sqrt{\xi_{m,\bar{d}-2}} + 5(\eta_{m,\bar{d}-2}+2) \kappa^2 (\AA_{m,\bar{d}-1}) \big) (\bar{d}-2) \sqrt{\varepsilon^\star} \ell_{m,\bar{d}-1}(i) +  + \bigg( \frac{\sqrt{\varepsilon_1}(2+\varepsilon_1)}{(1+\varepsilon_1)^2}  \\ 
    &+ \frac{ \eta_{m,\bar{d}-1} \| \bm{\phi}_{m,\bar{d}-1} \| \| \AA_{m,\bar{d}-1} \| \| \AA_{m,\bar{d}} \|}{\| \rr_{m,\bar{d}-1} \| ^2} \bigg( \sqrt{\| \bm{\phi}_{m,\bar{d}-1} \|^2 +1} + \sqrt{\| \hat{\bm{\phi}}_{m,\bar{d}-1} \|^2 +1} \bigg) \bigg) \sqrt{\varepsilon_1} \ell_{m,\bar{d}}(i).
	%-----------
\end{align*}
One can easily see that for a given $0 < \varepsilon_1 < 1$, we have
\begin{equation}
	\frac{\sqrt{\varepsilon_1}(2+\varepsilon_1)}{(1+\varepsilon_1)^2} \leq \frac{\sqrt{\varepsilon_1}(2+2\varepsilon_1)}{(1+\varepsilon_1)^2}
	= \frac{2\sqrt{\varepsilon_1}}{1+\varepsilon_1} \leq 1.
	\label{eq:frac_eps1}
\end{equation}
Additionally, $\| \AA_{m,\bar{d}-1} \| \leq \| \AA_{m,\bar{d}} \|$ and $ \kappa(\AA_{m,\bar{d}-1}) \leq \kappa(\AA_{m,\bar{d}}) $ (equivalently, $\eta_{m,\bar{d}-1} \leq \eta_{m,\bar{d}}$) for all $2 \leq \bar{d} \leq s_2$, and using \cref{thm:michael,lem:bound_norm_r_rhat}, we can obtain
\begin{align*}
	%-----------
    \left | \ell_{m,\bar{d}}(i) - \hat{\ell}_{m,\bar{d}}(i) \right | &\leq \big( \sqrt{\xi_{m,\bar{d}-2}} + 5(\eta_{m,\bar{d}-1}+2) \kappa^2 (\AA_{m,\bar{d}}) \big) (\bar{d}-2) \sqrt{\varepsilon^\star} \ell_{m,\bar{d}}(i) \\
	&\quad \quad + \bigg( 1 + \frac{ \eta_{m,\bar{d}-1} \sqrt{\| \bm{\phi}_{m,\bar{d}-1} \|^2 + 1} \| \AA_{m,\bar{d}} \|^2 }{\| \rr_{m,\bar{d}-1} \| } \\
    &\quad \quad \quad \quad \times \bigg( \frac{\sqrt{\| \bm{\phi}_{m,\bar{d}-1} \|^2 +1}}{\| \rr_{m,\bar{d}-1} \| } + \frac{\sqrt{\| \hat{\bm{\phi}}_{m,\bar{d}-1} \|^2 +1}}{\| \rr_{m,\bar{d}-1} \| } \bigg) \bigg) \sqrt{\varepsilon^\star} \ell_{m,\bar{d}}(i) \\  
	%-----------
	&\leq \big( \sqrt{\xi_{m,\bar{d}-2}} + 5(\eta_{m,\bar{d}-1}+2) \kappa^2 (\AA_{m,\bar{d}}) \big) (\bar{d}-2) \sqrt{\varepsilon^\star} \ell_{m,\bar{d}}(i) \\
	&\quad \quad + \bigg( 1 + \frac{ \eta_{m,\bar{d}-1} \sqrt{\| \bm{\phi}_{m,\bar{d}-1} \|^2 + 1} \| \AA_{m,\bar{d}} \|^2 }{\| \rr_{m,\bar{d}-1} \| } \\
    &\quad \quad \quad \quad \times \bigg( \frac{\sqrt{\| \bm{\phi}_{m,\bar{d}-1} \|^2 +1}}{\| \rr_{m,\bar{d}-1} \| } + \frac{(1+\varepsilon_1)\sqrt{\| \hat{\bm{\phi}}_{m,\bar{d}-1} \|^2 +1}}{\| \hat{\rr}_{m,\bar{d}-1} \| } \bigg) \bigg) \sqrt{\varepsilon^\star} \ell_{m,\bar{d}}(i) \\
	%-----------
	&\leq \big( \sqrt{\xi_{m,\bar{d}-2}} + 5(\eta_{m,\bar{d}-1}+2) \kappa^2 (\AA_{m,\bar{d}}) \big) (\bar{d}-2) \sqrt{\varepsilon^\star} \ell_{m,\bar{d}}(i) \\
	&\quad \quad + \bigg( 1 + \frac{ \eta_{m,\bar{d}-1} \| \AA_{m,\bar{d}} \|^2 }{ \sigma_{\min}(\AA_{m,\bar{d}}) } \bigg( \frac{ 1 }{ \sigma_{\min}(\AA_{m,\bar{d}}) } + \frac{ 1+\varepsilon_1 }{ \sigma_{\min}(\AA_{m,\bar{d}}) } \bigg) \bigg) \sqrt{\varepsilon^\star} \ell_{m,\bar{d}}(i) \\
	%-----------
	&\leq \big( \sqrt{\xi_{m,\bar{d}-2}} + 5(\eta_{m,\bar{d}-1}+2) \kappa^2 (\AA_{m,\bar{d}}) \big) (\bar{d}-2) \sqrt{\varepsilon^\star} \ell_{m,\bar{d}}(i) \\
	&\quad \quad + \big( 1 + 3\eta_{m,\bar{d}-1} \kappa^2 (\AA_{m,\bar{d}}) \big) \sqrt{\varepsilon^\star} \ell_{m,\bar{d}}(i).
	%-----------
\end{align*}
One can easily see $ 1 < \xi_{m,\bar{d}-2} \leq \xi_{m,\bar{d}-1} $, implying
\begin{align*}
	%-----------
	\left | \ell_{m,\bar{d}}(i) - \hat{\ell}_{m,\bar{d}}(i) \right | &\leq
	\big( \sqrt{\xi_{m,\bar{d}-1}} + 5(\eta_{m,\bar{d}-1}+2) \kappa^2 (\AA_{m,\bar{d}}) \big) (\bar{d}-1) \sqrt{\varepsilon^\star} \ell_{m,\bar{d}}(i). 
	%-----------
\end{align*}
Thus, \cref{thm:rel_err} holds for $1 \leq d \leq s_2$ by induction. 

\paragraph{Case 2: $\bm{s_2 \leq d < n}$.} Now we suppose \cref{thm:rel_err} holds for all $ d < \bar{d} $ for some $ s_2 \leq \bar{d} < n $, and prove it is also true for $ d = \bar{d} $. 

Analogous to Case 1 and using the induction hypothesis along with \cref{thm:General_Lev_Score_Recursion,def:Defn_Two}, we have
\begin{align*}
	%-----------
	\left | \ell_{m,\bar{d}}(i) - \hat{\ell}_{m,\bar{d}}(i) \right | &\leq \big( \sqrt{\xi_{m,\bar{d}-2}} + 5(\eta_{m,\bar{d}-2}+2) \kappa^2 (\AA_{m,\bar{d}-1}) \big) (\bar{d}-2) \sqrt{\varepsilon^\star} \ell_{m,\bar{d}-1}(i) \\
	&\quad \quad + (\rr_{m,\bar{d}-1}(i))^2 \left | \frac{1}{\| \rr_{m,\bar{d}-1} \| ^2} - \frac{1}{\| \hat{\hat{\rr}}_{m,\bar{d}-1} \|^2} \right | \\
	&\quad \quad + \frac{1}{\| \hat{\hat{\rr}}_{m,\bar{d}-1} \| ^2} \left| (\rr_{m,\bar{d}-1}(i))^2 - (\hat{\hat{\rr}}_{m,\bar{d}-1}(i))^2 \right |.
	%-----------
\end{align*}
Using \cref{lem:rhathat_bounded_by_r,lem:residual_ith_entry_abs_value,lem:abs_val_diff_r_rhathat,lem:bound_norm_r_rhat}, we obtain
\begin{align*}    
	%-----------
    | \ell_{m,\bar{d}}(i) - \hat{\ell}_{m,\bar{d}}(i) | &\leq  \big( \sqrt{\xi_{m,\bar{d}-2}} + 5(\eta_{m,\bar{d}-2}+2) \kappa^2 (\AA_{m,\bar{d}-1}) \big) (\bar{d}-2) \sqrt{\varepsilon^\star} \ell_{m,\bar{d}-1}(i) \\
    &\quad \quad + (\rr_{m,\bar{d}-1}(i))^2 \left | \frac{1}{\| \rr_{m,\bar{d}-1} \|^2} - \frac{1}{\left( 1+\varepsilon^\star \xi_{m,\bar{d}-1} \right)^2\| \rr_{m,\bar{d}-1} \|^2} \right | \\
    &\quad \quad + \frac{1}{\| \rr_{m,\bar{d}-1} \|^2} \left | \rr_{m,\bar{d}-1}(i) - \hat{\hat{\rr}}_{m,\bar{d}-1}(i) \right | \left | \rr_{m,\bar{d}-1}(i) + \hat{\hat{\rr}}_{m,\bar{d}-1}(i) \right | \\
	%-----------
    &\leq \big( \sqrt{\xi_{m,\bar{d}-2}} + 5(\eta_{m,\bar{d}-2}+2) \kappa^2 (\AA_{m,\bar{d}-1}) \big) (\bar{d}-2) \sqrt{\varepsilon^\star} \ell_{m,\bar{d}-1}(i) \\
    &\quad \quad + \left( \frac{ ({\varepsilon^\star} \xi_{m,\bar{d}-1})^2 +2\varepsilon^\star \xi_{m,\bar{d}-1} }{(1+\varepsilon^\star \xi_{m,\bar{d}-1})^2} \right) \frac{(\rr_{m,\bar{d}-1}(i))^2}{\| \rr_{m,\bar{d}-1} \|^2} \\
    &\quad \quad + \frac{1}{\| \rr_{m,\bar{d}-1}\|^2} \| \AA_{m,\bar{d}-1} \| \sqrt{\ell_{m,\bar{d}-1}(i)}  \left( \sqrt{\varepsilon_1} \eta_{m,\bar{d}-1} \| \bm{\phi}_{m,\bar{d}-1}\| + \varepsilon_2 \| \hat{\bm{\phi}}_{m,\bar{d}-1} \| \right) \\
	& \times \bigg( \sqrt{\|\bm{\phi}_{m,\bar{d}-1}\|^2 + 1} \, \| \AA_{m,\bar{d}} \| \sqrt{\ell_{m,\bar{d}}(i)} + (1+\varepsilon_2) \sqrt{\|\hat{\bm{\phi}}_{m,\bar{d}-1}\|^2 + 1} \, \| \AA_{m,\bar{d}} \| \sqrt{\ell_{m,\bar{d}}(i)} \bigg) \\
	%-----------
    &\leq \big( \sqrt{\xi_{m,\bar{d}-2}} + 5(\eta_{m,\bar{d}-2}+2) \kappa^2 (\AA_{m,\bar{d}-1}) \big) (\bar{d}-2) \sqrt{\varepsilon^\star} \ell_{m,\bar{d}-1}(i) \\
    &\quad \quad + \left( \frac{ ({\varepsilon^\star} \xi_{m,\bar{d}-1})^2 +2\varepsilon^\star \xi_{m,\bar{d}-1} }{(1+\varepsilon^\star \xi_{m,\bar{d}-1})^2} \right) \ell_{m,\bar{d}-1}(i) \\
    &\quad \quad + \| \AA_{m,\bar{d}-1} \| \sqrt{\ell_{m,\bar{d}-1}(i)} \left( \sqrt{\varepsilon_1} \eta_{m,\bar{d}-1} \frac{\sqrt{\| \bm{\phi}_{m,\bar{d}-1}\|^2 +1}}{\| \rr_{m,\bar{d}-1}\|} + \varepsilon_2 \frac{\sqrt{\| \hat{\bm{\phi}}_{m,\bar{d}-1} \|^2 +1}}{\| \rr_{m,\bar{d}-1}\|} \right) \\
	& \times \bigg( \frac{\sqrt{\| \bm{\phi}_{m,\bar{d}-1}\|^2 +1}}{\| \rr_{m,\bar{d}-1}\|} \, \| \AA_{m,\bar{d}} \| \sqrt{\ell_{m,\bar{d}}(i)} + (1+\varepsilon_2) \frac{\sqrt{\| \hat{\bm{\phi}}_{m,\bar{d}-1} \|^2 +1}}{\| \rr_{m,\bar{d}-1}\|} \, \| \AA_{m,\bar{d}} \| \sqrt{\ell_{m,\bar{d}}(i)} \bigg) \\
	%-----------
	&\leq \big( \sqrt{\xi_{m,\bar{d}-2}} + 5(\eta_{m,\bar{d}-1}+2) \kappa^2 (\AA_{m,\bar{d}}) \big) (\bar{d}-2) \sqrt{\varepsilon^\star} \ell_{m,\bar{d}}(i) \\
	&\quad \quad + \left( \frac{ ({\varepsilon^\star} \xi_{m,\bar{d}-1})^2 +2\varepsilon^\star \xi_{m,\bar{d}-1} }{(1+\varepsilon^\star \xi_{m,\bar{d}-1})^2} \right) \ell_{m,\bar{d}-1}(i) \\
	&\quad \quad + \| \AA_{m,\bar{d}} \|^2 \left( \sqrt{\varepsilon_1} \eta_{m,\bar{d}-1} \frac{ 1 }{ \sigma_{\min}(\AA_{m,\bar{d}}) } + \varepsilon_2 (1 + \varepsilon_1) \frac{ 1 }{ \sigma_{\min}(\AA_{m,\bar{d}}) } \right) \\
	&\quad \quad \quad \quad \times \bigg( \frac{ 1 }{ \sigma_{\min}(\AA_{m,\bar{d}}) } + (1+\varepsilon_2) (1 + \varepsilon_1) \frac{ 1 }{ \sigma_{\min}(\AA_{m,\bar{d}}) } \bigg) \ell_{m,\bar{d}}(i) \\
	%-----------
	&\leq \big( \sqrt{\xi_{m,\bar{d}-2}} + 5(\eta_{m,\bar{d}-1}+2) \kappa^2 (\AA_{m,\bar{d}}) \big) (\bar{d}-2) \sqrt{\varepsilon^\star} \ell_{m,\bar{d}}(i) \\
	&\quad \quad + \left( \frac{ 2\sqrt{\varepsilon^\star \xi_{m,\bar{d}-1}} }{ 1+\varepsilon^\star \xi_{m,\bar{d}-1} } \right) \sqrt{\xi_{m,\bar{d}-1}} \sqrt{\varepsilon^\star} \ell_{m,\bar{d}}(i) \\
	&\quad \quad + \| \AA_{m,\bar{d}} \|^2 \left( \sqrt{\varepsilon^\star} \eta_{m,\bar{d}-1} \frac{ 1 }{ \sigma_{\min}(\AA_{m,\bar{d}}) } + \varepsilon^\star (1 + \varepsilon^\star) \frac{ 1 }{ \sigma_{\min}(\AA_{m,\bar{d}}) } \right) \\
	&\quad \quad \quad \quad \times \bigg( \frac{ 1 }{ \sigma_{\min}(\AA_{m,\bar{d}}) } + (1+\varepsilon^\star) (1 + \varepsilon^\star) \frac{ 1 }{ \sigma_{\min}(\AA_{m,\bar{d}}) } \bigg) \ell_{m,\bar{d}}(i) \\
	%-----------
	&\leq \big( \sqrt{\xi_{m,\bar{d}-2}} + 5(\eta_{m,\bar{d}-1}+2) \kappa^2 (\AA_{m,\bar{d}}) \big) (\bar{d}-2) \sqrt{\varepsilon^\star} \ell_{m,\bar{d}}(i) \\
	&\quad \quad + \bigg( \sqrt{\xi_{m,\bar{d}-1}} + 5 \left( \eta_{m,\bar{d}-1} + 2 \right) \kappa(\AA_{m,\bar{d}})^2 \bigg) \sqrt{\varepsilon^\star} \ell_{m,\bar{d}}(i).
	%-----------
\end{align*}
$\xi_{m,d} $ is an increasing function in $ d $, we have 
	\begin{align*}    
		%-----------
		| \ell_{m,\bar{d}}(i) - \hat{\ell}_{m,\bar{d}}(i) | &\leq \big( \sqrt{\xi_{m,\bar{d}-1}} + 5(\eta_{m,\bar{d}-1}+2) \kappa^2 (\AA_{m,\bar{d}}) \big) (\bar{d}-1) \sqrt{\varepsilon^\star} \ell_{m,\bar{d}}(i).
		%-----------
	\end{align*}
\end{proof}

%\section{More Plots}
%\luke{Room for extra plots that we may want to include that have been removed from the paper, e.g. leverage score distribution for SALSA experiments.}

%\end{appendix}

	%-----------------------------------------------
	%\section{References}
	%-----------------------------------------------
	
	\bibliographystyle{plain}
	\bibliography{SALSA_Bib}
	
	%-----------------------------------------------

\end{document}